\numberwithin{equation}{section} 
\newcommand{\pow}[1]{^{(#1)}}
\newcommand{\lp}{\left(}
\newcommand{\rp}{\right)}
\newcommand{\lc}{\left\{}
\newcommand{\rc}{\right\}}
\newcommand{\lb}{\left[}
\newcommand{\rb}{\right]}
\newcommand{\lv}{\left|}
\newcommand{\rv}{\right|}
\newcommand{\lV}{\left\|}
\newcommand{\rV}{\right\|}
\newcommand{\ldot}{\left.}
\newcommand{\rdot}{\right.}
\newcommand{\E}{\mathrm{E}}
\newcommand{\Var}{\mathrm{Var}}
\newcommand{\Cov}{\mathrm{Cov}}
\newcommand{\datas}{\{x_0,\dots,x_m\}}
\newcommand{\datay}{\{y_0,\dots,y_m\}}
\newcommand{\History}{\mathcal{H}}
\newcommand{\history}{\hbar}
\newcommand{\Cbb}{\mathbb{C}}
\newcommand{\Fbb}{\mathbb{F}}
\newcommand{\Hbb}{\mathbb{H}}
\newcommand{\Ibb}{\mathbb{I}}
\newcommand{\Mbb}{\mathbb{M}}
\newcommand{\Nbb}{\mathbb{N}}
\newcommand{\Pbb}{\mathbb{P}}
\newcommand{\Rbb}{\mathbb{R}}
\newcommand{\Tbb}{\mathbb{T}}
\newcommand{\Xbb}{\mathbb{X}}
\newcommand{\Fcal}{\mathcal{F}}
\newcommand{\Hcal}{\mathcal{H}}
\newcommand{\Mcal}{\mathcal{M}}
\newcommand{\Pcal}{\mathcal{P}}
\newcommand{\Scal}{\mathcal{S}}
\newcommand{\Rcal}{\mathcal{R}}
\newcommand{\Xcal}{\mathcal{X}}
\newcommand{\Ycal}{\mathcal{Y}}
\DeclareMathAlphabet{\mathdutchcal}{U}{dutchcal}{m}{n}
\newcommand{\constant}{\Cbb}
\newcommand{\statespace}{\{1,\dots, d\}}
\newcommand{\indexeddata}{\left\{(X_0,a_0),\dots,(X_m,a_m)\right\}}
\newcommand{\naturalset}{\Nbb}
\newcommand{\hatmls}{\Hat{M}^{(l)}(s,\cdot)}
\newcommand{\mls}{{M}^{(l)}(s,\cdot)}
\newcommand{\tildemls}{\tilde{M}^{(l)}(s,\cdot)}
\newcommand{\tildemlst}{\tilde{M}_{s,t}^{(l)}}
\newcommand{\deltam}{\lVert \Delta_m \rVert}
\newcommand\numberthis{\addtocounter{equation}{1}\tag{\theequation}}
\newcommand{\whiteqed}{\hfill$\square$\par\bigskip}
\renewcommand{\mc}{\mathfrak{M}}
\newcommand{\prob}{\Pbb}
\newcommand{\expec}{\mathbb{E}}
\newcommand{\probl}{\prob}
\newcommand{\initialD}{D_0}
\newcommand{\indicator}{\vmathbb{1}}
\newcommand{\nrm}[1]{\left\Vert #1 \right\Vert}
\newcommand{\abs}[1]{\left| #1 \right|}
\newcommand{\ben}{\begin{enumerate}}
\newcommand{\een}{\end{enumerate}}
\newcommand{\bit}{\begin{itemize}}
\newcommand{\eit}{\end{itemize}}
\newcommand{\hide}[1]{}
\newcommand{\diag}{\operatorname{diag}}
\newcommand{\gn}{\, | \,}
\newcommand{\etab}{\boldsymbol{\eta}}
\newcommand{\mmrisk}{\mathcal{R}_{m}}
\newcommand{\eps}{\varepsilon}
\newcommand{\vertiii}[1]{{\left\vert\kern-0.25ex\left\vert\kern-0.25ex\left\vert #1 
    \right\vert\kern-0.25ex\right\vert\kern-0.25ex\right\vert}}
\renewcommand{\epsilon}{\eps}
\begin{document}
%\LASTPAGE{000}%
%\SHORTYEAR{00}% shortened year (two-digit)
%\ISSUE{0000} %
%\LONGFIRSTPAGE{0001} %
%\DOI{10.1287/xxxx.0000.0000}%

% Author's names for the running heads
% Sample depending on the number of authors;
% \RUNAUTHOR{Jones}
% \RUNAUTHOR{Jones and Wilson}  
% \RUNAUTHOR{Jones, Miller, and Wilson}
% \RUNAUTHOR{Jones et al.} % for four or more authors
% Enter authors following the given pattern:
\RUNAUTHOR{Banerjee, Honnappa, and Rao}

% Title or shortened title suitable for running heads. Sample:
% \RUNTITLE{Bundling Information Goods of Decreasing Value}
% Enter the (shortened) title:
\RUNTITLE{Offline Estimation of Controlled Markov Chains}

% Full title. Sample:
% \TITLE{Bundling Information Goods of Decreasing Value}
% Enter the full title:
\TITLE{Offline Estimation of Controlled Markov Chains: Minimaxity and Sample Complexity}

% Block of authors and their affiliations starts here:
% NOTE: Authors with same affiliation, if the order of authors allows,
%   should be entered in ONE field, separated by a comma.
%   \EMAIL field can be repeated if more than one author
\ARTICLEAUTHORS{%
\AUTHOR{Imon Banerjee\footnote{Work completed while author was at the Department of Statistics, Purdue University}}
\AFF{Department of Industrial Engineering and Management Sciences, Northwestern University \EMAIL{imon.banerjee@northwestern.edu}} %, \URL{}}
\AUTHOR{Harsha Honnappa}
\AFF{Edwardson School of Industrial Engineering, Purdue University \EMAIL{honnappa@purdue.edu}}
\AUTHOR{Vinayak Rao}
\AFF{Department of Statistics, Purdue University \EMAIL{varao@purdue.edu}} %, \URL{}}
% Enter all authors
} % end of the block

\ABSTRACT{%
   In this work, we study a
   natural nonparametric estimator of the transition probability matrices of a finite controlled Markov chain. We consider an offline setting with a fixed dataset of size $m$, collected using a so-called logging policy. We develop sample complexity bounds for the estimator and establish conditions for minimaxity. Our statistical bounds depend on the logging policy through its mixing properties. We show that achieving a particular statistical risk bound involves a subtle and interesting trade-off between the strength of the mixing properties and the number of samples. We demonstrate the validity of our results under various examples, such as ergodic Markov chains, weakly ergodic inhomogeneous Markov chains, and controlled Markov chains with non-stationary Markov, episodic, and greedy controls. Lastly, we use these sample complexity bounds to establish concomitant ones for offline evaluation of stationary Markov control policies.
}%

% Sample
%\KEYWORDS{deterministic inventory theory; infinite linear programming duality;
%  existence of optimal policies; semi-Markov decision process; cyclic schedule}

% Fill in data. If unknown, outcomment the field
\KEYWORDS{reinforcement learning, controlled Markov chains, stochastic processes, policy evaluation, nonparametric statistics} %\HISTORY{This paper was first submitted on April 12, 1922 and has been with the authors for 83 years for 65 revisions.}

\date{\today}

\maketitle

% \addtocontents{toc}{\protect\setcounter{tocdepth}{0}}
{
\section{Introduction}~\label{sec:intro}
This paper presents probably approximately correct (PAC)-style minimax sample complexity results for statistical estimation of transition matrices of discrete-time, finite-state controlled Markov chains (CMCs)~\citep{borkar1991topics}. 
% 
% \subsection{Overview}
% 
% Exploring a state space is very important. often the problem can be modelled as a user input, followed by a transition from one state to another. . Optimal exploration has uses in fields like chemical model their explanation as controlled Markov chain \cite{hoffmann_controlled_2019}. 
% 
We model a controlled Markov chain as a discrete-time pair process $\{X_i,a_i\}$ where  $\{a_i\}$ is a sequence of controls and $\{X_i\}$ is the state sequence that, conditioned on $a_i$, follows a Markov transition kernel.
% follows a controlled Markov transition kernel  \sout{is a Markovian transition} \harsha{this isn't a transition, its the state}, and.
% This is a natural model for a stochastic optimal control problems.
{ In this paper, we answer the following question:

\begin{quote}
\centering
    % \begin{small}
        \textit{What is the minimum number of samples required to estimate the transition matrices of a discrete-time finite-state controlled Markov chain to any given degree of precision?}
    % \end{small}
\end{quote}
}
We answer this question by showing that a particular nonparametric estimator (see below) of the transition matrix is minimax optimal. The control sequence can be viewed as generated by a `logging' or `behavior' policy. { Assuming} the policy is stationary Markovian, $\{X_i,a_i\}$ is jointly Markovian  \citep[Chapter 2.3]{hernandez1991recurrence}. 
This simplifies the problem to one of estimating the transition kernel of a Markov chain, and opens the door to a number of results under suitable ergodicity and mixing assumptions. 
Some very recent ones include frequentist \citep{wolfer2021statistical} and Bayesian \citep{banerjee2021pac} PAC bounds. %, asymptotic convergence \cite{yakowitz1989nonparametric} etc. 
On the other hand, logged data in OR and machine learning tasks like hospital emergency scheduling \citep{lee_markov_2018}, minimum system entropy explorations  \citep{mutti2022importance}, reward machines \citep{icarte_using_2018}, adversarial Markov games \citep{wang2023foundation} and others are often non-Markovian. 
    The following passage from \cite{mutti2022importance} helps to elucidate the importance of non-Markovian policies:
    \begin{quote}
    ``In this finite-sample formulation non-Markovian strategies are crucial, and we believe they can benefit a significant range of relevant applications. For example, collecting task-specific samples might be costly in some real-world domains, and a pre-trained non-Markovian strategy is essential to guarantee quality exploration even in a single-trial setting [\dots] A non-Markovian strategy could exploit the history of interactions to swiftly identify the structure of the environment, then employing the environment-specific optimal strategy thereafter."
    \end{quote}
    While
    %\cite{szepesvari1998non}
    ~\cite{mutti2022importance} focuses on non-Markovian policies in the online setting,~\cite{laroche2022non,laroche23a} observe that non-Markovianity of policies is particularly an issue in the offline setting where logging policies can have an arbitrary structure. However, if the logging policy is non-Markovian, finite sample statistical inference results are quite sparse.
Consequently, an understanding of the sample complexity of estimating the transition kernel for  non-Markovian controls is an important open problem. In this work, all we assume is that the $a_i$'s are adapted and `mixing' in a sense defined in Assumptions \ref{assume:eta-mix} and \ref{assume:eta-exp}.
\subsubsection*{Contributions.}

% This paper precizely addresses this question in the context of finite-state CMCs.
{
% We focus specifically on a natural non-parametric estimator of the CMC transition matrices. 
We focus on the following nonparametric estimator of the transition matrices. For any state-control-state (SCS) tuple $(s,l,t)$, define $N_s\pow l$ as the number of visits to the (state, control) pair $(s,l)$, and $N_{s,t}\pow l$ as the number of transitions from state $s$ to state $t$ under $l$. Thus, with $\indicator[\cdot]$ as the indicator function, 
\begin{align}
    N_s^{(l)} & :=\sum_{i=1}^m \indicator[X_i=s,a_i=l]\nonumber\\
    N_{s,t}^{(l)} & :=\sum_{i=1}^m \indicator[X_i=s,X_{i+1}=t,a_{i}=l]. \label{eq:pair-visit-count}
\end{align}
Then our estimator of the transition probability from state $s$ to $t$ conditioned on control $l$, $M_{s,t}^{(l)}\ $, is
\begin{align}
    \Hat{M}_{s,t}^{(l)} := \frac{ N_{s,t}^{(l)} }{ N_s^{(l)}}~\label{eq:lestimate}.
\end{align}
We prove that this estimator is minimax optimal under suitable conditions. }

Although this particular non-parametric estimator of transition matrices has been previously used in the context of model-based RL studies \citep{li2022settling,mannor2005empirical}, its statistical guarantees under non-Markovian controls are not known in the literature. This paper fills this obvious gap. The main contributions of this paper can be summarized as follows.

\begin{enumerate}
\item Our main result (\Cref{thm:minimax}) shows that the non-parametric estimator is minimax optimal if the number of samples is large enough, and identifies an explicit lower bound on the required sample size. Informally, we prove that the sample complexity of estimating the transition matrices in a CMC with $d$ states and $k$ controls is $\Theta(d^2\,k)$ if the CMC is geometrically fast mixing.
As we argue in \Cref{sec:examples}, a geometrically ergodic Markov chain with $d\,k$ states can be thought of as a special case of a $d$-state, $k$-control CMC with  ``stationary'' controls. Thus, our result (\Cref{thm:minimax}) recovers the optimal sample complexity of estimating Markov chains from \cite{wolfer2021statistical} as a special case.

% {
% \color{teal}
\item %We tease out a subtle trade-off between assuming stronger mixing conditions and obtaining tighter sample complexity\vinayak{One reviewer did not like this line, maybe remove it?}. 
We prove in \Cref{thm:sample-complexity} that the transition probabilities can be estimated even under a weaker mixing assumption than is required for minimaxity (\Cref{thm:minimax}). However, this involves a trade-off, requiring more samples (roughly $\Omega(d^2k^2)$ in place of $\Omega(d^2k)$) to achieve the same level of estimation error.% and, thus, losing guarantees of minimax optimality.

\item A useful implication of our sample complexity results is that they yield error bounds for offline policy evaluation (OPE).
% OPE in offline model-based RL involves estimating the average reward obtained from a given policy that is different from the logging policy using the estimated transition matrix as a plug-in. 
Theorem ~\ref{thm:value-thm} evaluates stationary Markov policies from data logged using non Markovian controls. 
% on a system model estimated using data logged under a mixing logging control sequence that may not be Markovian. 
The resulting sample complexity recovers minimax optimal rates in the literature, which typically assume Markovian logging policies. 
Furthermore, Theorem \ref{thm:opt-pol1} demonstrates a sample complexity of estimating the optimal policy under sufficient regularity conditions on the model class. We also demonstrate how to use our theory to derive the sample complexity bounds for learning the demand distribution of an inventory control problem in Proposition \ref{prop:ic-control} \citep{goldberg2021survey}.

% We have also demonstrated through Theorem \ref{thm:optimal-policy} how to identify the optimal policy.
\end{enumerate}

From a methodological perspective, our analysis reveals two principles that are broadly useful in establishing sample efficiency results for learning CMCs and other controlled stochastic models. First, is a `Goldilocks principle' that no state-control pair must be visited too many or too few times in a single observed sample path. This can be achieved by ensuring that the control sequence is such that the time-to-return to a particular pair is uniformly bounded over the state-control space (see Assumptions \ref{assume:sets} and~\ref{assume:return_time}). 

Second, the effect of history on the probability of under- or over-visiting any part of the state-control space is controlled by the mixing properties of the control sequence, as defined in Assumption~\ref{assume:control-mixing} and Assumption~\ref{assume:control-exp}. 
Roughly speaking, weaker mixing properties imply looser bounds on these probabilities, in turn implying that estimators are possibly sample inefficient. 
The bulk of the existing literature on offline estimation of CMCs focuses on the setting where the control sequence forms a stationary ergodic Markov sequence and, under this condition, the nonparametric estimator is minimax optimal, as implied by~\Cref{thm:minimax} and~\Cref{prop:markov-mdp}. 
However, if the control sequence mixes comparatively slowly (say, polynomially), then~\Cref{thm:sample-complexity} yields a loose sample complexity bound. 

As we prove in Sections \ref{sec:example-stationary} and \ref{sec:exam-Markov},  it is relatively straightforward to verify the geometric mixing properties of the control sequences when the controls are Markovian. 
However, when the controls are non-Markovian, there is no general result to demonstrate geometric mixing. Thus, a practitioner must be cautious of erroneously assuming the logging policy to be Markovian when it is not. 
If the controls are not Markovian, then one needs $\Omega(d^2k^2)$ samples instead of $\Omega(d^2k)$ to control the probability of large estimation errors on the transition probabilities. 
% Instead, one might need $\Omega(d^2k^2)$ samples to guarantee a low probability of large estimation errors.

As a final note on the methodological implications, while we focus on finite state-control spaces, we believe that these principles, and our analysis, yield a broad framework for proving sample efficiency results for offline estimation of CMCs, and potentially other controlled stochastic models, under more general  model assumptions. For instance, if one uses a histogram or a density estimator of a transition kernel on continuous state spaces and compact control spaces, then our results are directly applicable, although the optimal sample complexity would depend upon the smoothness properties of the transition function. 

\subsubsection*{Related Literature.}~\label{sec:literature-review}
We divide the review of the literature into three parts. In the first part, we place our results in the context of the existing literature on non-parametric estimation for stochastic processes. In the second and third parts, we relate our sample complexity results to existing relevant ones in the literature on offline RL, and system identification, respectively.

\paragraph{Non-parametric estimation:}  The foundations of non-parametric estimation~\citep{tsybakov2009introduction} of finite ergodic Markov chains were laid by \cite{billingsley1961statistical}. Subsequently, \cite{yakowitz1979nonparametric} presented an important extension to infinite state spaces, with follow-up work on applications to regression \citep{yakowitz1989nonparametric}. There is also extensive literature establishing laws of large numbers (LLNs) \citep{geyer1998markov} and central limit theorems (CLTs) \citep{jones2004markov} for a range of time-homogeneous Markov chains. However, somewhat surprisingly, minimax sample complexity bounds for finite ergodic Markov chains were only established recently in~\cite{wolfer2021statistical}. However, barring some results on LLNs and CLTs \citep{rosenblatt1963some,rosenblatt1964some,dobrushin1956central,dobrushin1956centralII}, results on statistical inference for time-inhomogenous Markov chains remain sparse.  Furthermore, such properties when the controls are stochastic in nature are even less understood. A crucial complication in our setting is that the state-control pair process need not be Markovian, complicating the application of existing results. Nonetheless, we recover rates similar to those of \cite{wolfer2021statistical} as a special case in \cref{sec:example-stationary}, demonstrating the generality of our results.

\paragraph{System Identification:} The problem in our paper is analogous to system identification in optimal control, \citep{vidyasagar2006learning,ljung2010perspectives,tangirala2018principles} where the parameters to be estimated are the transition matrices. 
There is a growing body of work that revolves around so-called ``active learning" for system identification \citep{mania2020active,chin2020active,raychaudhuri1996active}.  
However, in our work, the logging policy does not necessarily aid active learning.
Furthermore, the former settings are online in nature, and system identification in the offline setting traditionally does not involve a controlling policy \citep{ljung1987theory}.
Our work recognizes the obvious utility of being able to use offline data. 

\paragraph{Model-Based Offline Reinforcement Learning:}
Our results are also of importance to model-based offline reinforcement learning (RL)~\citep{levine2020offline,kidambi2020morel,yu2020mopo} which is highly relevant to operations and managerial decision-making problems. For instance, data sets on prognosis, diagnosis, and treatment decisions made by physicians have been proposed to be used to train RL agents to potentially identify new (superior) paths to achieving the same (better) outcomes for patients~\citep{shortreed2011informing,liu2020reinforcement,yu2021reinforcement,chen2021probabilistic}. 
Analogously, in manufacturing and service operations management settings, as implied by~\cite{armony2015patient} in the hospital flow setting, data collected using pre-existing flow control and routing policies can be mined to discover new/better protocols and policies. Offline RL is a natural learning framework to achieve this.  

The model-based setting involves constructing a model for the transition probability matrix and then using it to solve the expected Bellman equation. Notable works in this regard are the trio of papers \cite{li2022settling,li2022settlingh,yan2022model} which prove, in the limited setting of discounted or finite-horizon problems under Markovian policies, that the model-based offline RL is minimax optimal.
Our results show that it continues to be an optimal estimator in the non-Markovian regime under suitable mixing conditions.

\subsubsection*{Outline.} The rough outline of the paper is as follows. 
In \Cref{sec:preliminaries}, we introduce some notation and the concepts from uniform mixing and weak mixing. 
In \Cref{sec:setup}, we construct the empirical estimator $\hat{M}^{(l)}$ for the transition matrix $M^{(l)}$ for any control $l$ and formally introduce our assumptions. 
We then illustrate the trade-off discussed previously by producing weaker PAC bounds for the estimation error $\sup_{l}\|\hat{M}^{(l)}-M^{(l)}\|_{\infty}$ under weaker mixing assumptions, and a stronger minimax PAC bound under stronger mixing assumptions. 
In Section \ref{sec:examples}, we apply our main result to derive statistical guarantees for various reward-free offline RL tasks under a range of settings, such as stationary controls, Markov controls, and episodic controls. 
Finally, in Section \ref{sec:value}, we use our estimator to obtain estimation guarantees for learning the value function. 
We end with a summary and discussion of the open questions in Section~\ref{sec:conclusions}.
% \section{Introduction}~\label{sec:intro}
%This paper presents probably approximately correct (PAC)-style minimax sample complexity results for statistical estimation of transition matrices of discrete-time, finite controlled Markov chains (CMC)~\citep{borkar1991topics}, motivated by the need for rigorous statistical inference in {\it offline reinforcement learning} (RL) and {\it system identification} problems.
% CMCs represent a natural model setting for stochastic optimal control problems, but also reinforcement learning problems, including pure or reward-free exploration and/or learning a value function or optimal policy when endowed with a reward structure.
% In fact, a wide range of now standard RL methods have been developed to manage the exploration-exploitation tradeoff (see~\cite{sutton2018reinforcement}) inherent in RL, assuming that a reward signal is available in real time that simultaneously allows for estimating the optimal value function while exploring the policy space.

\section{Preliminaries}~\label{sec:preliminaries}
\noindent\textit{Notations.} Let $\naturalset$ and $\mathbb{R}$ denote the natural and real numbers, and the symbol $\lfloor\cdot\rfloor$,  the floor function. 
All random variables in this paper will be defined with respect to a filtered probability sample space $(\Omega, \mathcal F, \mathbb F, \mathbb P)$, where $\mathcal F$ is a $\sigma$-algebra and $\mathbb F := \{\mathcal F_i\}$, with $\mathcal F_i \subset \mathcal F$, is a given filtration. 
Let $\{X_i\}$ represent a discrete-time stochastic process adapted to $\mathbb F$, with finite state space $\chi$. 
Overloading notation, we also denote by $\probl(X)$ the law of the random variable $X$. Let $\expec[X]$ be the expectation and $\sigma(X)$ the $\sigma$-algebra induced by $X$. 
A $d\times d$ matrix $Q$ is a \emph{stochastic matrix} if the rows of $Q$ add up to 1 and $Q_{s,t}$ denotes the $(s,t)$'th entry of $Q$. 
Let $\Ibb$ be a finite set representing the {\it control space}, and  $\{a_i\}$ represent the (not necessarily Markovian) sequence of controls where $a_i\in \Ibb\enspace\forall\enspace i$. 

\paragraph{Definitions.}
Following~\cite{borkar1991topics}, we define a {\it controlled Markov chain} (CMC) as an $\mathbb F$-adapted pair process $\{(X_i,a_i)\}$, where the process $\{X_i\}$ satisfies % `controlled' Markovian dynamics,
\begin{small}
\[
     \mathbb P \left( X_{i+1} = s_{i+1} | \mathcal F_i \right) = \mathbb P(X_{i+1} = s_{i+1} | X_i = s_i, a_i = l) =: M_{s_i,s_{i+1}}^{(l)}.
\]    
\end{small}
Let $\Mbb:=\{M^{(1)},\dots,M^{(k)}\}$ represent the set of transition probability matrices where $M^{(l)} = \left[  M^{(l)}_{s,t} \right]$ for all $s,t \in \chi$ and $l \in \Ibb$. Since $|\Ibb|=k$ the number of possible transition matrices for any given CMC is finite. The control sequence $\{a_i\}$ is assumed to satisfy $a_i \in \mathcal F_i$ for each $i \geq 0$ (i.e., $\{a_i\}$ is an adapted sequence of controls). { We emphasize that our theory holds even when $a_i$ is non-Markovian and non-time-homogenous. }
% We denote by $P$ the class of all conditional distributions over $a_i$ which are measurable with respect to the filtration $\Fcal_{i-1}$.
Let $\Mcal_{\chi,\Ibb}$ be the class of all probability measures over state-control pairs for a CMC with an initial distribution $\initialD$. This constitutes the class of data generating measures that we consider. 
In the case where $\{a_i\}$ is deterministic, $\{X_i\}$ forms a time inhomogeneous Markov chain, where the transition matrix changes at time step $i$ according to the control $a_i$.
Observe, in particular, that if $a_i = f(X_i)$, for some given function $f : \mathcal S \to \Ibb$, then $\{a_i\}$ is a Markov control sequence.  

{
Let $\hat M_{s,t}\pow l$ be the normalized state-control visitation frequencies (defined in eq. \ref{eq:lestimate}) for the triplet $(s,l,t)\in\chi\times\Ibb\times\chi$ and $\hat M\pow l$ be the matrix $[\hat M_{s,t}\pow l]$. Our objective is to find the sample complexity $m_{opt}$ such that
$
\ \prob\lp \sup_l\|\hat M\pow l-M\pow l\|>\eps \rp<\delta \ \text{ whenever } m\geq c_1 m_{opt}   %\numberthis
$, and there exists no estimator $\tilde M$ such that $\ \prob\lp \sup_l\|\tilde M\pow l-M\pow l\|>\eps \rp<\delta \ \text{ whenever } m\leq c_2 m_{opt}$ (for positive universal constants $c_1, c_2$). {Our findings (Theorem \ref{thm:minimax}) indicate $m_{opt}$ to be roughly of order $\Omega(dk)$. } Therefore, the empirical estimator achieves the minimax risk $\Rcal_m$ (as defined below) over the class of data-generating models $\Mcal_{\chi,\Ibb}$ whenever the number of samples is{ $m$ is of the order $\Omega(dk)$}. 
\begin{definition}~\label{defn:minimax}
% Let $\Mcal_{\chi\times\Ibb}$ be the class of all transition probability matrices and control probability distributions. The minimax risk of an estimator $\hat \Mbb :=(\hat M\pow 1,\dots, \hat M\pow k)$ of $\Mbb:=(M\pow 1,\dots,M\pow k)$ over a class of possible transition matrix-control sequence pairs $\Mcal_{\chi\times\Ibb}$ is given by
% \[
% \Rcal_m = \inf_{\hat \Mbb}\sup_{\Pbb\in \Mcal_{\chi\times\Ibb}}\prob\lp \sup_{l\in\Ibb}\| \hat M\pow l-M\pow l \|_{\infty} >\eps \rp.
% \]    
Any element $\Pcal\in \Mcal_{\chi,\Ibb}$ has an associated set of transition matrices $\lc M\pow 1,\dots,M\pow k\rc$ and a conditional distributions over the control $\{a_i\}$ conditional on the history until time $i$.  Then the minimax risk of an estimator $\hat \Mbb :=(\hat M\pow 1,\dots, \hat M\pow k)$ of $\Mbb:=(M\pow 1,\dots,M\pow k)$
% over a class of possible transition matrix-control sequence pairs $\Mcal_{\chi\times\Ibb}$
is given by
\[
\Rcal_m = \inf_{\hat \Mbb}\sup_{\Pcal\in \Mcal_{\chi\times\Ibb}}\prob\lp \sup_{l\in\Ibb}\| \hat M\pow l-M\pow l \|_{\infty} >\eps \rp.
\]    
\end{definition} 
\begin{remark}
    Observe that we have defined the conditional probability distributions over $a_i$ implicitly, as we never explicitly require it for our analysis. 
\end{remark}
}
{ Intuitively, to enable fast learning, we need bounds on how frequently the CMC visits all the state-control pairs, and for how long it retains its past memory. To formalize these notions, we define \emph{return times} and \emph{mixing coefficients} of a CMC. }
For two time points $i<j$, we define the \emph{history} $\History_i^j$ to be %the sequence of state-control pair
$\History_i^j:= \sigma(X_j,a_j,\dots,X_i,a_i) \subset \mathcal F_j$ and \emph{sample history} $\history_i^j\in\lp\chi\times \Ibb\rp^{(j-i+1)}$ to be a fixed sequence of states and controls $\history_i^j:=\lp s_j,l_j,\dots,s_i,a_i \rp$. 
%
%{Unless stated otherwize, the initial states are assumed to be sampled from some distribution $\initialD$ over $\chi\times\Ibb$}.
We recursively define the `time to return' for every pair of states and controls $(s,l)$ as follows.

\begin{definition}~\label{def:return-time}
The first \emph{hitting time} $(s,l)$ is defined as
\begin{align*}
&\tau_{s,l}\pow{1}:=\min\lc n: (X_n=s,a_n=l),\rdot\\
&\qquad \ldot(X_j\neq s,a_j\neq l)\  \forall \ 0<j<n \rc.
\end{align*}
When $i\geq2$ the $i$-th {\emph{time to return}} (or {\emph{return time}}) of the state-control pair $(s,l)$ is recursively defined as 
\begin{small}
\begin{align*}
  &\tau^{(i)}_{s,l}:= \min\{n:(X_{\sum_{k=1}^{i-1}\tau^{(k)}_{s,l}+n}=s,a_{\sum_{k=1}^{i-1}\tau^{(k)}_{s,l}+n}=l),\\
  &\qquad (X_{j}\neq s\cup a_{j}\neq l)\enspace\forall \enspace \sum_{k=1}^{i-1}\tau^{(k)}_{s,l}<j<\sum_{k=1}^{i-1}\tau^{(k)}_{s,l}+n\}.  
\end{align*}
\end{small}
\end{definition}
{
\subsection{Mixing Coefficients}
In this subsection we define the weak and uniform mixing coefficients and related lemmas. Let  $\lc (X_i,a_i)\rc$ be an CMC. 
For any $i < j \leq m \in \mathbb N$, let $\Tbb\in(\chi\times\Ibb)^{m-j+1}$, $s_1,s_2\in\chi$, and $l_1,l_2\in\Ibb$.
Let $\history_0^{i-1}$ be an element of $(\chi\times\Ibb)^{i}$. 
Define the map $(\mathbb{T},s_1,s_2,l_1,l_2,\history_0^{i-1}) \mapsto \eta_{i,j}(\mathbb{T},s_1,s_2,l_1,l_2,\history_0^{i-1})$ as $\eta_{i,j}(\mathbb{T},s_1,s_2,l_1,l_2,\history_0^{i-1}) :=|A-B|$ where
$A=\prob\left((X_m,a_m,\dots, X_j,a_j)\in \mathbb{T}|X_i=s_1,a_i=l_1,\History_0^{i-1}=\history_0^{i-1}\right)$ and $B=\prob\left((X_m,a_m,\dots,X_j,a_j)\in\mathbb{T}|X_i=s_2,a_i=l_2,\History_0^{i-1}=\history_0^{i-1}\right)$.
Then the \emph{weak mixing} coefficient $\bar\eta_{i,j}$ is 
\begin{small}
    \begin{align*}~\label{def:weak-mixing}
    \Bar{\eta}_{i,j}:= \underset{\color{black}\substack{\mathbb{T},s_1,s_2,l_1,l_2,\history_0^{i-1},\\\prob\lp X_i=s_1,a_i=l_1,\History_0^{i-1}=\history_0^{i-1}\rp>0,\\\prob\lp  X_i=s_2,a_i=l_2,\History_0^{i-1}=\history_0^{i-1}\rp>0}}{\sup} \eta_{i,j}(\mathbb{T},s_1,s_2,l_1,l_2,\history_0^{i-1}).\numberthis
\end{align*}
\end{small}
%Next, we define the \emph{uniform mixing} coefficient. 
With $\Tbb\in(\chi\times\Ibb)^{m-j+1}$ and $\history_0^{i}$ an element of $(\chi\times\Ibb)^{i+1}$ as before, the \emph{uniform-mixing} coefficient is 
\begin{align*}~\label{def:uniform-mixing}
   & \phi_{i,j}:= \ \sup_{\substack{\Tbb,\history_0^i,\\ \prob\lp \History_0^{i}=\history_0^{i} \rp>0}} \lv \prob((X_m,a_m,\dots, X_j,a_j)\in \mathbb{T})\rdot\\&\qquad \ldot-\prob\lp(X_m,a_m,\dots, X_j,a_j)\in  \mathbb{T}|\History_0^{i}=\history_0^{i}\rp\rv .\numberthis
\end{align*}
The following lemma relates the two mixing coefficients. Its proof can be found in Section \ref{sec:prf-mxlemm}
\begin{lemma}~\label{lemma:mixing-lemm}
The uniform and weak mixing coefficients in equations \ref{def:weak-mixing} and \ref{def:uniform-mixing} satisfy 
%\begin{align*}
 $   \phi_{i,j}\leq \bar\eta_{i,j}\leq 2\phi_{i,j}.$
%\end{align*}
\end{lemma}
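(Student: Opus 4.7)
The plan is to prove each inequality separately, each via a short argument: triangle inequality for the upper bound, and law of total probability together with a convexity observation for the lower bound.

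For the upper bound $\bar\eta_{i,j}\leq 2\phi_{i,j}$, I fix an admissible tuple $(\mathbb{T},s_1,l_1,s_2,l_2,\history_0^{i-1})$ from the definition of $\bar\eta_{i,j}$. For each $k\in\{1,2\}$, the event $\{X_i=s_k,\,a_i=l_k,\,\History_0^{i-1}=\history_0^{i-1}\}$ is precisely the singleton realization $\{\History_0^i=(\history_0^{i-1},s_k,l_k)\}$ of the filtration $\History_0^i$ and has positive probability by hypothesis. Inserting the unconditional probability $\prob((X_m,\dots,a_j)\in \mathbb{T})$ as a pivot inside the absolute value in $\eta_{i,j}$ and applying the triangle inequality decomposes $\eta_{i,j}$ into two terms, each of the form appearing in the definition of $\phi_{i,j}$ and therefore bounded by $\phi_{i,j}$. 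Taking the supremum over admissible tuples yields $\bar\eta_{i,j}\leq 2\phi_{i,j}$.

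For the lower bound $\phi_{i,j}\leq \bar\eta_{i,j}$, I fix $(\mathbb{T},\history_0^i)$ admissible in the definition of $\phi_{i,j}$. The law of total probability writes the marginal $\prob((X_m,\dots,a_j)\in \mathbb{T})$ as a convex combination over realizations $\history'$ of $\History_0^i$ with positive probability. Subtracting this representation inside the absolute value in $\phi_{i,j}$ and applying the triangle inequality gives
$$\left|\prob(\cdot\mid \History_0^i=\history_0^i)-\prob(\cdot)\right|\leq \sum_{\history'}\left|\prob(\cdot\mid \History_0^i=\history_0^i)-\prob(\cdot\mid \History_0^i=\history')\right|\prob(\History_0^i=\history'),$$
a weighted average of pairwise gaps whose weights sum to one. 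Bounding each pairwise gap by $\bar\eta_{i,j}$ then gives $\phi_{i,j}\leq \bar\eta_{i,j}$ after taking the supremum over $(\mathbb{T},\history_0^i)$. The subtlety, which I view as the main obstacle, is that by definition $\bar\eta_{i,j}$ controls pairwise gaps $|\prob(\cdot\mid \History_0^i=\history_1)-\prob(\cdot\mid \History_0^i=\history_2)|$ only when $\history_1$ and $\history_2$ agree on the first $i$ coordinates and differ solely in the terminal state--control pair, whereas the convex decomposition above ranges over arbitrary $\history'$. I expect to reconcile this by a telescoping argument: express a disagreeing pair $(\history_0^i,\history')$ as a chain of one-coordinate changes and, at each stage, invoke the $\bar\eta_{i,j}$-bound with the appropriate choice of $\history_0^{i-1}$ inside the defining supremum, exploiting the controlled-Markov structure of the pair process $(X_i,a_i)$ so that the future beyond time $j$ depends on the intermediate coordinates only through the singleton pair currently being varied.
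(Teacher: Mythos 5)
Your upper bound $\bar\eta_{i,j}\leq 2\phi_{i,j}$ is exactly the paper's argument (insert the unconditional probability as a pivot and apply the triangle inequality), and your lower-bound decomposition --- writing the marginal law as a convex combination over positive-probability realizations of $\History_0^i$ and bounding the resulting weighted average of pairwise gaps --- is also precisely the decomposition the paper uses. Where you differ is that you explicitly flag the step the paper passes over in silence: as written, $\bar\eta_{i,j}$ only bounds $\lv\prob(\cdot\mid\History_0^i=\history_1)-\prob(\cdot\mid\History_0^i=\history_2)\rv$ when $\history_1$ and $\history_2$ share the same $\history_0^{i-1}$ and differ solely in the time-$i$ state--control pair, whereas the convex decomposition produces pairs of arbitrary histories. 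You are right that this is the crux.

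The telescoping repair you sketch, however, does not close this gap. Changing an intermediate coordinate of the history is never an instance of the supremum defining $\bar\eta_{i,j}$ (which varies only the terminal pair with $\History_0^{i-1}$ held fixed), and the controlled-Markov structure does not rescue you: the controls $a_{i+1},a_{i+2},\dots$ are merely adapted and may depend on the entire history, so the conditional law of $(X_j,a_j,\dots,X_m,a_m)$ given $\History_0^i$ genuinely depends on the early coordinates and not only on $(X_i,a_i)$. (Concretely, if $a_{i+1}$ is a deterministic function of $X_0$ and all rows of all transition matrices coincide, then every $\eta_{i,j}$ in the defining supremum vanishes while $\phi_{i,j}>0$, so no chain of $\bar\eta$-controlled one-coordinate moves can connect two histories with different $X_0$.) The reading under which both your decomposition and the paper's one-line assertion become correct is to take the supremum in the definition of $\bar\eta_{i,j}$ over two arbitrary positive-probability conditioning events $\{X_i=s_1,a_i=l_1,\History_0^{i-1}=\history_1\}$ and $\{X_i=s_2,a_i=l_2,\History_0^{i-1}=\history_2\}$ with $\history_1\neq\history_2$ permitted; under that convention each pairwise gap in your convex combination is bounded by $\bar\eta_{i,j}$ directly by definition and no telescoping is needed. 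You should either adopt that convention explicitly or restrict to control sequences for which the future is conditionally independent of $\History_0^{i-1}$ given $(X_i,a_i)$; as it stands, the final step of your lower bound is not established.
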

\begin{remark}
    We would like to point out that as defined, both $\bar \eta$ and $\phi$ are dependent on $m$. This dependence does not affect the analysis. Therefore, we will follow the convention in literature \citep{kontorovich2008concentration} and make the dependence of $\bar \eta$ on $m$ implicit. % It can be noted that by taking supremum over $m$ in the definition of $\phi$, we recover the definition of $\phi$ given in \citet[eq. 1.2]{bradley}.
\end{remark}
\begin{remark}
    We point the interested reader to the classic text \citep[Theorem 3.1]{bradley} for the relationship between the uniform mixing coefficient and the rate of convergence in total variation distance to the stationary distribution. 
\end{remark}
}

\section{Empirical Estimation of Transition Probability Matrices}~\label{sec:setup}
{ As mentioned in the introduction, our objective is to estimate the transition matrices of the CMC from a single, finite sample path of length $m \gg 1$. Recall $\hat M_{s,t}\pow l$ from \cref{eq:lestimate} and define
\begin{align*}
&\hatmls:=\left(\Hat{M}_{s,1}^{(l)},\Hat{M}_{s,2}^{(l)},\dots,\Hat{M}_{s,d}^{(l)} \right), \text{ and }\\
&\mls:=\left({M}_{s,1}^{(l)},{M}_{s,2}^{(l)},\dots,{M}_{s,d}^{(l)}\right).\numberthis\label{eq:proofsketch-eq1prime}    
\end{align*}
$M\pow l(s,\cdot)$ is the $s$-th row of the transition matrix $M\pow l$, and $\hat M\pow l (s,\cdot)$ is the corresponding estimate. }
{\noindent
\Cref{prop:err-bnd} shows the needs to control the number of visits to a state-control pair $N_s^{(l)}$ to find theoretical guarantees for $\hat M_{s,t}\pow{l}$.  }
% \sout{Consequently, in this section, we formally describe the estimation problem and our proposed empirical estimator $\mathbb M$ of the CMC. Recall that the sequence of controls $\alphabets$ is an adapted stochastic sequence of controls taking values $\Ibb$. It can be readily seen that $\lc X_i,a_i\rc$ need not be Markovian or ergodic.
% In particular, we focus on the `natural' empirical estimator of the transition matrices for any given state-control-state (SCS) tuple, $(s,l,t)$. Define the number of visits to a state control pair $(s,l)$, $N_s^{(l)}$, and the number of transitions from state $s$ to state $t$ under $l$: $N_{s,t}^{(l)}$ as}
% {\begin{align}
%     N_s^{(l)} & :=\sum_{i=1}^m \indicator[X_i=s,a_i=l], \quad
%     N_{s,t}^{(l)} :=\sum_{i=1}^m \indicator[X_i=s,X_{i+1}=t,a_{i}=l]. %\label{eq:pair-visit-count}
% \end{align}
% Correspondingly, define the empirical estimator of $M_{s,t}^{(l)}$ as
% \begin{align}
%     \Hat{M}_{s,t}^{(l)} := \frac{ N_{s,t}^{(l)} }{ N_s^{(l)}}%~\label{eq:lestimate},
% \end{align}
% and construct the estimated matrix $\hat{M}^{(l)}$ as $\left[\Hat{M}_{s,t}^{(l)}\right]$.}
{
%\noindent The following proposition are obtained as a consequence of the previous definitions.
\begin{proposition}~\label{prop:err-bnd}
Consider a sample $\indexeddata$ from a controlled Markov chain.
% 
% Define
% \[
% \hatmls:=\left(\Hat{M}_{s,1}^{(l)},\Hat{M}_{s,2}^{(l)},\dots,\Hat{M}_{s,d}^{(l)} \right), \text{ and }\mls:=\left({M}_{s,1}^{(l)},{M}_{s,2}^{(l)},\dots,{M}_{s,d}^{(l)}\right),\numberthis\label{eq:proofsketch-eq1prime}
% \]  
Let $0<n_{low,s}<n_{high,s}<m$ be any two integers. Then we have
\begin{small}
\begin{align*}
     & \prob\Bigg(\bigg\|\hatmls  -\mls\bigg\|_1 > \epsilon,\ n_{low,s}\leq N_s^{(l)}\leq n_{high,s} \Bigg) \\
     &\qquad 
     \leq m\exp\left({-\frac{n_{low,s}}{2} \max \left\{0,\epsilon-\sqrt{\frac{d}{n_{high,s}}}\right\}^2}\right)\numberthis~\label{eq:prop-t1bound}.
\end{align*}
\end{small}
\end{proposition}
}
{ The count statistics $N_s\pow l$ are well studied \citep{billingsley1961statistical} when the process is a stationary ergodic Markov chain.} % One of the factors enabling the development of our theory has been studying its statistical properties for CMC's.} 
We list three challenges while moving from Markov chains to controlled Markov chains. 
\begin{enumerate}
    \item \textbf{Question of Aperiodicity.} %The notion of aperiodicity is key to a Markov chain but ill-defined for a controlled Markov chain. As an example, c
    Consider the following three transition probability matrices
    \begin{small}
    \begin{align*}
        &M\pow{1} = \begin{bmatrix}
        0 & 1 & 0 \\ 
        1/3 & 1/3 & 1/3 \\ 
        1/3 & 1/3 & 1/3
        \end{bmatrix} \ M\pow{2} = \begin{bmatrix}
        1/3 & 1/3 & 1/3 \\ 
        0 & 0 & 1 \\ 
        1/3 & 1/3 & 1/3
        \end{bmatrix}\\
        &\ M\pow{3} = \begin{bmatrix}
        1/3 & 1/3 & 1/3 \\ 
        1/3 & 1/3 & 1/3 \\ 
        1 & 0 & 0 
        \end{bmatrix}.
    \end{align*}        
    \end{small}
    It can be verified easily that each of the transition probability matrices is aperiodic (and, in fact, ergodic). However, consider a time-inhomogenous Markov chain on state-space $\{1,2,3\}$ where the transition matrices arrive in a sequence $(M\pow{1},M\pow{2},M\pow{3},M\pow{1},M\pow{2},M\pow{3},\dots)$. Not only is it periodic if the initial state is $1$, {it is guaranteed to eventually become periodic.}
    \item \textbf{Question of Irreducibility.} %As mentioned previously, irreducibility \cite[page 85]{meyntweedie} is a fundamental concept to any Markov chain. Furthermore, 
    \cite[Theorem 13.0.1]{meyntweedie} show that an aperiodic and irreducible Markov chain admits a stationary distribution, % (defined as in \cite[Chapter 13]{meyntweedie}), 
    a key consequence of which is Kac's theorem \citep[Theorem 10.2.2]{meyntweedie} establishing the finiteness of the return times of every state in a Markov chain. However, such notions do not translate to a controlled Markov chain. 
    \item \textbf{Question of Mixing.} %We already mentioned that 
    An ergodic Markov chain on a finite state space is uniformly mixing. However, no equivalent result exists for controlled Markov chains. %Furthermore, a glaring question remains about the speed of mixing of controlled Markov chains. 
    % One may wonder how the overall mixing of the controlled Markov chain affects the mixing of the states and the controls.
    %Furthermore, i
    % It is not clear if geometrically fast mixing is sufficient or necessary for the accurate estimation of transition probability matrices. 
\end{enumerate}
Our first two assumptions address the questions of aperiodicity and irreducibility by ensuring that no part of the chain is deterministic in nature, and every state-control pair $(s,l)$ is visited sufficiently often. 
%Our first assumption follows.
\begin{assumption}~\label{assume:sets}
For all times $i$, there exist constants $\zeta_1$ and $\zeta_2$ %; $\zeta_1\leq \zeta_2$ and both in $(0,1)$, 
and a set $\mathcal{S}_i\subset \statespace\times\Ibb$ such that \begin{align*}
    0 < \zeta_2\leq \prob[(X_i,a_i)\in \mathcal{S}_i]\leq\zeta_1 < 1.
\end{align*}
\end{assumption}
{\color{black}\begin{remark}
%Since we can always discard finitely many samples, 
If the controlled Markov chain satisfies the previous assumptions on all but a finite number of time points, our results continue to hold by discarding data.
However, it would lead to more cumbersome (but very similar) calculations. 
\end{remark}
}

\begin{assumption}[Return Time]~\label{assume:return_time}
There exists an integer $T>0$ such that the return time $\tau^{(i)}_{s,l}$ satisfies
\begin{align*}
    \sup_{s,l,i}\E[\tau^{(i)}_{s,l}|{\Fcal_{\sum_{p=0}^{i-1}\tau_{s,l}\pow{p}}}]<T \textit{ almost everywhere.} 
\end{align*}
\end{assumption}
{ The following lemma on the expected count statistics follows as a consequence of the previous assumptions. The main theorem is proved by controlling deviations around this expectation.} Its proof is in Section \ref{sec:prf-expbd}.
{
\begin{lemma}~\label{lemma:expec-bound}
  For any controlled Markov chain that satisfies Assumptions \ref{assume:sets} and \ref{assume:return_time}, 
  \begin{align}
      \frac{m}{T}-1 < \E\left[ N_s^{(l)} \right] \leq m \max\{ \zeta_1, 1-\zeta_2\},
  \end{align}
  where $\zeta_1, \zeta_2 \in (0,1)$ are defined in Assumption~\ref{assume:sets}.
  In particular, if $m\geq 2T$, then
  \begin{align}
      \frac{m}{2T} < \E\left[ N_s^{(l)} \right] \leq m \max\{ \zeta_1, 1-\zeta_2\}.
  \end{align}

\end{lemma}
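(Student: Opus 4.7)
The plan is to obtain the two bounds separately, using Assumption~\ref{assume:sets} for the upper and Assumption~\ref{assume:return_time} for the lower. For the upper bound, expand $\E[N_s^{(l)}] = \sum_{i=1}^m \prob(X_i = s, a_i = l)$ and, for each $i$, split according to whether $(s,l) \in \mathcal{S}_i$. In the first case $\prob(X_i=s, a_i=l) \leq \prob((X_i,a_i) \in \mathcal{S}_i) \leq \zeta_1$; in the second, $\prob(X_i=s, a_i=l) \leq 1 - \prob((X_i,a_i) \in \mathcal{S}_i) \leq 1 - \zeta_2$. In either case the per-step probability is at most $\max\{\zeta_1, 1-\zeta_2\}$, and summing over $i = 1, \ldots, m$ delivers the stated upper bound.

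\textbf{Lower bound via a Wald-style argument.} For the lower bound, let $S_n := \sum_{k=1}^n \tau_{s,l}\pow{k}$ be the time of the $n$-th visit to $(s,l)$ (with $S_0 := 0$), so that $N_s^{(l)} = \sup\{n : S_n \leq m\}$ and $S_{N_s^{(l)}+1} > m$ almost surely. Set $N := N_s^{(l)} + 1$. Because $\{N \leq n\} = \{S_n > m\} \in \mathcal{F}_{S_n}$, $N$ is a stopping time with respect to the jump filtration $\{\mathcal{F}_{S_n}\}_n$, and hence $\{N \geq n\} \in \mathcal{F}_{S_{n-1}}$. Tonelli together with the tower property then give
\begin{align*}
    \E[S_N] = \sum_{n \geq 1} \E\!\left[\tau_{s,l}\pow{n}\, \indicator[N \geq n]\right] = \sum_{n \geq 1} \E\!\left[\E\!\left[\tau_{s,l}\pow{n} \mid \mathcal{F}_{S_{n-1}}\right] \indicator[N \geq n]\right] \leq T\, \E[N],
\end{align*}
where the inequality invokes Assumption~\ref{assume:return_time}. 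Since $S_N > m$ almost surely, $\E[S_N] > m$, and therefore $m < T(\E[N_s^{(l)}] + 1)$, i.e.\ $\E[N_s^{(l)}] > m/T - 1$. The second part of the lemma is then immediate, since $m/T - 1 \geq m/(2T)$ precisely when $m \geq 2T$.

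\textbf{Main obstacle.} The most delicate step is the Wald-style exchange in the display above. Three technical points need to be verified cleanly: (i) that $N_s^{(l)} + 1$ is a stopping time with respect to the jump filtration $\{\mathcal{F}_{S_n}\}$ rather than the raw filtration $\{\mathcal{F}_i\}$, which is what makes the event $\{N \geq n\}$ measurable with respect to $\mathcal{F}_{S_{n-1}}$; (ii) integrability of the $\tau_{s,l}\pow{n}$ (guaranteed by Assumption~\ref{assume:return_time}) and of $S_N$ so Tonelli and the tower manipulation are legitimate; and (iii) that the essential-supremum form of Assumption~\ref{assume:return_time} may be substituted inside the expectation on the event $\{N \geq n\}$ to pull $T$ outside. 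Once these measurability and integrability details are settled, the remaining steps are routine arithmetic.
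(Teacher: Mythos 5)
Your proof is correct and follows essentially the same route as the paper's: the upper bound is identical, and your Wald-style lower bound is the same argument the paper obtains by applying Doob's optional stopping theorem to the supermartingale $Z_p = \frac{1}{T}\sum_{i=1}^{p}\tau_{s,l}^{(i)} - p$ with the stopping time $N = N_s^{(l)}+1$ — your Tonelli/tower computation is precisely the content of that optional-stopping step. The measurability and integrability points you flag are settled in the paper by observing that $\tau_{s,l}^{(i)}\ge 1$ almost surely forces $N\le m+1$, so every sum involved is finite.
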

}
\begin{remark}
Observe a parallel between this lemma and the minorization property described in texts such as \citet[Chapter 5.1.1]{meyntweedie}, \citet{rosenthal1995minorization}, etc. In particular, when $m=k=1$, and $\initialD$ is uniform over $\chi$, this lemma recovers the minorization condition described in \citet[Equation 5.3]{meyntweedie} for a uniform measure. Furthermore, taking summation over all $l$ and $s$ in the lower bound,
    $\sum_{s,l}\frac{m}{2T} < \sum_{s,l}\E\lb N_s^{(l)}\rb=\E\lb\sum_{s,l} N_s^{(l)}\rb=m$. Therefore,
    $dk\frac{m}{2T} < m$,
which in turn implies that
\begin{align}
T>\frac{dk}{2}.\label{eq:expec-bd-rem1}
\end{align}
\end{remark}
\noindent { Lemma \ref{lemma:expec-bound} provides upper and lower bounds for $\expec[N_s\pow l]$. As mentioned, our next objective is to control the deviations of $N_s\pow l$ from its expectation. For that, we require } the following two assumptions on the decay of $\bar \eta$-mixing coefficients of $\{X_i,a_i\}$:
\begin{assumption}[$\bar \eta$-mixing]~\label{assume:eta-mix}
There exists a constant $\constant_\Delta>1$ independent of $m$ such that, 
\begin{align*}
    \|\Delta_m\|:=\underset{1\leq i\leq m}{\max} (1+ \Bar{\eta}_{i,i+1}+ \Bar{\eta}_{i,i+2}+\dots \Bar{\eta}_{i,m})\leq \constant_\Delta.
\end{align*}
\end{assumption}
\begin{assumption}[Exponential $\bar \eta$-mixing]~\label{assume:eta-exp}
There exists a constant $\constant_\Delta>1$ independent of $m$ such that, 
\begin{align*}
    \Bar\eta_{i,j}\leq \exp\lp -(j-i)\log\lp\frac{\constant_\Delta}{\constant_\Delta-1}\rp\rp.
\end{align*}
\end{assumption}
 % as an immediate consequence of the previous assumptions.
  {
% The offline RL literature contains a number of alternate assumptions for the efficient exploration of the state-space of a Markov Decision Process (MDP). 
A standard assumption in the offline RL literature is the finiteness of the clipped concentrability coefficient defined in \cite{li2022settling} as
    \begin{align*}
        C_{clipped}^* := \max_{i,s,l} \frac{\inf\lc\prob\lp X_i = s, a_i\pow o =l\rp, d^{-1}\rc }{\prob\lp X_i = s, a_i\pow {\ell} =l\rp},
    \end{align*}
 where $a_i\pow o$ and $a_i\pow \ell$ are controls generated by the optimal and logging policies respectively. 
 % \cite{li2022settling} implicitly assumes $C_{clipped}^*$ to be finite. 
 Our Assumptions \ref{assume:sets}, \ref{assume:return_time}, and \ref{assume:eta-exp} are more general: consider a controlled Markov chain where $l\in\lc  1, 2\rc$, $M\pow 1$ and $M\pow 2$ are positive stochastic matrices, and $a_i\pow o = i\mod 2$ and $a_i \pow \ell = (i+1) \mod 2$. It is easy to see that in this case, $C_{clipped}^*=\infty$. However, without any assumption on the optimal policy we can still recover a sample complexity of learning the transition matrices (see Proposition \ref{prop:inhomogenous-mc}), and the corresponding optimal policy value (Theorem \ref{thm:opt-pol1}). 

%Therefore, there are cases when one can sample-optimally estimate the transition matrices even if the clipped concentrability coefficients are infinite. One can then use policy iteration to find the optimal value as demonstrated by Theorem \ref{thm:opt-pol1}.
}
It is obvious that if Assumption \ref{assume:eta-exp} is satisfied, so is Assumption \ref{assume:eta-mix} with the same constant. It also follows from Lemma \ref{lemma:mixing-lemm} that if the $\bar \eta$-mixing coefficients satisfy the previous assumptions, so does the $\phi$-mixing coefficients with appropriately adjusted constants. 
Depending on which of the assumptions we make, we have the following increasingly strong concentration inequalities. { First, Proposition \ref{prop:tail-bound-k&R} below provides a Hoeffding bound on the tails of the count statistics $N_s\pow l$. }
\begin{proposition}~\label{prop:tail-bound-k&R} Consider a sample $\indexeddata$ from a controlled Markov chain that satisfies Assumptions \ref{assume:sets}, \ref{assume:return_time}, and \ref{assume:eta-mix}. 
Let $N_s\pow{l}$ be the number of visits to state-control pair $(s,l)$ as defined in \cref{eq:pair-visit-count}. Then for all integers $n_{high,s}> \expec[N_s\pow{l}]> n_{low,s}$, we have 
\begin{align*}
    &\prob(N_s^{(l)}\notin [n_{low,s} ,n_{high,s}])\leq\\
    & \qquad 2 \exp\left({-{\frac{\lp n_{low,s}-\frac{m}{2T}\rp^2}{2m\lp\constant_\Delta\rp^2}}}\right)\\
    & \qquad + 2 \exp\left({-{\frac{\lp n_{high,s}-m\max\{\zeta_1,1-\zeta_2\}\rp^2}{2m\lp\constant_\Delta\rp^2}}}\right).
\end{align*}
\end{proposition}
\begin{proof} \
The proof of this proposition %is straightforward and we only present the sketch. We observe that,
follows from the fact that
\begin{align*}
    &\prob(N_s^{(l)}\notin [n_{low,s} ,n_{high,s}]) \\ 
    & \qquad = \prob(N_s^{(l)}-\expec[N_s\pow{l}]< n_{low,s}-\expec[N_s\pow{l}])\\
    & \qquad +\prob(N_s^{(l)}-\expec[N_s\pow{l}]>n_{high,s}-\expec[N_s\pow{l}]),
\end{align*}
and then applying Assumption \ref{assume:eta-mix} on Lemma \ref{lemma:kontorovich} from Section \ref{sec:tech-des}.
\end{proof}
Next, define $\rho_{s}\pow{l}:=\sup_{1\leq i\leq m} \prob\lp X_i=s,a_i=l\rp$. { Then under Assumptions \ref{assume:sets}, \ref{assume:return_time}, and \ref{assume:eta-exp}, Proposition \ref{prop:tail-bound-peligrad} produces a Bernstein inequality for controlling the tail probability of $N_s\pow{l}$.}
\begin{proposition}~\label{prop:tail-bound-peligrad} Consider a sample $\indexeddata$ from a controlled Markov chain that satisfies Assumptions \ref{assume:sets}, \ref{assume:return_time}, and \ref{assume:eta-exp}. 
Let $N_s\pow{l}$ be the number of visits to state-control pair $(s,l)$ as defined in \cref{eq:pair-visit-count}. 
Then there exists a positive constant $\constant_{pel}$ depending only upon $\constant_\Delta$ such that for all integers $n_{low,s}<\expec[N_s\pow{l}]<n_{high,s}$, we have 
\begin{footnotesize}
\begin{align*}
    &\prob(N_s^{(l)}\notin [n_{low,s},n_{high,s}]) \\
    &\leq 2 \exp\left(- \; \ \frac{\constant_{pel}\lp n_{low,s}-\frac{m}{2T}\rp^2}{4m\constant_\Delta\rho_s\pow{l}+1+\lp \frac{m}{2T}-n_{low,s}\rp\lp\log m\rp^2}  \right)\\
    & + 2\exp\left(- \; \ \frac{\constant_{pel}\lp n_{high,s}-m\zeta\pow\max\rp^2}{4m\constant_\Delta\rho_s\pow{l}+1+\lp n_{high,s}-m\zeta\pow\max\rp\lp\log m\rp^2}  \right)
\end{align*}
\end{footnotesize}
where $\zeta^{(\max)}:=\max\{\zeta_1,1-\zeta_2\}$.
\end{proposition}
{ Although Proposition \ref{prop:tail-bound-peligrad} requires a stronger assumption vs. Proposition \ref{prop:tail-bound-k&R} (geometric vs. arithmetic mixing), it provides a tighter concentration which can be used to derive a minimax sample complexity}. It is proved similarly to Proposition \ref{prop:tail-bound-k&R}, but by using Lemma \ref{lemma:peligrad} (also found in Section \ref{sec:tech-des}) instead of Lemma \ref{lemma:kontorovich}. In many practical examples, $\constant_{pel}$ is a universal constant. We discuss this in greater detail in the remark following Lemma \ref{lemma:markov-assume4} and in Section \ref{sec:minmax-ex}.

We can now state our first theorem regarding the sample complexity of estimating the transition probability matrices of controlled Markov chain.

\begin{theorem}~\label{thm:sample-complexity}
Consider a sample $\indexeddata$ from a controlled Markov chain that satisfies Assumptions \ref{assume:sets}, \ref{assume:return_time}, and \ref{assume:eta-mix}.
Let, $\{\hat M_{s,t}^{(l)} : l \in \Ibb\}$ be the empirical estimators as defined in \cref{eq:lestimate} with $\hat M\pow{l}$ being the corresponding estimated transition matrix.
There exists a universal constant $c>1$, such that for any $\epsilon>0$, and $\delta \in (0,1)$,  and with $d = |\chi|$ and $k = |\Ibb|$,
if it holds that
\begin{align*}
  & m>c \; \max\left\{\frac{T}{\eps^2}\log\lp \frac{dkT}{\epsilon^2\delta} \rp,\rdot\\  &\ldot\constant_\Delta^2\max\lc {T}^2, \frac{1}{\lp 1-\max\{\zeta_1,1-\zeta_2\}\rp^2}\rc \log\lp \frac{dk}{\delta} \rp\right\},
\end{align*} 
then the empirical estimator satisfies,
\begin{align}
  \prob\left(\underset{l\in \Ibb}{\sup} \left\|\Hat{M}^{(l)}-M^{(l)}\right\|_{\infty}>\epsilon  \right)< \delta.
\end{align}
\end{theorem}
 As we see in \Cref{thm:sample-complexity}, assuming that the mixing coefficients are summable (Assumption \ref{assume:eta-mix}) allows us to compute the sample complexity. 
However, in \Cref{thm:minimax} we will see that if we further assume the mixing coefficients to be geometrically decaying (Assumption \ref{assume:eta-exp}), then we have a reduced sample complexity that is also minimax.

%So a tradeoff exists in the form of a requiring a stronger assumption versus requiring more samples. 
%This arizes from having a Hoeffding's inequality (\Cref{prop:tail-bound-k&R}) versus having a more powerful Chernoff's inequality (\Cref{prop:tail-bound-peligrad}) for the tail probabilities. 
%It is widely known that Chernoff/Bernstein type inequalities are tighter than Azuma/Hoeffding type inequalities when the variances are small.
%One instance of such a discussion can be found in the Section 2.3 in \citet{vershynin2018high}. 

\subsection{Sketch of Proof of Theorem \ref{thm:sample-complexity}}\label{sec:prf-skt-sampcomp} 
\textbf{STEP 1.} As in \cref{eq:proofsketch-eq1prime} let,
\begin{align*}
    \hatmls & =\left(\Hat{M}_{s,1}^{(l)},\Hat{M}_{s,2}^{(l)},\dots,\Hat{M}_{s,d}^{(l)} \right), \text{ and }\\
    \mls & =\left({M}_{s,1}^{(l)},{M}_{s,2}^{(l)},\dots,{M}_{s,d}^{(l)}\right).\numberthis\label{eq:proofsketch-eq1}
\end{align*}
By an application of the union bound, we get $\prob\left(\underset{l\in \Ibb}{\sup} \left\|\Hat{M}^{(l)}-M^{(l)} \right\|_{\infty}>\epsilon  \right)\leq \sum_{l\in\Ibb}\sum_{s\in\chi} \prob\left(\left\|\hatmls-\mls\right\|_1 > \epsilon\right)$.

\noindent \textbf{STEP 2.} For each $s\in \chi$ and $l\in \Ibb$, we use the law of total probability \citep[Proposition 4.1]{gut2005probability} to decompose $\prob\left(\left\|\hatmls-\mls\right\|_1 > \epsilon\right)$ into a high probability region and a low probability region. 
To be precise, for two integers $n_{high,s}$ and $n_{low,s}$ chosen appropriately by Lemma \ref{lemma:expec-bound}, we write
\begin{small}
\begin{align*}
    &\sum_{s,l}\prob\left(\left\|\hatmls-\mls\right\|_1 > \epsilon\right) \\ 
    & \leq \sum_{s,l}\sum_{n=n_{low,s}}^{n_{high,s}} \prob\left(\left\|\hatmls-\mls\right\|_1 > \epsilon, N_s^{(l)}=n \right)\\
    & +\sum_{s,l}\prob(N_s^{(l)}\notin [n_{low,s} ,n_{high,s}]).
\end{align*}
\end{small}
\textbf{STEP 3.}
In this step we observe that if
$
    m>\max\lc\frac{d}{\epsilon^2\left({1+\max\{\zeta_1,1-\zeta_2\}}\right)},\frac{64T}{\eps^2}\log\lp \frac{6dk}{\delta} \rp\rc,
$
Proposition \ref{prop:err-bnd} gives us $\sum_{s,l}\sum_{n=n_{low,s}}^{n_{high,s}} \prob\left(\left\|\hatmls-\mls\right\|_1 > \epsilon, N_s^{(l)}=n \right) \leq \delta/3.$

\noindent \textbf{STEP 4.}
In this step, we use Proposition \ref{prop:tail-bound-k&R} to upper bound $\sum_{s,l}\prob(N_s^{(l)}\notin [n_{low,s} ,n_{high,s}])$: %. Using Proposition \ref{prop:tail-bound-k&R} we get,
\begin{align*}
    &\sum_{s,l}\prob(N_s^{(l)}\notin [n_{low,s} ,n_{high,s}])\\
    &\ \leq dk\left(2 \exp\left({-{\frac{\lp n_{low,s}-\frac{m}{2T}\rp^2}{2m\lp\constant_\Delta\rp^2}}}\right)
      \rdot\\
      & \ \ldot+ 2 \exp\left({-{\frac{\lp n_{high,s}-m\max\{\zeta_1,1-\zeta_2\}\rp^2}{2m\lp\constant_\Delta\rp^2}}}\right)\right).
\end{align*}
It follows that for a universal constant $c$, as long as
$m>c\max\lc \constant_\Delta^2\log\lp \frac{dk}{\delta} \rp\max\lc {T}^2, \frac{1}{\lp 1-\max\{\zeta_1,1-\zeta_2\}\rp^2}\rc\rc
$, we have
$ \sum_{s,l}\prob(N_s^{(l)}\notin [n_{low,s} ,n_{high,s}])\leq 2\delta/3
$ completing the sketch (details in Appendix~\ref{sec:prf-sampcomp}).
{
\subsection{Minimax Sample Complexity}~\label{sec:minimax-sc}
% \[
% P = \bigcup_i\bigcup_{\history_0^{i-1},s}\lc \Lcal(a_i|X_i=s,\History_0^{i-1}=\history_0^{i-1}):\prob(X_i=s,\History_0^{i-1}=\history_0^{i-1})>0 \rc
% \]
In Theorem \ref{thm:minimax}, we show that under the extra assumption of geometric mixing, our estimator is minimax optimal.
Before proceeding to Theorem \ref{thm:minimax}, we introduce some notation. Consider a sample $\indexeddata$ from a controlled Markov chain that satisfies Assumptions \ref{assume:sets}, \ref{assume:return_time}, and \ref{assume:eta-exp}. Let $\rho_\star:=\sup_{s,l} \sup_{1\leq i\leq m} \prob\lp X_i=s,a_i=l\rp$, and with $\constant_{pel}$ as in \Cref{prop:tail-bound-peligrad}, define
}
{
\begin{align*}
    &\constant_\zeta := \frac{8\lp2\constant_\Delta\rho_\star \lp1-\zeta\pow\max\rp^{-2}+\lp1-\zeta\pow\max\rp^{-1} \rp}{\constant_{pel}},\\
    &\constant_T := \frac{64\lp \constant_\Delta\rho_\star T^2+2T \rp}{\constant_{pel}},\\
    &\constant_{T,\delta}  :=\constant_T\log\lp\frac{6dk}{\delta}\rp,\quad  \constant_{\zeta,\delta} := \constant_\zeta\log\lp\frac{6dk}{\delta}\rp.
\end{align*}
\begin{theorem}~\label{thm:minimax}
Consider the setting of \Cref{thm:sample-complexity} and suppose that Assumptions \ref{assume:sets}, \ref{assume:return_time}, and  \ref{assume:eta-exp} are satisfied, and let $\rho_\star=\max_{s,l}\rho_s\pow{l}$. Then, there exists a universal constant $c>1$ such that if
\begin{align*}
    & m> c \ \max\lc\frac{4d}{\epsilon^2\lp1+\max\{\zeta_1,1-\zeta_2\}\rp},\rdot\\&\ldot\ldot\ \constant_{T,\delta}\lp\log\constant_{T,\delta}\rp^2,\rdot \ \constant_{\zeta,\delta}\lp\log \constant_{\zeta,\delta}\rp^2%,T,\frac{1}{1-\max\{\zeta_1,1-\zeta_2\}}
    \rc,
\end{align*}
then the empirical estimator satisfies
%\begin{align}
$  \prob\left(\underset{l\in \Ibb}{\sup} \left\|\Hat{M}^{(l)}-M^{(l)}\right\|_{\infty}>\epsilon  \right)< \delta$,
%\end{align} 
for all $\eps,\delta>0$ and is minimax upto $\log$ and $\log\log$ terms whenever $0<\eps<1/32$. 
\end{theorem}
{%\color{teal}
%\begin{remark}
%It is straightforward to see that $\rho_s\pow{l}\leq \max\lc\zeta_1,1-%\zeta_2\rc$. However, this upper bound is loose to the degree that the resulting sample complexity is no longer provably minimax.
%\end{remark}}
{We point out that this result differs from the previous one by a factor of $\constant_\Delta\rho_\star/\constant_{pel}$. In Section \ref{sec:minmax-ex} we present an example where  $\rho_\star$ is $O(1/T)$ and $\constant_{pel}$ is independent of $T$. Therefore, assuming exponential mixing improves the sample complexity by a factor of $1/T$ and is minimax optimal. 
% The presence of this difference is significant in the context of model miss-specification. If a model is erroneously assumed to be exponentially mixing when it is not, $O(T)$ samples are no longer sufficient to guarantee precision while estimating the transition matrices. 

% In Sections \ref{sec:example-stationary}, and \ref{sec:exam-Markov} we prove that if the controls are Markovian, then the process is exponentially mixing under mild conditions. 
% This provides yet another justification of why it is tempting to assume Markovianity. 
% However, if the assumption is erroneous, then there is no guarantee that the estimated probabilities would be accurate. 
% As such, our prescription is to follow pessimism in the face of uncertainty and collect $O(T^2)$ samples whenever there is uncertainty about the logging policy.
}
\subsection{Sketch of Proof of Theorem \ref{thm:minimax}}\label{sec:prf-skt-minmax} 
{The proof of the Theorem is divided into two parts: (1) the sample complexity, and (2) the minimaxity.} The proof of sample complexity proceeds similarly to the proof of \Cref{thm:sample-complexity}. The key difference is in Step 4, where instead of using Proposition \ref{prop:tail-bound-k&R}, we use Proposition \ref{prop:tail-bound-peligrad}. {The intuition is to use a tighter Chernoff concentration inequality that is available for exponentially mixing random variables, instead of a weaker Hoeffding's inequality. This produces a tighter sample complexity that is provably minimax.} The details of the first part can be found in \Cref{sec:proof-scmm}. 
For this sketch, we focus on the minimaxity. Let $\Mcal_{\chi,\Ibb}$ be the class of all controlled Markov chain on state space $\chi$ with control space $\Ibb$. 
For two collection of stochastic matrices $\Mbb_1:=\lc M_1\pow l\rc_{l\in \Ibb},\Mbb_2:=\lc M_2\pow l\rc_{l\in \Ibb}$, define  
$
\lV \Mbb_1-\Mbb_2 \rV_\infty^*:=\sup_{l\in\Ibb}\lV M_1\pow{l}-M_2\pow{l}\rV_\infty
$.
Observe the minimax risk satisfies
% \label{eq:mmrisk}
\begin{align*}
\mmrisk & = \inf_{\hat\Mbb} \sup_{\Pcal\in\Mcal_{\chi,\Ibb}} \prob\lp{\nrm{\hat \Mbb-\Mbb}_\infty^* > \eps}\rp\\
& \geq \inf_{\hat\Mbb} \sup_{\Pcal\in\Mcal'} \prob\lp{\nrm{\hat \Mbb-\Mbb}_\infty^* > \eps}\rp,
\end{align*}
for any subclass of controlled Markov chains $\Mcal'\subset\Mcal_{\chi,\Ibb}$ and any estimation procedure, $\hat \Mbb$. % dependent on the data (informally).
The rest of the proof proceeds through 2 cases by constructing appropriate subclasses $\Mcal'$. {The motivation behind these choices are based on the fact that the uniform distribution is the least favourable choice for estimation \citep{brandwein1980minimax,van2006minimax,lehmann2006theory,fourdrinier2013bayes}. We make these examples explicit in Section \ref{sec:minmax-ex}.}

\paragraph{CASE I:}$\lp m<\frac{8d}{\epsilon^2\lp1+\max\{\zeta_1,1-\zeta_2\}\rp}\rp$  

\noindent Here, we choose a class of controlled Markov chains with controls distributed uniformly over $\Ibb$ and transition matrices, for vectors $\sigma=\lp\sigma_1,\dots,\sigma_{\frac{d}{2}}\rp\in\lc -1,1\rc^\frac{d}{2}$, given by
\begin{align*}
 M_{\sigma} & = \begin{pmatrix}
 \frac{1-p_{\star}}{d} & \hdots & \frac{1-p_{\star}}{d} & p_{\star} \\
  \vdots & \vdots & \vdots & \vdots\\
  \frac{1-p_{\star}}{d} & \hdots & \frac{1-p_{\star}}{d} & p_{\star} \\
  \frac{1 - p_\star + 16 \sigma_1 \eps}{d} & \hdots &  \frac{1 - p_\star - 16 \sigma_{\frac{d}{2}} \eps}{d} & p_{\star} 
\end{pmatrix}.
\end{align*}
We then use Tsybakov's reduction method to lower bound $\inf_{\hat\Mbb} \sup_{\Pcal\in\Mcal'} \prob\lp{\nrm{\hat \Mbb-\Mbb}_\infty^* > \eps}\rp$ for our chosen subclass of controlled Markov chains.
\paragraph{CASE II:} $m<\lp2\ \constant_{T,\delta}\lp\log\constant_{T,\delta}\rp^2, 2\ \constant_{\zeta,\delta}\lp\log \constant_{\zeta,\delta}\rp^2\rp$

\noindent \textbf{STEP 1.} For this case, we set $\Mcal'$ to be a class of controlled Markov chains with controls and transition probability matrices described in Section \ref{sec:minmax-ex}. 

\noindent\textbf{STEP 2.} We then use Tsybakov's reduction method \citep[Chapter 2.2]{tsybakov2009introduction} to observe that for any random variable $\Tbb$,
\begin{align*}
 \mmrisk &\geq\inf_{\hat\Mbb} \sup_{\Pcal\in\Mcal'} \prob\lp{\nrm{\hat \Mbb-\Mbb}_\infty^* > \eps}\gn\Tbb > m\rp\\
 &\qquad \times\prob\lp{\Tbb > m}\rp.    
\end{align*}
$\Tbb$ is chosen to be an appropriate ``touring time'' (the time to visit sufficiently many state-control pairs). 

\noindent\textbf{STEP 3.} We then prove that $\prob\lp{\Tbb > m}\rp$ is bounded away from zero as long as 
$m<2\ \constant_{T,\delta}\lp\log\constant_{T,\delta}\rp$.

\noindent\textbf{STEP 4.} We then argue that whenever $\Tbb>m$, there exists a state-control pair $s_0,l_0$ such that $N_{s_0}\pow{l_0}=0$.

\noindent\textbf{STEP 5.} If $N_{s_0}\pow{l_0}=0$, so is $N_{s_0,t}\pow{l_0}=0$ for all $t\in\chi$. This proves that there is an uniform error to estimate $M_{s_0,t}\pow{l_0}$, which proves our claim.
}
\section{Applications}\label{sec:examples}
We first briefly discuss how Assumptions \ref{assume:return_time}, \ref{assume:eta-mix}, and \ref{assume:eta-exp}, can be reduced to simpler assumptions. %for the purpose of analysis.
\subsection{Reduction of Assumptions}
{
    \paragraph{Reduction of Return Times.} First consider the assumption on return times introduced in Assumption \ref{assume:return_time}.
{
We call a sequence of random variables $\{Z_i\}_{i\geq 0}$ a $j$th-order Markov chain if for all $i$, the conditional distribution of $Z_\infty,\dots, Z_i$ satisfies
$\probl\lp Z_\infty,\dots,Z_i|Z_{i-1},\dots,Z_{0} \rp = \probl\lp  Z_\infty,\dots,Z_i|Z_{i-1},\dots,Z_{i-j} \rp.$
Observe that if $a_i$ is $j$th-order Markovian then so is the paired process $(X_i,a_i)$. For convenience of notation, define $\tau^\dagger:=\sum_{p=0}^{i-1}\tau_{s,l}\pow{p}$ and observe that $\tau_{s,l}\pow i$ is a function of only $X_{\tau^\dagger+1},a_{\tau^\dagger+1}, \dots,X_\infty,a_{\infty}$. It follows that}
\begin{align*}
    \sup_{s,l,i}\E[\tau^{(i)}_{s,l}|{\Fcal_{\tau^\dagger}}]= \sup_{s,l,i}\E[\tau^{(i)}_{s,l}|\History_{\tau^\dagger-j}^{\tau^\dagger}]
\end{align*}
almost everywhere. Moreover, if $a_i$ is independent of time point $i$ (also called ``stationary"), then have almost everywhere
\begin{align}~\label{eqn:return-timered}
    \sup_{s,l,i}\E[\tau^{(i)}_{s,l}|\History_{\tau^\dagger-j}^{\tau^\dagger}]=\sup_{s,l}\E[\tau^{(1)}_{s,l}|X_0=s,a_0=l].
\end{align}
}
 \paragraph{Reduction of Mixing Coefficients.} Next, we decompose the $\bar \eta$-mixing coefficients of the paired process $\{X_i,a_i\}$ into mixing coefficients over states and controls. 
 We motivate this decomposition using two facts:
\begin{enumerate}
    \item\label{fact:decomp-mot1} The controls of a controlled Markov chain are often chosen by the user and is well behaved.
    \item\label{fact:decomp-mot2} The mixing coefficients of the individual processes can be analysed more directly.
\end{enumerate}
We begin by defining the $\gamma$-mixing coefficients  $\gamma_{p,j,i}$ for controls as the following total variation distance 
\begin{small}
\begin{align*}
    \gamma_{p,j,i} &:= 
    \sup_{s_p, \history_{i+j}^{p-1}, \history_0^i} 
    \Big\lVert 
        \probl\Big( 
            a_p \Big| X_p = s_p, \History_{i+j}^{p-1} = \history_{i+j}^{p-1}, \History_0^i = \history_{0}^i
        \Big) \\
    &\quad - 
        \probl\Big( 
            a_p \Big| X_p = s_p, \History_{i+j}^{p-1} = \history_{i+j}^{p-1}
        \Big)
    \Big\rVert_{TV},
\numberthis\label{eq:def-gamma}
\end{align*}    
\end{small}
{%\color{teal} 
where $\prob\lp X_p=s_p,\History_{i+j}^{p-1}=\history_{i+j}^{p-1},\History_0^i=\history_{0}^{i}\rp>0$.}
\begin{assumption}[Mixing of controls]~\label{assume:control-mixing} There exists a constant $\constant\geq 0$ such that
\begin{align*}
    \sup_{1\leq i\leq{\color{black} \infty}}\sum_{j=1}^{{\color{black} \infty}} \sum_{p=i+j+1}^{{\color{black} \infty}} \gamma_{p,j,i}\leq \frac{\constant}{2}.
\end{align*}

\end{assumption}
\begin{remark}
In the Markovian settings, when the sequence of controls $a_i$ depend only upon $X_i$,
$
 \gamma_{p,j,i}=0
$
for all $p,j,i$. In such case, Assumption \ref{assume:control-mixing} is satisfied with $\constant=0$. This extends to the case where $a_i$ depends upon $j$ many past time points. If $a_i$ depend only upon $X_i,a_{i-1},X_{i-1},\dots,$ $a_{i-j+1},X_{i-j+1}$, then Assumption  \ref{assume:control-mixing} is satisfied with $\constant=j-1$. 
\end{remark}
{Next, we generalize the Dobrushin coefficients \citep{dobrushin1956central,dobrushin1956centralII,mukhamedov2013dobrushin} for inhomogenous Markov chains to the realm of controlled Markov chains.} Let $\indexeddata$ be a collected sample.
For all integers $j\geq i$ define the mixing coefficient $\bar\theta_{i,j}$
\begin{align*}
& \bar \theta_{i,j}:= \underset{\substack{s_1,s_2\in\chi,l_1,l_2\in \Ibb,\\ \prob(X_i=s_1,a_i=l_1)>0,\\ \prob(X_i=s_2,a_i=l_2)>0  }}{\sup}  \| \probl\lp X_j|X_i=s_1,a_i=l_1\rp\\
&\qquad\qquad\qquad\quad-\probl\lp X_j|X_i=s_2,a_i=l_2\rp \|_{TV},\numberthis\label{eq:def-theta}
\end{align*}
such that $(s_1,l_1)\neq(s_2,l_2)$. 
{ The following assumption on $\bar\theta$ controls the mixing of the state process $X_i$.}
\begin{assumption}[Mixing of States]~\label{assume:chain-mix} There exists a constant $\constant_{\theta}\geq 0$ such that
\[
\sup_{1\leq i\leq { \infty}}\sum_{j=i+1}^{{ \infty}} \bar \theta_{i,j}\leq\constant_{\theta}. 
\]
\end{assumption}
Note that neither of Assumptions \ref{assume:control-mixing} and \ref{assume:chain-mix} imply the other, as the following counter-examples illustrate.
\begin{enumerate}
    \item Let $(X_i,a_i)$ be an inhomogenous Markov chain for which  
    $
    \sup_{1\leq i\leq \infty}\sum_{j=i+1}^\infty \bar \theta_{i,j}=\infty.
    $
   {One example of such an inhomogenous Markov chain can be found in Lemma \ref{lemma:counter-example}}. However, since the controls are deterministic, every inhomogenous Markov chain satisfies Assumption \ref{assume:control-mixing}. We prove this fact formally in \Cref{prop:inhomogenous-mc}. Therefore, this chain satisfies Assumptions \ref{assume:control-mixing} but not Assumption \ref{assume:chain-mix}.
    \item For the second counter-example consider a controlled Markov chain $(X_i,a_i)$ where the $a_i$'s do not satisfy Assumption \ref{assume:control-mixing}. Let $X_i$ be independent draws from a uniform distribution over $\chi$. It is easily seen that $\bar\theta_{i,j}=0$ for this example. Therefore, this chain satisfies Assumptions \ref{assume:chain-mix} but not \ref{assume:control-mixing}.
\end{enumerate}
Observe that the previous assumptions on the states and controls imply the summability of the weak mixing coefficients. We formalize it in the following Lemma.
\begin{lemma}~\label{lemma:delta-bound}
 For any controlled Markov chain that satisfies Assumptions \ref{assume:control-mixing} and \ref{assume:chain-mix},
$
    \lVert\Delta_m\rVert\leq \constant+\constant_{\theta}+1,
$
where $\|\Delta_m\|=\underset{1\leq i\leq m}{\max} (1+ \Bar{\eta}_{i,i+1}+ \Bar{\eta}_{i,i+2}+\dots \Bar{\eta}_{i,m})$, and $\Bar{\eta}_{i,j}$ is as defined in \cref{def:weak-mixing}
\end{lemma}
{
\begin{remark}~\label{remark:mixing-assume}
%As before, observe that Assumptions \ref{assume:control-mixing} and \ref{assume:chain-mix} provide a sufficient condition for $\|\Delta_m\|$ to be bounded uniformly over $m$. 
We remark that \Cref{thm:sample-complexity} continues to hold under the weaker Assumption \ref{assume:eta-mix}. 
However, { since all of our examples satisfy Assumptions \ref{assume:control-mixing} and \ref{assume:chain-mix} we can invoke Lemma \ref{lemma:delta-bound} to prove that Assumption \ref{assume:eta-mix} holds.} Next, we state the following two assumptions as stronger versions of Assumptions \ref{assume:control-mixing} and \ref{assume:chain-mix}.
\end{remark}

\begin{assumption}[Geometric mixing of controls]~\label{assume:control-exp} There exists a constant $\constant_\star> 0$ independent of $m$ such that for all integers $j\geq i$, we have
$
    \sum_{p=i+j+1}^\infty \gamma_{p,j,i}\leq e^{-\constant_\star\lp j-i\rp}.
$
\end{assumption}
\begin{assumption}[Geometric mixing of States]~\label{assume:chain-exp} There exists a constant $\constant_{\theta,\star}> 0$ independent of $m$ such that for all integers $j\geq i$, we have
$
\bar \theta_{i,j}\leq e^{-\constant_{\theta,\star}\lp j-i\rp}.
$
\end{assumption}
We then get the following lemma as a counterpart to Lemma \ref{lemma:delta-bound}.
\begin{lemma}~\label{lemma:eta-bound} For any controlled Markov chain that satisfies Assumptions \ref{assume:control-exp} and \ref{assume:chain-exp}, there exists a positive constant $\constant_{cof}$ independent of $m$ such that $\forall$ integers $j\geq i$, we have
$
    \bar \eta_{i,j}\leq e^{-\constant_{cof}(j-i)}.
$
\end{lemma}
\begin{remark}~\label{remark:exp-mix}
It can be seen that if Assumptions \ref{assume:control-exp} and \ref{assume:chain-exp} are satisfied, then so are Assumptions \ref{assume:control-mixing} and \ref{assume:chain-mix} with constants $1/(1-e^{-\constant_\star})$ and $1/(1-e^{-\constant_{\theta, \star}})$ respectively. 
To simplify notations, we will denote $1/(1-e^{-\constant_\star})$ by $\constant$ and $1/(1-e^{-\constant_{\theta, \star}})$ by $\constant_\theta$ respectively. Finally, observe that Assumptions \ref{assume:control-exp} and \ref{assume:chain-exp} provide a sufficient condition for $ \bar \eta_{i,j}$ to be geometrically decaying uniformly over $m$.
\end{remark}
}

\subsection{Controlled Markov chains with stationary randomized controls}~\label{sec:example-stationary}
% \textbf{Definition : }
We say a CMC has stationary randomized controls if for any time $i$, state $s$, control $l$, and history $\history_0^{i-1}$ 
\begin{align*}
\probl\lp a_i=l|X_i=s,\History_0^{i-1}=\history_0^{i-1}\rp & =\probl\lp a_i=l|X_i=s \rp\\
&=\probl\lp a_1=l|X_1=s \rp.\numberthis\label{eq:stationary-eq1}    
\end{align*}
In this section we show that assumptions \ref{assume:sets},\ref{assume:return_time}, \ref{assume:control-mixing}, and \ref{assume:chain-mix} hold for a controlled Markov chain with stationary controls. Writing $P_s^{(l)}$ for $\prob\left(a_1=l|X_{1}=s\right)$,  the transition probability of the joint state control pair is
\begin{align*}
    & \prob\left(X_i=t,a_i=l|X_{i-1}=s,a_{i-1}=l'\right)\\
    &\qquad = \prob\left(X_i=t|X_{i-1}=s,a_{i-1}=l\right)\prob(a_i=l|X_i=t)\\
    &\qquad = M_{s,t}^{(l)}\times P_t^{(l)}.
\end{align*}
% Obviously, $\sum_{l=1}^d P_t^{(l)}=1$. 
The state-control pair is a time homogeneous Markov chain with transition probabilities given by $M_{s,t}^{(l)}\times P_t^{(l)}$. 
% \noindent\textbf{Assumption .}
Our goal is to estimate the transition probabilities $M^{(l)}_{s,t}$. %%$\prob(X_1=t|X_0=s,a_0=l)$. 
%Observe that this estimation problem only makes sense in the case when $\prob(X_0=s,a_0=l)>0$ for any $s$ and $l$. 
%This follows from the fact $\prob(X_0=s,a_0=l)=P_s\pow{l}\times\prob(X_0=s)$ and, $P_s\pow{l}>0$ for all $s\in\chi$ and $l\in\Ibb$. By $P_{min}$, denote $\inf_{s,l}P_s\pow{l}$. 
Assume that $M\pow{l}$ is an aperiodic and irreducible (ergodic) transition probability matrix for all $l\in\Ibb$. Then, we have the following proposition. 
\begin{proposition}~\label{prop:station-prop1}
The paired process $\indexeddata$ is an uniformly ergodic Markov chain.
\end{proposition}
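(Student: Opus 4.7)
The plan is to verify three facts in turn: that the paired process is a time-homogeneous Markov chain; that it is irreducible and aperiodic; and that finiteness of the state-control space then immediately upgrades this to uniform ergodicity.

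First, I would confirm the Markov property. Using the stationary randomized control assumption in equation (\ref{eq:stationary-eq1}) together with the controlled Markov property, the one-step joint transition factorises as
\[
\prob(X_{i+1}=t, a_{i+1}=l \mid X_i=s, a_i=l', \History_0^{i-1}=\history_0^{i-1}) = M_{s,t}^{(l')}\,P_t^{(l)},
\]
which depends only on $(s,l')$ and $(t,l)$. Hence $\{(X_i,a_i)\}$ is a time-homogeneous Markov chain on the finite state space $\chi\times\Ibb$ with transition kernel $K_{(s,l'),(t,l)} := M_{s,t}^{(l')}\,P_t^{(l)}$.

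Next, I would establish irreducibility. Fix any two state-control pairs $(s,l')$ and $(t,l)$. Pick an arbitrary reference control $l^\star\in\Ibb$. Since $M^{(l')}$ is irreducible, there exists $s_1\in\chi$ with $M^{(l')}_{s,s_1}>0$; then the joint chain moves $(s,l')\to(s_1,l^\star)$ with probability $M^{(l')}_{s,s_1}\,P^{(l^\star)}_{s_1}\ge M^{(l')}_{s,s_1}\,P_{min}>0$. By irreducibility and aperiodicity of $M^{(l^\star)}$, there exists an integer $n\ge 1$ with $(M^{(l^\star)})^n_{s_1,t}>0$. Walking along any such path while always re-selecting control $l^\star$ at the intermediate states yields a joint-chain path from $(s_1,l^\star)$ to $(t,l^\star)$ whose probability is bounded below by a positive product of $M^{(l^\star)}$-entries times $(P_{min})^{n-1}>0$. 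Finally, a single extra step of the form $(t,l^\star)\to(t,l)$ can be accomplished by going $(t,l^\star)\to(s_2,\cdot)\to\cdots\to(t,l)$ (again using irreducibility of $M^{(l^\star)}$ and $P^{(l)}_t\ge P_{min}>0$ at the last selection). This proves $(s,l')\to(t,l)$ in finitely many steps with positive probability, hence the chain is irreducible.

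For aperiodicity, fix $(s,l)$ and consider self-return times. If at every step we select control $l$ at the new state, the $X$-marginal evolves exactly as $M^{(l)}$, and the corresponding joint-chain return probability in $n$ steps is bounded below by $(M^{(l)})^n_{s,s}\cdot(P_{min})^n$. Aperiodicity of $M^{(l)}$ gives two coprime integers $n_1,n_2$ for which these quantities are positive, so $\gcd\{n:K^n_{(s,l),(s,l)}>0\}=1$, giving aperiodicity of the joint chain.

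Finally, I would invoke the standard fact that on a finite state space, an irreducible and aperiodic Markov chain is uniformly ergodic: there exists $n_0$ such that $K^{n_0}_{(s,l),(t,l')}>0$ for all pairs, giving a Doeblin minorisation and hence uniform (geometric) convergence to the unique stationary distribution. The only step requiring any care is the irreducibility argument, since the transition kernel in the state coordinate depends on the previously chosen control; but the assumption $P_{min}>0$ makes it essentially free to force a desired control at each step, and the rest is standard. No genuine obstacle is expected.
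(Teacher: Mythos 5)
Your proposal is correct and follows exactly the route the paper intends: it verifies that the paired process is a finite, time-homogeneous, irreducible and aperiodic Markov chain and then invokes the standard equivalence with uniform ergodicity (the paper simply cites Meyn and Tweedie, Theorem 16.0.2, and omits the details). Your argument supplies those omitted details correctly, with the key observation that $P_{min}>0$ lets you force any desired control at each step.
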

\vspace{-0.2 cm}
By verifying the aperiodicity and irreducibility of the paired process, the proof of Proposition \ref{prop:station-prop1} follows readily from \citet[Theorem 16.0.2]{meyntweedie}.
Let $\nu$ be the invariant distribution of this Markov chain with $\nu_{(s,l)}$ being invariant probability corresponding to $(s,l)$. 
{The following proposition proves that $\{(X_i,a_i)\}$ satisfies Assumptions \ref{assume:sets},\ref{assume:return_time}, and \ref{assume:eta-exp}. Its proof can be found in \Cref{prf:sta-mdp}.}

\begin{proposition}~\label{prop:stationary-mdp}
Let $\indexeddata$ be a sample from a controlled Markov chain with $d = |\chi|$, $k = |\Ibb|$, and stationary randomized controls. 
Fix $\epsilon>0$, and $\delta \in (0,1)$. Then there exists a universal constant $c>0$ and a constant $\constant_\theta>0$ such \Cref{thm:sample-complexity} is satisfied with $T=\sup_{s,l} {1}/{\nu_{s,l}}$, $\zeta_2=P_{min}$ $\zeta_1=1-(k-1)P_{min}$ and $\constant_\theta$. Moreover, if $\initialD=\nu$, then $\zeta_1=\zeta_2=1/T$ satisfies Assumption \ref{assume:sets}.
\end{proposition}

\subsection{Controlled Markov chains with non-stationary Markov controls}\label{sec:exam-Markov}
% \textbf{Definition : } 
As the next example, we consider a controlled Markov chain with non-stationary control process. A controlled Markov chain is said to have non-stationary Markov controls if for any time period $i$, state $s$, control $l$, and sample history $\history_0^{i-1}$,
\[
\probl\lp a_i=l|X_i=s,\History_0^{i-1}=\history_0^{i-1}\rp=\probl\lp a_i=l|X_i =s\rp.
\]
 Observe that we allow the law of the control sequence to depend upon the time step $i$. For convenience, we refer to this as a `Markov control'.
We can write the transition probability of the  state-control pair as
\begin{align*}
    &\prob\left(X_i=t,a_i=l'|X_{i-1}=s,a_{i-1}=l\right)\\
    & 
    \qquad = \prob\left(X_i=t|X_{i-1}=s,a_{i-1}=l\right)\prob(a_i=l'|X_i=t)
     \\
     & \qquad = M_{s,t}^{(l)}\times P_{t,l'}^{(i)}.
\end{align*}
It is straightforward to see that the state-control pair is a time inhomogeneous Markov chain with transition probabilities given by $M_{s,t}^{(l)}\times P_{s,l'}^{(i)}$. 
Our goal is to estimate the transition probabilities $\prob(X_i=t|X_{i-1}=s,a_{i-1}=l')$. We proceed by making assumptions on the return times of the controls.
\begin{definition}
Define $\tau_{s,l}\pow{i,\star,j}$ to be the time between the $j-1$-th and $j$-th visit to control $l$ after visiting state-control pair $s,l$ for the $i$-th time. For ease of notation, denote $ \sum_{k=1}^i\tau_{s,l}\pow{k}+\sum_{k=1}^{j-1}\tau^{(i,\star,k)}_{s,l}=\tau_\star$, Then $\tau_{s,l}\pow{i,\star,j}$ is recursively defined as
\begin{small}
$\tau_{s,l}\pow{i,\star,j}:= \min\{n:(a_{\tau_\star+n}=l),a_{j}\neq l\enspace\forall \enspace \tau_\star<j<\tau_\star+n\}.
$
\end{small}
\end{definition}
Next we make some simplifying assumptions on $\tau_{s,l}\pow{i,\star,j}$ and $M\pow{l}$.
\begin{assumption}~\label{assume:markov}
\begin{enumerate}
    \item For some constant $T_\star>0$,
    $
    \sup_{i\geq 0} \expec[\tau_{s,l}\pow{i,\star,j}|\Fcal_{\sum_{p=1}^{i-1} \tau_{s,l}\pow{p}+\sum_{p=1}^{j-1}\tau_{s,l}\pow{i,\star,p}}]< T_\star \textit{ almost everywhere.}
    $
    \item There exists $M_{min}$ and $M_{max}$ such that for all $s,t\in \chi$ and $l\in\Ibb$ 
    \begin{align}
    0<M_{min}\leq M_{s,t}\pow l\leq M_{max}<1.   \label{eq:markov-eq1}     
    \end{align}
\end{enumerate}
\end{assumption}
{ The next lemma proves that under this assumption $\{(X_i,a_i)\}$ satisfies Assumption \ref{assume:return_time} and derives $T$.}
\begin{lemma}\label{lemma:return-time-markov}
Under the conditions of Assumption \ref{assume:markov} for all $(i,s,l)\in\naturalset\times\chi\times\Ibb$ it holds almost everywhere that
\begin{small}
    \begin{align*}
    &\expec[\tau_{s,l}^{(i)}|\Fcal_{\sum_{p=1}^{i-1} \tau_{s,l}\pow{p}}]\\
    &\ <\frac{T_\star M_{max}}{\max\{M_{max},1-M_{min}\}(1-\max\{M_{max},1-M_{min}\})}.\numberthis\label{eq:markov-eq3}         
    \end{align*}
\end{small}
\end{lemma}

We can now state our main result about the sample complexity of a controlled Markov chain with a non-stationary Markov controls. Its proof can be found in \Cref{prf:markov-mdp}.
\begin{proposition}\label{prop:markov-mdp}
Let $\indexeddata$ be a sample from a controlled Markov chain with non-stationary Markovian controls satisfying Assumption \ref{assume:markov}. Fix $\epsilon>0$, and $\delta \in (0,1)$. Then Assumption \ref{thm:sample-complexity} holds with $\zeta_1=M_{max}$, $\zeta_2=M_{min}$, $T=\frac{T_\star M_{max}}{\max\{M_{max},1-M_{min}\}(1-\max\{M_{max},1-M_{min}\})}\  \text{ and }\  \constant_{\theta}:= \frac{1}{dM_{min}}.$ 
\end{proposition}
We next illustrate how one can use Theorem \ref{thm:sample-complexity} to derive a sample complexity of learning the demand distribution of an inventory control problem. 
% Our objective is to estimate the transition probability $\prob\lp X_{i+1}=t|X_i=s,a_{i} = l \rp = p_{\Kcal-t}$. We can write down the transition matrices as follows
% \[
%     M\pow l = \begin{bmatrix}
%         p_{\Kcal-1} , & \dots ,&  p_{1}\\
%         \vdots , & \vdots , &\vdots\\
%         p_{\Kcal-1} ,& \dots ,&  p_{1}
%     \end{bmatrix}.
% \]

\subsection{Sample complexity of estimating transitions of a $(\texttt{\textbf{s}},\texttt{\textbf{S}})$-inventory  control problem:}~\label{sec:inventory-control}
{
% \harsha{Why $(c,C)$? This is not standard...}
In this section, we consider estimating the transition probability matrices 
% \sout{demand distribution} \harsha{demand?} 
for a finite state inventory control problem.
% We adopt the simplified example from Chapter 1 of \cite{bertsekas2011dynamic}. 
Here, the Markov state $X_i$ is the inventory at time $i$, and the controls are such that the inventory is always brought up to a  level $\texttt{\textbf{S}}$ whenever it falls below a level $\texttt{\textbf{s}}$. Assume that $\texttt{\textbf{S}}>2\texttt{\textbf{s}}$. Then, with $b_i = l\in\{0,\dots,\texttt{\textbf{s}}\}$ the demand at time $i$ (having
% {\color{cyan} Answered}\harsha{Why is demand upper bounded?}
probability $\Pcal(b_i=l)=p_l$). the system has the following dynamics:
\begin{align*}
   & X_{i+1} = X_i + (\texttt{\textbf{S}}-X_i) a_i(X_i) - b_i
\\ 
&\quad \text{where} \quad  a_i(X_i) =   \begin{cases}
            1 \ \text{if} \ X_i<\texttt{\textbf{s}}\\
            0 \ \text{if} \ X_i\geq \texttt{\textbf{s}}
        \end{cases}
\end{align*}
Note that we have assumed $b_i\in\{0,\dots,\texttt{\textbf{s}}\}$ resulting in a system without backlog. We do this for simplicity, though we can easily relax this with some simple if tedious bookkeeping.

Observe that there are two controls $\{0,1\}$ and $\prob\lp X_i\geq\texttt{\textbf{s}},a_i=1\rp=\prob\lp X_i<\texttt{\textbf{s}},a_i=0\rp=0$. Thus we only need to estimate the transition probabilities $\prob(X_{i+1}=t|X_i=s,a_i=l)$ if either $\{s<\texttt{\textbf{s}}, l=1\} \text{ or } \{s\geq\texttt{\textbf{s}},l=0\}$. Therefore, the combined state-space for $\{X_i,a_i\}$ is $\{(0,1),(1,1),\dots,(\texttt{\textbf{s}}-1,1),(\texttt{\textbf{s}},0),(\texttt{\textbf{s}}+1,0)\dots,(\texttt{\textbf{S}},0)\}$. This is a Markov chain with {$1$-step transition probability matrix} $M$, whose $(s,l_1),(t,l_2)$'th element $M_{(s,l_1),(t,l_2)}=\prob(X_{i+1}=t,a_{i+1}=l_2|X_i=s,a_i=l_1)$ is,
\begin{small}
\begin{align*}
    &M_{(s,l_1),(t,l_2)} =  \\
    &\quad \begin{cases}
        p_{ \texttt{\textbf{S}}-t} \text{ if } t\in\{\texttt{\textbf{S}}-\texttt{\textbf{s}},\dots\texttt{\textbf{S}}\}, s\in \{0,\dots,\texttt{\textbf{s}}-1\},\\
        \qquad \qquad\ l_1=1,\ l_2=0\\
        p_{t-\texttt{\textbf{s}}} \text{ if } s\in\{\texttt{\textbf{s}},\dots,\texttt{\textbf{S}}\}, t\in\{s-\texttt{\textbf{s},\dots,s}\},\\
        \qquad \qquad \ l_1=0,\ l_2=0 \\
        0 \text{ otherwise. }
    \end{cases}\numberthis\label{eq:inventory-controldef}
\end{align*}    
\end{small}
This leads us to the following proposition whose proof can be found in Section \ref{sec:prf-icc}.
\begin{proposition}~\label{prop:ic-control}
    Let $\{(X_i,a_i)\}$ be a $(\texttt{\textbf{s}},\texttt{\textbf{S}})$ inventory process with $\texttt{\textbf{S}} > 2\texttt{\textbf{s}}$. Then Theorem \ref{thm:sample-complexity} applies with constants $T=(\texttt{\textbf{S}}-2\texttt{\textbf{s}})/M_{min}^{\texttt{\textbf{S}}-2\texttt{\textbf{s}}}$, $\Cbb_{\Delta} = (d/M_{min}^{\texttt{\textbf{S}}-2\texttt{\textbf{s}}})^{\lfloor(j-i)/(\texttt{\textbf{S}}-2\texttt{\textbf{s}})\rfloor}$, $\zeta_2 = M_{min}^{\texttt{\textbf{S}}-2\texttt{\textbf{s}}}$, and $\zeta_1=1-M_{min}^{\texttt{\textbf{S}}-2\texttt{\textbf{s}}}$.
\end{proposition}

\begin{remark}
    Observe that assuming $\texttt{\textbf{S}}>2\texttt{\textbf{s}}$ entails a loss of generality, but lets us reuse our earlier results.
    This assumption can be further generalized with more calculations and we leave it to the interested reader.
\end{remark}

{
The examples in Sections \ref{sec:example-stationary} and \ref{sec:inventory-control} can also be viewed as stationary Markov chains, and therefore the sample complexity results may also be recovered using the analysis in~\cite{wolfer2021statistical}. However, a naive application of~\cite{wolfer2021statistical} to more general CMCs will yield suboptimal sample complexity results. Furthermore,~\cite{wolfer2021statistical} can only be applied to stationary, ergodic chains, while our theory is applicable in much greater generality as highlighted by the examples in Section \ref{sec:exam-Markov}, and Appendix \ref{sec:more-ex}. This also opens up the interesting possibility of designing controls to estimate the transition matrices faster than it would be possible by a stationary ergodic Markov chain. The following section provides one such example.} 

{
\subsection{Example: Designing Controls for Faster Learning of Transition Matrices }~\label{sec:control-design}
 Let $M\pow 1$ and $M\pow 2$ be two stationary ergodic transition matrices. We assume that $M\pow 1$ and $M\pow 2$ have states $t_1$ and $t_2$ which are ``difficult to reach" in comparison to other states. Formally we assume the following:

\begin{assumption}~\label{assume:combined-chain}
Let $t_1\neq t_2$ be states in $\chi$, and $\iota_1:= \sum_{s\in\chi}M_{s,t_1}\pow 1,\text{ and }\iota_2:= \sum_{s\in\chi} M_{s,t_2}\pow 2. 
$
We assume $\exists\ M_{min}<1$ such that $ M_{s,t}\pow l\geq M_{min} > 8\min\lc \iota_1,\iota_2 \rc$ for any $s\in\chi$ and any $(t,l)\notin \{(t_1,1),(t_2,2)\}$.\end{assumption}
 \begin{remark}
     We remark that this assumption simplifies the calculations, and the result holds true much more generally as demonstrated by the empirical findings.
 \end{remark}
For such transition matrices, the following proposition demonstrates that (modulo Assumption~\ref{assume:combined-chain}) the sample complexity of individually  estimating such transition matrices can be larger than estimating them simultaneously as a CMC with a pre-designed control sequence. 
% 
% As in Section \ref{sec:prf-skt-sampcomp}, let $M\pow j(s,\cdot)$ be the $s^{th}$ row of matrix $j\in\{1,2\}$.
% 
%The following proposition demonstrates that it is faster to learn $M\pow 1$ and $M\pow 2$ with a reasonable choice of control sequence. 
Its proof can be found in Section \ref{sec:prf-combchn}

\begin{proposition}~\label{prop:combined-chain} Let $m\pow 1$ and $m\pow 2$ be, respectively, the sample complexity (ignoring $\log$ terms) of learning the transition probabilities of a Markov chain with the transition probability matrix $M\pow 1$ and $M\pow 2$. 
%whereas $m\pow 2$ is the same for a Markov chain with transition probability matrix $M\pow 2$. 
Then, one can construct a controlled Markov chain with transition matrices $M\pow 1$ and $M\pow 2$ with deterministic controls $\{a_i\}$ such that the sample complexity (ignoring $\log$ terms) of learning this controlled Markov chain  $m\pow c$ satisfies
$
m\pow c < (m\pow 1+m\pow 2)/2$.
    
\end{proposition}
{
\begin{remark}
   From the explicit expressions (barring $\log$-terms)
   \begin{small}
   \begin{align*}
        m\pow 1 & = \; \max\left\{\frac{1}{\iota_1\eps^2}\log\lp \frac{d}{\iota_1\epsilon^2\delta} \rp, \frac{1}{\lp(d-2)M_{min}\iota_1\rp^2}\right\}\\
        m\pow 2 & = \; \max\left\{\frac{1}{\iota_2\eps^2}\log\lp \frac{d}{\iota_2\epsilon^2\delta} \rp, \frac{2}{\lp(d-2)M_{min}\iota_2\rp^2}\right\}\\
         m\pow c & =\max\lc \frac{4}{M_{min}\eps^2}\log\lp\frac{4d}{M_{min}\epsilon^2\delta}\rp, \frac{32}{(d-2)^2 M_{min}^4}\rc,
    \end{align*}
   \end{small}
    which is found in the proof of Proposition \ref{prop:combined-chain} one can see that $m\pow c$ is independent of either $\iota_1$ or $\iota_2$. The rest of the proof follows by straightforward algebra.
\end{remark}
}
 %In particular, we show empirical evidence that we do not require $M_{s,t}\pow l>M_{min}$ for any $s\in\chi$ and any $(t,l)\notin \{(t_1,1),(t_2,2)\}$.  

{

As a numerical illustration, consider a controlled Markov chain  with corresponding transition matrices $M\pow 1$ and $M\pow 2$ that take the form of \cref{eq:inventory-controldef}, determined solely by two (deterministic) probability vectors $\textbf{p}\pow 1$ and $\textbf{p}\pow 2$. 
%We can interpret this as corresponding to two $(\texttt{\textbf{S}},\texttt{\textbf{s}})$ inventory control systems, one with a ``high'' demand distribution and the other with a ``low'' demand distribution.
We can interpret this as corresponding to a time-inhomogeneous $(\texttt{\textbf{S}},\texttt{\textbf{s}})$ inventory system, wherein the demand distribution changes as a function of a ``low'' price $\textbf{p}\pow 1$ and a ``high'' price $\textbf{p}\pow 2$ control set by the inventory manager. 

Now, setting $(\texttt{\textbf{S}},\texttt{\textbf{s}})=(6,2)$, $\textbf{p}\pow 1$ and $\textbf{p}\pow 2$ can be constructed as $3\times 1$ dimensional vectors as follows: generate the ``low'' price vector $\textbf{p}\pow 1$ by sampling a $3\times1$ random vector of independent uniform $[0,1]$ marginals, divide the first coordinate by $1000$ and renormalize the vector to sum up to $1$. The ``high'' price vector is constructed similarly by multiplying the first coordinate by $100$. Roughly speaking, $\textbf{p}\pow 1$ and $\textbf{p}\pow 2$ correspond to $0$ sales having $0.1\%$ and $90\%$ probabilities.

}
We calculate the number of samples required to achieve a target error when the price/control sequence alternates between the ``low'' price $\textbf{p}\pow 1$ and the ``high'' $\textbf{p}\pow 2$, and compare it with the average number of samples required to achieve the same error for the individual chains. Figure \ref{fig:control-design} displays the results, with the error bars resulting from 100 repetitions. {The dotted curve represents $(m\pow 1+m\pow 2)/2$ where $m\pow 1$ is the number of samples required by the first Markov chain to hit a target error and $m\pow 2$ is the same for the second Markov chain. The dashed curve represents the samples required by the controlled chain to hit the target error $m\pow c$}. As we target smaller and smaller error, the average number of samples required to achieve a certain error for the individual Markov chains \textbf{exceeds} the number of samples required to learn the controlled Markov chain. This empirically validates our claim.

\begin{figure}[!h]
    \centering
    \includegraphics[width = \textwidth]{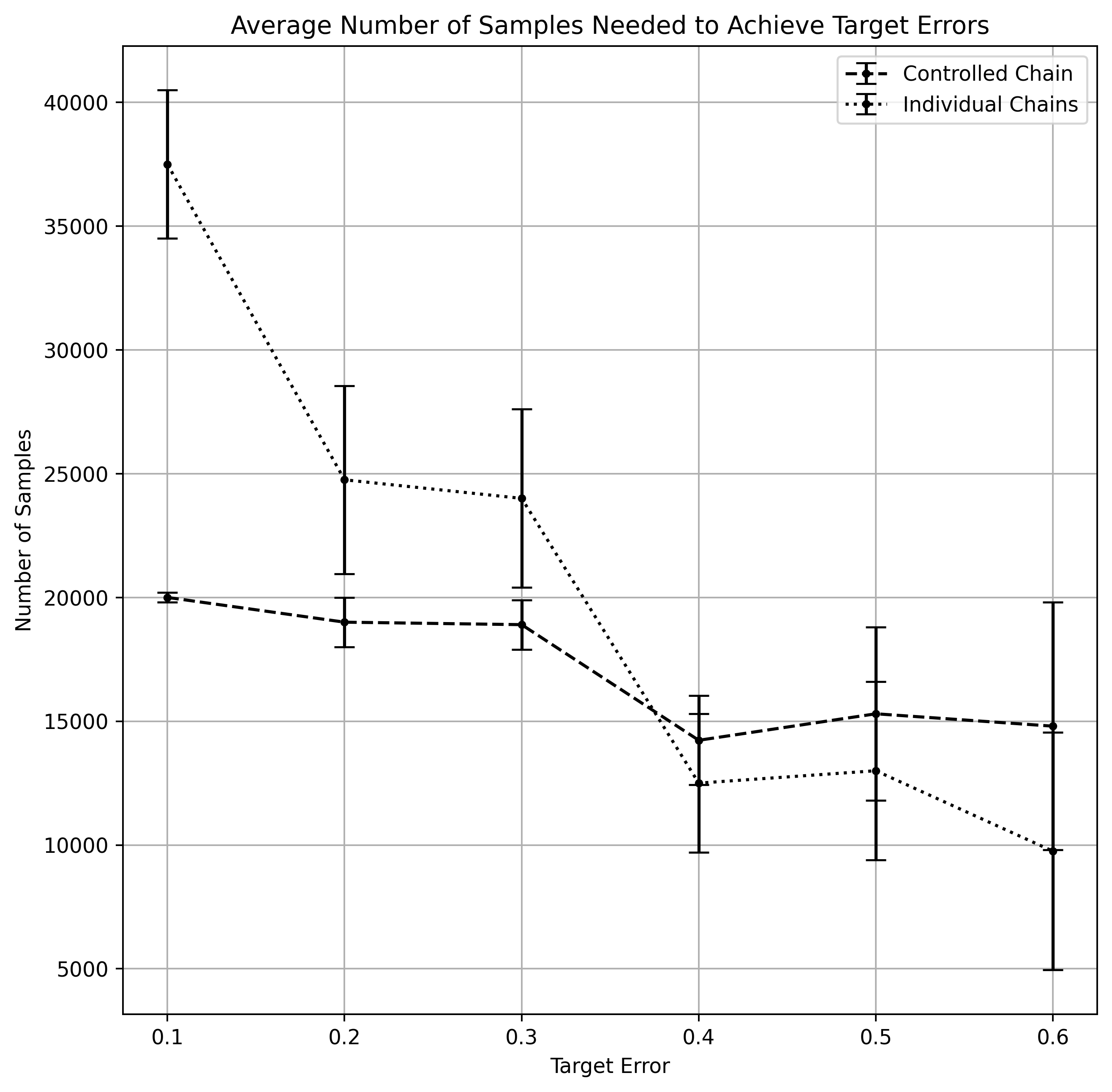}
    \caption{The number of samples required to achieve a target error for two individual Markov chains as described in Section \ref{sec:control-design} vs. learning them together as a controlled Markov chain. As we target smaller and smaller error, the controlled Markov chain requires fewer samples and also has  smaller variation compared to %the number of samples required. Whereas 
    learning the chains individually.}
    \label{fig:control-design}
\end{figure}
}

}
\section{Sample Complexity of Policy Evaluation and Optimal Policy Recovery}~\label{sec:value}
Optimal policy recovery (OPR) is a key problem in offline RL settings, wherein one wishes to identify the policy that maximizes the \textit{value function} given a sequence of states and controls generated by an unknown logging policy and unknown transition probabilities. An allied problem is offline policy evaluation (OPE) where we use the state-control sequence to estimate the value function of an arbitrary policy. In this section we demonstrate how our previous results can be used to provide a high probability bound for recovering the optimal policy. We first use Theorem \ref{thm:sample-complexity} to provide a high probability bound for OPE, and then extend that to a corresponding result for OPR.

Let $\Delta(\mathbb{I})$ denote the probability simplex on the control space, and suppose $\pi : \mathcal X \to \Delta(\mathbb{I})$ is a given stationary stochastic policy. 
Let the matrices $M$ and $\Pi$ represent the probabilities of the next state and action respectively, given the current state and action. Thus $M=[M\pow 1,\dots,M\pow k]^T$, and $\Pi = \lb\pi,\pi,\dots,\pi\rb$ are $dk\times d$ dimensional matrices.
Let $\tilde g := (\tilde g(x,a) : (x,a) \in \mathcal X \times \mathbb{I})$ be a $dk\times 1$ vector where element $s+k(l-1)$ denotes the cost associated with the state $s$ and control $l$. Then, the per-stage expected reward function $g :=(g(x):x\in\chi )$ is a $d\times 1$ vector where $g(x)= \sum_{a \in \mathbb{I}} \pi(x,a) \tilde g(x,a)$. 

For a known discount factor $0<\alpha_{dis}<1$, the value function $V_\Pi := (V_\Pi(x) : x \in \mathcal{X}) \in \mathbb R^d$,
obtained by solving the Bellman equation \citep{bertsekas2011dynamic}, is given by
\begin{align*}
V_\Pi &= \lp I-\alpha_{dis} \Pi^TM \rp^{-1} g.
\end{align*}
Substituting $M$ by $\hat M$, we obtain the plug-in estimate $\hat V_\Pi = \lp I-\alpha_{dis} \Pi^T\hat M \rp^{-1}g $.
%by plugging-in $\hat M$ as a substitute for $M$ in the Bellman equation.
%
% As observed in \Cref{sec:example-stationary}, a controlled Markov chain with stationary controls is equivalent to a Markov chain on the state-control space. 
% Therefore, without loss of generality, we set $k=1$ and write $\Pi^TM$ simply as $M$. It follows that $V = \lp I -  \alpha_{dis} M\rp^{-1}g$  and  $\hat V = \lp I -  \alpha_{dis} \hat M\rp^{-1}g$.
%
% We define $\hat V$ as the plug-in estimate of $V$ by substituting $M$ by its estimate $\hat M$
% As in \Cref{sec:example-stationary}, assume that the Markov chain with transition probability matrix $M$  satisfies Assumption \ref{assume:return_time} and Assumption \ref{assume:chain-mix}, with constants $T$ and $\constant_\theta$ respectively and 
 The next theorem provides a sample complexity bound on estimating the value function $V$.
\begin{theorem}~\label{thm:value-thm}
Let $\indexeddata$ be a sample from a controlled Markov chain with stationary randomized controls. Assume that for some $T>0$,  $\prob\lp X_i=s,a_i=l \rp >T^{-1}$ for all $s,l,i$. Then, there exists a universal constant $c>1$ such that $\prob\lp\lV \hat V_\Pi - V_\Pi\rV_\infty>\eps\rp<\delta
$ if
\begin{align*}
    &m>c \; \max\left\{\frac{T_\alpha}{\eps^2}\log\lp\frac{dkT_\alpha}{\epsilon^2\delta} \rp, \rdot\\&\ \ldot\constant_\theta^2\max\lc {T}^2, \frac{1}{\lp 1-\max\{\zeta_1,1-\zeta_2\}\rp^2}\rc \log\lp \frac{dk}{\delta} \rp\right\},
\end{align*}
$ \text{ where } T_\alpha = {\lV g\rV_{1}^2 d\alpha_{dis}^2 T}/{(1-\alpha_{dis})^4}, \zeta_1=\zeta_2=T^{-1}, \text{ and } \constant_\theta  = T/dk.$
\end{theorem}
The proof of this theorem can be found in \Cref{prf:value-thm}
}
\begin{remark}
    The assumption $\prob\lp X_i=s,a_i=l \rp >T^{-1}$ can be relaxed with appropriate assumptions on the return time of $X_i,a_i$. In practice, this would translate to an assumption on the logging policy. 
\end{remark}
{
{
Next, one can use any method to find the optimal policy; for instance policy iteration \citep[Chapter 1]{bertsekas2011dynamic} or policy gradient \citep[Chapter 13]{sutton2018reinforcement}. Let $\Pi_{opt}$ and $\hat \Pi_{opt}$ denote the optimal policies for maximising the reward function for the true and the estimated transition matrices $M$ and $\hat M$ respectively. The following theorem provides a sample complexity of recovering the optimal value. Its proof can be found in Section \ref{sec:prf-optpol1}.

\begin{theorem}~\label{thm:opt-pol1}
 Under the conditions of Theorem \ref{thm:value-thm}, we have  $\lV V_{\Pi_{opt}} -  V_{\hat \Pi_{opt}}\rV_\infty\leq\frac{d\sqrt{d}\ \alpha_{dis}}{(1-\alpha_{dis})^2} \lV g\rV_{1}\eps
    $ with probability at least $1-\delta$ if
    \begin{align*}
         m>c \; \max\left\{\frac{T_\alpha}{\eps^2}\log\lp\frac{dkT_\alpha}{\epsilon^2\delta} \rp, \constant_\theta^2\max\lc {T}^2, \rdot\rdot\\\ldot
         \ldot\frac{1}{\lp 1-\max\{\zeta_1,1-\zeta_2\}\rp^2}\rc \log\lp \frac{dk}{\delta} \rp\right\}.\numberthis\label{eq:optpol-eq1}
    \end{align*}
\end{theorem}

\begin{remark}
We can  make a further assumption that for any $\eps<1/1.5$, and policy $\Pi'$ such that $\lV \Pi_{opt}-\Pi'\rV_{\infty}>\eps$, the value functions corresponding to $\Pi_{opt}$ and $\Pi'$ satisfy  
$
         \inf_{s\in \chi}\lc V_{\Pi_{opt}}(s)-V_{\Pi'}(s)\rc>{2}\eps.$ %\numberthis\label{eq:polver-ass1}
        Then it will trivially follow that $\lV\Pi_{opt}-\hat\Pi_{opt}\rV_\infty<\eps$ giving us a theoretical guarantee for recovering the optimal policy with high probability. 
\end{remark}

One can compare the dependence of the sample complexity (eq. \ref{eq:optpol-eq1}) on the discount factor $\alpha_{dis}$ to those in \cite{li2022settling}. Our dependence is $\alpha_{dis}^2(1-\alpha_{dis})^{-4}$, which is better than $(1-\alpha_{dis})^{-3}$  in \cite{li2022settling} when $\alpha_{dis}<0.618$. However, neither achieves the lower bound $(1-\alpha_{dis})^{-2}$ given by \cite{agarwal2020model}. }
}}
\section{Conclusions}~\label{sec:conclusions}
In this paper, we derive exact rates of convergence for the empirical estimator of the transition probability matrices of a controlled Markov chain, and used these to derive the sample complexity of achieving a desired estimation error.
We tease out the exact effect of the mixing coefficients of the states and controls on the sample complexity, and provide conditions under which the empirical estimator is minimax optimal. 
We use our sample complexity results in a number of examples, including error bounds for the value function and the optimal policy estimated from data.
Below, we highlight three possible topics for future research.

\paragraph{Countable state spaces.}  As an obvious extension to our work, consider the problem of countably infinite state and control spaces. Some work in this regard can be found in \cite{wolfer2021statistical} where state spaces are countably infinite, but there are no easy extensions to the setting with countably infinite control space.

\paragraph{Uncountably infinite state space and finite controls} We have also found no result which derives the minimax sample complexity of estimating the transition probability distribution of a controlled Markov chain on an uncountably infinite state space. The histogram estimator is the obvious counterpart to this work. Indeed, recent studies \citep{sart2014estimation,loffler2021spectral} demonstrates promising properties of the histogram in estimating the transition functions of a continuous Markov chain. But to the best of our knowledge, the techniques do not translate to uncountable control spaces, and optimally estimating the transition probability distribution remains an open question. %In the same note, the extensions to a continuous time finite state controlled Markov chain is unclear as well.

\paragraph{Learning in presence of weaker mixing or adversarial controls.} Although strong mixing properties of the controls is a sufficient condition for the `estimability' of the transition matrices, it may not be a realistic assumption when the system dynamics are weakly mixing or adversarial \citep{pinto2017robust}. %Indeed, 
System identification under the presence of an adversary remains an interesting question %, which 
was addressed in a recent paper \citep{showkatbakhsh2016system} using strong linearization assumptions and exponential computation times. However, this is well beyond the scope of the current work and is a direction for future study.  

{\paragraph{Instance-dependent and data-dependent learning.} Our bounds are derived over an entire class of CMC models. Indeed, so long as a specific model conforms to the assumptions we have imposed over this class, our bounds are applicable. This reflects extant analyses of offline RL~\citep{li2022settling,li2022settlingh}. On the other hand recent work in online RL seeks to establish \emph{instance-dependent} bounds for specific models  (for example~\cite{zanette2019tighter,mou2021optimal,khamaru2022instance}). Establishing such bounds for offline CMC identification is also an interesting open problem. 
\cite{khamaru2022instance} also emphasize that inferential theory for RL should be data-dependent, for instance allowing for the computation of data-dependent confidence intervals. This is an important future direction for our work as well.}

%Traditionally sample complexity bounds had been averaged over all possible samples \cite{li2022settling,li2022settlingh}, and holds even before any sample is observed. In contrast to this ex-ante analysis, it might be interesting to think about estimation error bounds for a ex-post instance-dependent analysis, where any recovered risk bound is conditioned to the given data \cite{zanette2019tighter,mou2021optimal}. Although it is beyond the scope of the current paper, an instance-dependent error bound is an interesting direction for future work.}

\section{Acknowledgements}
Imon Banerjee was supported in part by the Ross-Lynn fellowship and McLean scholarship at Purdue University. Harsha Honnappa was partly supported by the National Science Foundation through grants CAREER/2143752, DMS/1812197 and DMS/2153915. Vinayak Rao was supported by the National Science Foundation grants RI/1816499 and DMS/1812197. Imon Banerjee thanks Anamitra Chaudhuri for numerous insightful discussions and comments throughout the duration of this project. We thank the anonymous reviewers for their insightful comments and especially for pointing us towards the interesting application detailed in Section \ref{sec:control-design}.

% \sloppy
% \printbibliography

\vspace{0.1 cm}
\hrule
\vspace{0.5 cm}
\begin{footnotesize}
\textbf{Imon Banerjee} is an IEMS alumni fellow at Northwestern University. His current research encompasses mathematical statistics and stochastic processes, with applications in reinforcement learning. More broadly, he is interested in exploring the theoretical aspects of machine learning using tools from applied probability. His email address is imon750@gmail.com.

\textbf{Harsha Honnappa} is an Associate Professor of Industrial Engineering at Purdue University. His research interests as an applied probabilist encompass stochastic modeling, optimization and control, with applications to machine learning, simulation and statistical inference. His research is supported by the National Science Foundation, including an NSF CAREER award, DOD, and the Purdue Research Foundation. He is an editorial board member at Operations Research, Operations Research Letters and Queueing Systems journals.  His email address is {honnappa@purdue.edu} and his website is \url{https://engineering.purdue.edu/SSL}.

\textbf{Vinayak Rao}  is an associate professor in the Department of Statistics at the Purdue University, West Lafayette. His research interests include methodological, computational and theoretical aspects of Bayesian and nonparametric statistics. His email address is {varao@purdue.edu} and his website is \url{https://varao.github.io/}.    

\end{footnotesize}
\newpage
% \appendix
\begin{APPENDICES}
\vspace{1em} % Add vertical spacing before the heading
    \begin{center}
        \hrule height 1pt \vspace{0.5em} % Top horizontal rule
        {\Large\bfseries Online Companion File To Offline Estimation of Controlled Markov Chains: Minimaxity and Sample Complexity} % Large, bold heading
        \vspace{0.5em} \hrule height 1pt % Bottom horizontal rule
    \end{center}
    \vspace{1em} % Add vertical spacing after the heading
{\protect\NoHyper%
    \tableofcontents
    \addtocontents{toc}{\protect\setcounter{tocdepth}{2}}
    \protect\endNoHyper}%

\section{Index of notations}
\ 
\begin{table}[!ht]
    \centering
    \begin{tabular}{c|c|c|c|c|c}
            \hline
            Item & Description & Location & Item & Description & Location\\
            \hline
            $\chi$, $ \Ibb$& State and control spaces & Notations & $\Omega,\Fcal,\prob$ & A probability space  & Notations\\
            $X_i,a_i$ & Random variable for state/control & Notations& $d,k$ & Number of states and controls & Notations \\
            $\Hcal_i^j$ & $\sigma(X_i,a_i,\dots, X_j,a_j)$ & Notations & $\history_i^j$& A realisation of $\Hcal_i^j$ & Notations\\
            (s,l,t)& some state-control-state triplet & Notations & $\Fbb$& A filtration $\{\Fcal_i\}_{i\geq 1}$ & Notations\\
            $\tau_{s,l}\pow 1$ & First hitting time of $(s,l)$ & Def. \ref{def:return-time} & $\tau_{s,l}\pow 1$& $i$-th return time to $(s,l),i>1$& Def. \ref{def:return-time} \\ 
            $\bar \eta_{i,j}$ & The weak mixing coefficient & Eq. \ref{def:weak-mixing} & $\phi$ & The uniform mixing coefficient & Eq. \ref{def:uniform-mixing}\\
            $N_s\pow l, N_{s,t}\pow l$ & Count statistics & Eq. \ref{eq:pair-visit-count} & $T$ & Bound on expected return time & Assm. \ref{assume:return_time}\\
            $\|\Delta_m\|$ & sum of $\bar\eta_{i,j}$'s & Assm. \ref{assume:eta-mix} & $\constant_\Delta$ & Bound on $\|\Delta_m\|$ & Assm. \ref{assume:eta-mix} \\
            $\rho_s\pow l$ & $\sup_{1\leq i\leq m}\prob(X_i=s,a_i=l)$ & Prop. \ref{prop:tail-bound-peligrad}& $\rho_\star$& $\sup_{s,l} \rho_{s}\pow l$ & Eq. \ref{sec:minimax-sc}\\
            $\|\cdot\|_\infty^*$ & $\sup_{l\in\Ibb} \| \cdot \|_\infty$ & Sec. \ref{sec:prf-skt-minmax} & & &\\
            \hline
    \end{tabular}
    \caption{Notations}
    \label{tab:index}
\end{table}

\section{Technical desiderata}\label{sec:tech-des}

\subsection{Concentration Inequalities}\label{sec:concentration-ineq}
\paragraph{Hoeffding's concentration inequality for mixing sequences.} 
For two sequences of real numbers $\tilde{x}=\datas$ and $\tilde{y}=\datay$, define the {Hamming metric} $d$ between them as $d(\tilde{x},\tilde{y}) := \sum_i \indicator[x_i\neq y_i]$. 
{Observe that for any two sequences $\tilde{x}$ and $\tilde{y}$ we have, %for a fixed argument $\tilde{x}=\datas$, 
\begin{align*}
    |N_s(\tilde{x})-N_s(\tilde{y})| & = \left|\sum_{i=1}^m \indicator[x_i=s]-\sum_{i=1}^m \indicator[y_i=s] \right|\\
    & \leq \sum_{i=1}^m \left| \indicator[x_i=s]-\indicator[y_i=s] \right|\\
    & \leq \sum_{i=1}^m \indicator[x_i\neq y_i]= d(\tilde{x},\tilde{y}).
\end{align*}
Therefore, the function  $N_s(\tilde{x}):=\sum_{i=1}^m \indicator[x_i=s]$ is $1$-Lipschitz in Hamming metric.
This allows us to specialize Theorem 1.1 from \cite{kontorovich2008concentration} to our current setting.}
\begin{lemma}~\label{lemma:kontorovich}
    Let $\indexeddata$ be a sequence of stochastic random variables on a finite state space $\chi\times\Ibb$. 
    Then, for any $t > 0$ we have,
  \begin{align}
      \prob(|N_s\pow{l}-\expec[N_s\pow{l}]|>t)\leq 2\exp\left(-\frac{t^2}{2m\|\Delta_m\|^2}\right),
  \end{align}
where $\|\Delta_m\|=\underset{1\leq i\leq m}{\max} (1+ \Bar{\eta}_{i,i+1}+ \Bar{\eta}_{i,i+2}+\dots \Bar{\eta}_{i,m})$, and $\Bar{\eta}_{i,j}$ is as defined in \cref{def:weak-mixing}.
\end{lemma}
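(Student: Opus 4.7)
The plan is to apply the concentration inequality of Kontorovich--Ramanan (Theorem 1.1 of \cite{kontorovich2008concentration}) to the specific function $N_s^{(l)}$, after verifying the two ingredients required by that theorem: (i) Lipschitz regularity of the test function in the Hamming metric, and (ii) summability of the mixing coefficients through the matrix norm $\|\Delta_m\|$. Both ingredients are either already established in the paragraph preceding the lemma or are built into the definition of $\bar\eta_{i,j}$, so the proof is essentially a reduction rather than a from-scratch argument.

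First, I would extend the Lipschitz observation made above the lemma from the state process $\{X_i\}$ to the joint state--control process $\{(X_i,a_i)\}$. For two sample paths $\tilde z = ((x_0,a_0),\dots,(x_m,a_m))$ and $\tilde z' = ((y_0,b_0),\dots,(y_m,b_m))$ in $(\chi\times\Ibb)^{m+1}$, the same argument shows
\[
\bigl|N_s^{(l)}(\tilde z) - N_s^{(l)}(\tilde z')\bigr|
\;\le\; \sum_{i=1}^m \bigl|\indicator[x_i=s,a_i=l] - \indicator[y_i=s,b_i=l]\bigr|
\;\le\; \sum_{i=1}^m \indicator[(x_i,a_i)\neq(y_i,b_i)]
\;=\; d_H(\tilde z,\tilde z'),
\]
so $N_s^{(l)}$ is $1$-Lipschitz on $(\chi\times\Ibb)^{m+1}$ equipped with the Hamming metric.

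Second, I would note that the mixing coefficients $\bar\eta_{i,j}$ defined in \eqref{def:weak-mixing} are exactly (an upper bound for) the $\eta$-mixing coefficients required by Kontorovich's framework, since they are taken as a supremum over all measurable cylinder events in the future coordinates, uniformly over the past history and over the two conditioning values of the current coordinate $(X_i,a_i)$. In particular, the upper-triangular matrix with ones on the diagonal and entries $\bar\eta_{i,j}$ above the diagonal has $\ell_\infty$-operator norm (equivalently, maximum row sum) equal to $\|\Delta_m\| = \max_{1\le i\le m}(1+\bar\eta_{i,i+1}+\dots+\bar\eta_{i,m})$, which is precisely the quantity appearing in the lemma.

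Third, I would invoke Theorem 1.1 of Kontorovich--Ramanan. That result yields, for any $1$-Lipschitz function $\varphi$ of a stochastic sequence of length $m$ taking values in a finite product space,
\[
\prob\bigl(|\varphi - \expec[\varphi]|>t\bigr)
\;\le\; 2\exp\!\left(-\frac{t^2}{2\,m\,\|\Delta_m\|^2}\right),
\qquad t>0.
\]
Applying this with $\varphi = N_s^{(l)}$ yields the claimed bound verbatim. The main subtlety, and hence the step I would verify carefully, is the alignment of conventions: the mixing coefficients in \cite{kontorovich2008concentration} are typically defined coordinate-wise, while $\bar\eta_{i,j}$ here is defined for the joint state--control pair process. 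Since $\bar\eta_{i,j}$ is taken as a supremum over events depending on all future coordinates $(X_j,a_j),\dots,(X_m,a_m)$ and conditions on the full past history $\History_0^{i-1}$, it upper-bounds Kontorovich's single-coordinate $\eta$-mixing coefficient for the alphabet $\chi\times\Ibb$, so the inequality is preserved after substitution. No other estimates are required.
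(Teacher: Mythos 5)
Your proposal is correct and follows essentially the same route as the paper: the paper's justification for this lemma is precisely the paragraph preceding it, which establishes $1$-Lipschitzness of the visit count in the Hamming metric and then specialises Theorem 1.1 of Kontorovich--Ramanan with $\|\Delta_m\|$ as the mixing-matrix norm. Your only addition is to spell out the extension of the Lipschitz bound from the state process to the joint state--control process and the alignment of the $\bar\eta_{i,j}$ conventions, both of which the paper leaves implicit.
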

\begin{remark}
It is clear that analysing the tail properties of $N_s\pow{l}$ requires an appropriate concentration inequality.
The result from \citet{kontorovich2008concentration}, as recalled in Lemma \ref{lemma:kontorovich} is a refinement of the celebrated Azuma-Hoeffding's inequality \citep{azuma1967weighted} where the martingale differences are linked to the underlying mixing coefficients of the stochastic process. Observe that the inequality is guaranteed to be as tight as Azuma-Hoeffding's inequality \cite[Theorem 7.5]{kontorovich2008concentration}. However, as we discuss is Remark \ref{remark:exp-mix}, it is not tight enough to achieve minimaxity.
\end{remark}

{   
\paragraph{Chernoff's concentration inequality for geometrically mixing sequences.} We begin this subsection with the following lemma whose proof proceeds by a careful book-keeping of inverses of functions and is deferred to Section \ref{sec:prf-fmxcfd}.
\begin{lemma}~\label{lemma:funcmix-coefbd}
For some $i,j$, let $T_\star\in\lc0,1\rc^{m-j+1}$ and $\history_\star\in\lc0,1\rc^{i+1}$. For the convenience of notation, denote $\lp\indicator[X_i,a_i],\dots \indicator[X_0,a_0]\rp$ by $\indicator(\History_0^i)$. Then,
\begin{small}
\begin{align*}
    \phi_{i,j} & = \sup_{\substack{\Tbb,\history_0^{i-1}}}\left|\prob\left((X_m,a_m,\dots, X_j,a_j)\in \mathbb{T}|\History_0^{i}=\history_0^{i}\right)-\prob\left(X_m,a_m,\dots,X_j,a_j\in\mathbb{T}\right)\right|\\
    & \geq \sup_{\substack{\Tbb_\star,\history_\star}}\left|\prob\left(\indicator[X_m=s,a_m=l],\dots, \indicator[X_j=s,a_j=l])\in \mathbb{T}_\star|\indicator\lp\History_0^{i}\rp=\history_\star\right)\right.\\
    & \quad \left.-\prob\lp\indicator[X_m=s,a_m=l],\dots, \indicator[X_j=s,a_j=l])\in \mathbb{T}_\star\rp\right|.
\end{align*}
\end{small}
\end{lemma}
\begin{remark}
The uniform mixing coefficients of random variables form an upper bound to the uniform mixing sequences of the indicator functions of the same random variables. 
\end{remark}
Using the aforementioned fact, we specialize Theorem 2 from \cite{merlevede2009bernstein}. 
As in the proof of Proposition \ref{prop:tail-bound-peligrad}, we drop s and l, and denote $\indicator[X_i=s,a_i=l]$ by $I_{i}$. Then we have the following lemma.
\begin{lemma}~\label{lemma:peligrad}
Let $(X_m,a_m,\dots,X_0,a_0)$ be a sequence of stochastic random variables on a finite state space $\chi\times\Ibb$. Assume that there exists a positive constant $\constant_{cof}>0$ such that the uniform mixing coefficient $\phi_{i,i+j}$ satisfies 
\[
\sup_{i} \phi_{i,i+j}\leq e^{-j\times \constant_{cof}}.
\]
Then, there exists a constant $\constant_{pel}$ which depends only upon $\constant_{cof}$ such that for all $m\geq 2$
\begin{small}
\begin{equation}
    \prob\lp \lv N_s\pow{l}-\expec[N_s\pow{l}]\rv\geq t \rp\leq \exp\lp-\frac{\constant_{pel}t^2}{m\sup_{i\geq 0}\lp \Var(I_{i}) +2\sum_{j\geq i}|\Cov(I_{i},I_{j})|\rp+1+t(\log m)^2}\rp
\end{equation}
\end{small}

\end{lemma}
\begin{proof} \
The proof follows from \cite[eqn. 2.1]{merlevede2009bernstein} by observing from \citet[eqn. 1.12]{bradley} that uniform mixing coefficients form a natural upper bound to strong mixing coefficients (defined as in \citet[eqn. 1.1]{bradley}), and then observing that for indicator variables $M=1$.
\end{proof}
\begin{remark}
This gives us a cleaner version of the original inequality appropriated to our current setup.
\end{remark}
}
{To control the $\Cov(I_i,I_j)$ terms in Lemma \ref{lemma:peligrad} we adapt the following H\"older's inequality for uniformly mixing sequences by setting $p=1$ and $q=\infty$ in \citet[Theorem A.6]{hall2014martingale}.
\begin{lemma}~\label{lemma:hall-holder's}
For any two real functions $f$, and $g$,
\begin{align*}
    |\Cov(f(X_{j},a_j),g(X_{i},a_{i}))|\leq \phi_{i,j}\expec\lv 
    f(X_{j},a_j)-\expec[f(X_{j},a_j)]\rv\ \mathrm{ess }\sup_{X_i,a_i}\lv g(X_{i},a_i)\rv.
\end{align*}
It follows from Lemma \ref{lemma:mixing-lemm} that,
\begin{align*}
    |\Cov(f(X_{j},a_j),g(X_{i},a_{i}))|\leq \bar\eta_{i,j}\expec\lv 
    f(X_{j},a_j)-\expec[f(X_{j},a_j)]\rv\ \mathrm{ess }\sup_{X_i,a_i}\lv g(X_{i},a_i)\rv.
\end{align*}
\end{lemma}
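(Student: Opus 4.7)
The proof is a direct invocation of Hall-Heyde Theorem A.6 --- the classical $\phi$-mixing covariance (Ibragimov-type) inequality --- specialized to H\"older exponents $p = 1$ and $q = \infty$, together with Lemma \ref{lemma:mixing-lemm} for the second inequality. The plan has three steps.

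First, I would verify that $\phi_{i,j}$ as defined in equation \ref{def:uniform-mixing} upper-bounds the classical $\phi$-mixing coefficient between the two $\sigma$-algebras relevant to this covariance: the ``past'' $\sigma(\History_0^i)$ (which contains $\sigma(X_i, a_i)$ and hence $g(X_i, a_i)$), and the ``future'' $\sigma(\lc (X_k, a_k) \rc_{k\geq j})$ (which contains $\sigma(X_j, a_j)$ and hence $f(X_j, a_j)$). This is essentially immediate because the supremum in the definition of $\phi_{i,j}$ already ranges over all cylinder events $\Tbb$ in the future and over all atoms $\history_0^i$ of the past $\sigma$-algebra, which together form generating classes; a standard monotone-class argument then propagates the bound from cylinders to all measurable events.

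Second, I would rewrite the covariance by centering $f$ (which leaves the covariance unchanged):
\begin{align*}
\Cov(f(X_j,a_j), g(X_i,a_i)) = \expec\lb (f(X_j,a_j) - \expec f(X_j,a_j))\, g(X_i,a_i) \rb,
\end{align*}
and then invoke Hall-Heyde Theorem A.6 with exponents $(p,q) = (1,\infty)$, where the $L^1$ slot is filled by the centered future variable $f(X_j,a_j) - \expec f(X_j,a_j)$ and the $L^\infty$ slot by the past variable $g(X_i,a_i)$. This immediately produces the first inequality. The third step is trivial: Lemma \ref{lemma:mixing-lemm} gives $\phi_{i,j} \leq \bar\eta_{i,j}$, and substituting on the right-hand side of the first inequality yields the second.

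The main technical subtlety --- and the one thing I would want to nail down carefully --- lies in reconciling normalization conventions, since various statements of the Ibragimov-Hall-Heyde bound carry a prefactor of $1$ or $2$ depending on whether $\phi$ is defined as the half or the full total variation between the conditional and unconditional laws. Aligning the convention of Hall-Heyde Theorem A.6 with the definition \ref{def:uniform-mixing} (which takes the supremum over single events, hence is naturally half-TV) is exactly the bookkeeping that produces the constant $1$ in front of $\phi_{i,j}$ as stated.
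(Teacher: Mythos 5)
Your proposal matches the paper's treatment exactly: the paper offers no independent proof of this lemma, simply citing \citet[Theorem A.6]{hall2014martingale} with $p=1$ and $q=\infty$ for the first inequality and then substituting $\phi_{i,j}\leq\bar\eta_{i,j}$ from Lemma \ref{lemma:mixing-lemm} for the second. Your additional care about the generating classes for the two $\sigma$-algebras and the half-versus-full total-variation normalization of $\phi_{i,j}$ goes beyond what the paper records, but it is the right bookkeeping and does not change the route.
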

}
The following lemma serves to decompose the mixing coefficients of a bivariate random variable into the sum of mixing coefficients of the corresponding univariate random variables. Its proof (deferred to Section \ref{sec:prf-tvbd}) proceeds by repeated applications of triangle inequality and carefully taking suprema. Our next technical lemma will be useful in the proof of Lemma \ref{lemma:delta-bound}.
\begin{lemma}~\label{lemma:tv-bound}
Let $X,Y,$ and $Z$ be 3 discrete random variables in the sample space $\Omega$. Then
    \begin{align*}
        \lVert \probl(X,Y|Z=z_1)& -\probl(X,Y|Z=z_2) \rVert_{TV} \leq \lVert \probl(Y|z=z_1)-\probl(Y|Z=z_2) \rVert_{TV}\\
        & \qquad +\sup_y\lVert \probl(X|Y=y,Z=z_1)-\probl(X|Y=y,Z=z_2) \rVert_{TV}.
    \end{align*}
\end{lemma}
Two finite-state Markov chains are of the same \emph{type} if their transition matrices have zeros in the same entries.
The following lemma is found in \citet[Theorem 1]{wolfowitz1963products}.
\begin{lemma}~\label{prop:wolfowitz}
Suppose that $\indexeddata$ is a sample from a time-inhomogeneous Markov chain. Then $\bar\theta_{i,j}\leq e^{-\upsilon|j-i|}$, where $\upsilon$ is a constant depending only on the number of types of matrices in the set $\Mbb$.
\end{lemma}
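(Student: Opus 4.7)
The plan is to establish exponential decay of $\bar\theta_{i,j}$ by identifying it with a Dobrushin-type coefficient of ergodicity of an inhomogeneous matrix product, and then invoking a classical scrambling-products argument attributable to Wolfowitz.

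First, I would recast $\bar\theta_{i,j}$ in matrix-product language. Since the chain is time-inhomogeneous, the controls $a_i=l_i$ are deterministic, so the state process conditional on $(X_i,a_i)=(s,l)$ evolves under the product $P_{i\to j} := M^{(l_i)} M^{(l_{i+1})} \cdots M^{(l_{j-1})}$. Consequently $\bar\theta_{i,j}$ equals the maximum, over pairs of initial rows, of the total variation distance between the corresponding rows of $P_{i\to j}$; this is precisely the Dobrushin ergodicity coefficient $\delta(P_{i\to j})$. In particular the bound we seek reduces to showing $\delta(P_{i\to j}) \leq e^{-\upsilon(j-i)}$.

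Second, I would exploit submultiplicativity of the Dobrushin coefficient, namely $\delta(PQ)\leq\delta(P)\delta(Q)$ for stochastic $P,Q$. This reduces the problem to proving a uniform scrambling statement: there exist an integer $N$ and $\rho<1$ (depending only on the set $\Mbb$) such that every product of $N$ matrices chosen in any order from $\Mbb$ satisfies $\delta(\cdot)\leq\rho$. Chaining $\lfloor (j-i)/N\rfloor$ such windows then yields the geometric decay, with $\upsilon = N^{-1}\log(1/\rho)$.

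Third, to obtain the uniform scrambling bound I would follow Wolfowitz's enumeration argument. Two matrices have the same "type" iff they share the same zero-pattern. Since $|\Mbb|$ is finite, there are finitely many types, and therefore only finitely many distinct zero-patterns that can arise as products of matrices from $\Mbb$ of any length. The crucial combinatorial observation is that once a length-$N$ product becomes scrambling (every pair of rows shares a column with both entries positive), it has $\delta<1$; and by examining all type-sequences of length $\leq $ (number of types), one shows scrambling is achieved within bounded $N$. This is exactly the content of Theorem~1 of \cite{wolfowitz1963products}, which delivers both $N$ and $\rho$ as functions of the number of types only.

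The main obstacle is the uniform scrambling step: one must certify that no bad type-sequence can persistently defeat scrambling, and that the resulting $\rho$ does not depend on the particular inhomogeneous sequence $\{l_i\}$, only on the type count. Since the statement is cited directly from Wolfowitz, the substantive work in our proof is just the identification $\bar\theta_{i,j}=\delta(P_{i\to j})$ and the submultiplicativity chaining, after which the cited theorem applies verbatim.
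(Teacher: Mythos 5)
Your proposal is correct and follows essentially the same route as the paper, which proves nothing here and simply cites \citet[Theorem 1]{wolfowitz1963products}; your identification of $\bar\theta_{i,j}$ with the Dobrushin ergodicity coefficient of the product $M^{(l_i)}\cdots M^{(l_{j-1})}$ and the submultiplicative chaining over scrambling windows is exactly the reduction that citation presupposes. The one caveat, which the paper's statement also leaves implicit, is that Wolfowitz's theorem requires every finite product of matrices from $\Mbb$ to be SIA (stochastic, indecomposable, aperiodic) --- without it the conclusion fails (e.g., for permutation matrices) --- and in the paper this hypothesis is supplied downstream by the positivity assumption on the transition matrices.
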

As a follow-up remark, we note from \citet{wolfowitz1963products} that whether a transition matrix belongs to an ergodic and irreducible Markov chain depends solely on the type. 
The following lemma extends the previous one to the case of controlled Markov chains with non-stationary Markov controls.
\begin{lemma}\label{lemma:markov-assume4}
Let $\indexeddata$ be a sample from a controlled Markov chain with transition matrices $M\pow{l}$ and Markov controls such that for any $s\in\chi$,$l\in\Ibb$, $\history_{0}^{i-1}\in(\chi\times\Ibb)^{i-1}$ and $i\in\naturalset\cup\{0\}$,
\[
P_{s,l}\pow{i}:=\prob\lb a_i=l\gn X_{i}=s,\History_0^{i-1}=\history_0^{i-1}\rb=\prob\lb a_i=l\gn X_{i}=s\rb.
\]
If there exists a $\chi_0\subseteq\chi$ and $M_{min>0}$ such that,
\[
\min_{s\in\chi,l\in\Ibb} M_{s,t}\pow{l}\geq M_{min}, \quad \forall\ t\in\chi_0,
\]
then, for this controlled Markov chain,
\[
\bar\theta_{i,j} \leq \lp1-|\chi_0|M_{min}\rp^{j-i-1}.
\]
As a consequence, this CMC satisfies Assumption \ref{assume:chain-mix} with $\constant_\theta=1/(|\chi_0|M_{min})$.
\end{lemma}
The proof of this Lemma can be found in Section \ref{sec:prf-mkvass4}.
\begin{remark}~\label{remark:pel-discuss}
If the number of types of matrices is independent of $k$, observe it follows from Lemma \ref{prop:wolfowitz} that there exists a class of Inhomogenous Markov chains for which $\constant_{cof}$ is independent of the parameters of the Markov chain, where the definition of $\constant_{cof}$ is as in Lemma \ref{lemma:peligrad}. This naturally implies that $\constant_{pel}$ is a universal constant for that class.

Furthermore, it can be derived from Lemma \ref{lemma:markov-assume4} that whenever $\|\chi_0\|M_{min}$ is independent of $d,k$ there exists a CMC with non-stationary Markov controls, which satisfy a similar property.
\end{remark}

\begin{lemma}~\label{lemma:mm-p1cond}
Let $\indexeddata$ be a sample from a controlled Markov chain belonging to the class of CMC's as defined in \cref{eq:mm-p1eq4}. Then, $\indexeddata$ satisfies Assumptions \ref{assume:sets},\ref{assume:return_time},\ref{assume:control-mixing}, and \ref{assume:chain-mix}. 
\end{lemma}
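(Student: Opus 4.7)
The plan is to verify each of the four assumptions separately, exploiting the explicit structure of the controlled Markov chain defined in \cref{eq:mm-p1eq4}. Since that construction is the one used in Case~II of the minimax lower bound (\Cref{sec:minmax-ex}), the transition matrices and the control law are chosen to induce a well-controlled ``touring'' behaviour over $\chi\times\Ibb$, and this is what we will cash in on.

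First I would verify \Cref{assume:sets}. The construction in \cref{eq:mm-p1eq4} specifies, at each time point $i$, a distribution over state-control pairs that is bounded away from being degenerate. Concretely, I would identify a distinguished subset $\Scal_i\subset\chi\times\Ibb$ (most naturally the pairs visited during the current ``tour'') and check by direct computation that $\zeta_2\le \prob((X_i,a_i)\in\Scal_i)\le \zeta_1$ with constants depending only on $d$ and $k$, not on $i$ or $m$. Next I would handle \Cref{assume:return_time}: since the controls in the construction are designed so that every state-control pair is revisited within a uniformly bounded expected time, I would bound $\E[\tau^{(i)}_{s,l}\,|\,\Fcal_{\sum_p\tau^{(p)}_{s,l}}]$ by iterating the one-step visiting probabilities along a tour and summing a geometric-type series. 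This should yield a uniform constant $T$ that is $O(dk)$ in the worst case, matching the remark after \Cref{lemma:expec-bound}.

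For \Cref{assume:control-mixing}, the natural route is to observe that in the construction of \cref{eq:mm-p1eq4} the controls are either deterministic or a fixed function of the current state (i.e.\ Markovian). In either case the remark following \Cref{assume:control-mixing} applies and gives $\gamma_{p,j,i}=0$ uniformly, so the assumption holds trivially with $\constant=0$. For \Cref{assume:chain-mix}, I would invoke \Cref{prop:wolfowitz}: the paired process $(X_i,a_i)$ is then a time-inhomogeneous Markov chain whose transition matrices all belong to a finite number of types (determined by the construction), which gives $\bar\theta_{i,j}\le e^{-\upsilon|j-i|}$ for some $\upsilon>0$ and hence summability of $\sum_{j>i}\bar\theta_{i,j}$ with a constant $\constant_\theta$ independent of $m$. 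If the construction in \cref{eq:mm-p1eq4} involves state transitions with a uniform minorization over a subset $\chi_0$, \Cref{lemma:markov-assume4} supplies a direct bound $\constant_\theta=1/(1-|\chi_0|M_{\min})$ as an alternative.

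The main obstacle I expect is \Cref{assume:return_time}: the construction in \cref{eq:mm-p1eq4} is tailored to make touring times \emph{large} (that is the source of the $\constant_T$-type factor in the minimax sample complexity), so one must argue carefully that they are nonetheless almost-surely bounded in conditional expectation uniformly in $i$. The key is that, although the expected tour length grows with $dk$, after a tour completes the chain resets to a known law on the uncovered pairs, giving a clean Markovian recursion that can be summed. Once that uniform bound is established, plugging the constants $\zeta_1,\zeta_2,T,\constant,\constant_\theta$ produced above into \Cref{thm:minimax} (combined with \Cref{lemma:delta-bound} to pass from the state and control mixing bounds to $\|\Delta_m\|$) completes the verification.
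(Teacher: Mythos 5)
Your plan rests on a misidentification of the object being analysed. The class in \cref{eq:mm-p1eq4} is $\Mcal'=\lc M_{\sigma}:\sigma\in\Sigma\rc$ from Part 1 (Case I) of the minimax proof: a \emph{single} Markov chain on $d+1$ states with one control, whose transition matrix $M_{\sigma}$ in \cref{eq:mm-p1eq5} is entrywise positive. It is not the ``touring'' construction of \Cref{sec:minmax-ex} used in Case II --- that class is $\Hbb_\iota\times\lc P\pow{0}\rc$, and its assumption-verification is the separate \Cref{prop:mm-ex2prop1}. Because of this, the difficulty you single out as the main obstacle (return times being hard to control because the construction is ``tailored to make touring times large'', with a reset ``to a known law on the uncovered pairs'') does not arise here: the touring time $\Tbb$ of \Cref{lemma:cover-time} plays no role for this class. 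For the actual class, $M_{\sigma}$ positive implies the chain is irreducible and aperiodic, hence uniformly ergodic on a finite state space, and \Cref{assume:return_time} follows at once from Kac's theorem with $T=\sup_s 1/\pi_s\leq d/p_\star$ --- exactly the computation carried out later in Part 1 via \cref{eq:mm-p1eq6}.

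The paper's proof is correspondingly a two-line reduction: one control, positive transition matrix, hence geometrically ergodic, and then the verification proceeds as in \Cref{prop:stationary-mdp}. Within that reduction, the tools you propose for \Cref{assume:control-mixing} and \Cref{assume:chain-mix} are serviceable: the single control is deterministic, so $\gamma_{p,j,i}=0$ and $\constant=0$, and chain mixing follows either from uniform ergodicity, from \Cref{prop:wolfowitz}, or from \Cref{lemma:markov-assume4} with $\chi_0=\chi$. But your treatment of \Cref{assume:sets} and \Cref{assume:return_time} (distinguished sets of ``pairs visited during the current tour'', geometric summation over tours, an $O(dk)$ return-time constant) is aimed at the wrong chain and would have to be replaced by the stationary-ergodic argument. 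Finally, the lemma only asks you to verify the four assumptions; there is no step in which the resulting constants need to be plugged back into \Cref{thm:minimax}.
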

\begin{proof} \
This CMC has only $1$ control and a single positive transition matrix positive making it a geometrically ergodic Markov chain \cite[Theorem 15.0.1]{meyntweedie}. The rest of the proof follows similarly to the proof of \Cref{prop:stationary-mdp}.
\end{proof}
\begin{lemma}~\label{lemma:counter-example} Let $(X_i,a_i)$ be an inhomogenous Markov chain with $\chi=\lc1,2,4,5\rc$ and $\Ibb=\lc1,2\rc$ such that $\prob(a_0=1)=1$, $\prob(a_1=2)=1$, $\prob(a_2=1)=1$ and so on. Moreover, $a_i$ depends only upon time point $i$ and is independent of $(X_i,\History_0^{i-1})$. The transition matrices are given by
\begin{align*}
    M\pow{1}=\begin{bmatrix}
    1/2 & 1/2 & 0 & 0\\
    1/2 & 1/2 & 0 & 0\\
    0 & 0 & 1/2 & 1/2\\
    0 & 0 & 1/2 & 1/2\\
    \end{bmatrix} & \textit{ and } M\pow{2}=\begin{bmatrix}
    0 & 0 & 1/2 & 1/2\\
    0 & 0 & 1/2 & 1/2\\
    1/2 & 1/2 & 0 & 0\\
    1/2 & 1/2 & 0 & 0\\
    \end{bmatrix}.
\end{align*}
$X_0$ is drawn uniformly from $\{1,2,3,4\}$. Then, for this inhomogenous Markov chain $\bar \theta_{i,j}=1$ for any time points $j>i$.
\end{lemma}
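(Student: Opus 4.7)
The plan is to exploit the block structure of $M^{(1)}$ and $M^{(2)}$: the transition matrix $M^{(1)}$ preserves the partition $\{1,2\}\cup\{3,4\}$ of the state space, while $M^{(2)}$ swaps the two blocks. Since the controls $a_i$ are deterministic and alternate between $1$ and $2$, the block (i.e.\ whether $X_j\in\{1,2\}$ or $X_j\in\{3,4\}$) is a deterministic function of $(i,j)$ together with the block containing $X_i$.

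First I would fix $i<j$ and choose the two conditioning pairs $(s_1,l_1):=(1,a_i)$ and $(s_2,l_2):=(3,a_i)$; both have positive probability since $X_0$ is uniform and the controls are deterministic, and clearly $(s_1,l_1)\neq(s_2,l_2)$. Let $A:=\{1,2\}$ and $B:=\{3,4\}$, and let $\kappa$ denote the number of indices $r\in\{i,i+1,\dots,j-1\}$ with $a_r=2$ (this number depends only on $i$ and $j$). Because $M^{(1)}$ sends $A\to A$ and $B\to B$ while $M^{(2)}$ sends $A\to B$ and $B\to A$, a simple induction on the number of steps between $i$ and $j$ shows that, almost surely,
\begin{align*}
 X_i\in A \ \Longrightarrow\  X_j\in \begin{cases} A, & \kappa \text{ even,}\\ B, & \kappa \text{ odd,}\end{cases}
 \qquad
 X_i\in B \ \Longrightarrow\  X_j\in \begin{cases} B, & \kappa \text{ even,}\\ A, & \kappa \text{ odd.}\end{cases}
\end{align*}
Consequently, $\mathcal{L}(X_j\mid X_i=1,a_i=l_1)$ and $\mathcal{L}(X_j\mid X_i=3,a_i=l_2)$ are supported on disjoint subsets of $\chi$.

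Since two probability measures with disjoint supports have total variation distance equal to $1$, this choice of $(s_1,l_1),(s_2,l_2)$ yields $\bar\theta_{i,j}\geq 1$. The reverse inequality $\bar\theta_{i,j}\leq 1$ is automatic because the total variation distance between probability measures never exceeds $1$, so $\bar\theta_{i,j}=1$ for every $j>i$. The only step requiring any real care is the bookkeeping of which block $X_j$ lies in as a function of the parity of $\kappa$, and this is entirely mechanical given the explicit forms of $M^{(1)}$ and $M^{(2)}$; no probabilistic obstacle arises.
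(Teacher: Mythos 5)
Your proposal is correct and follows essentially the same route as the paper: pick $s_1=1$ and $s_2=3$ with the (deterministic) common control, observe that the two conditional laws of $X_j$ are supported on disjoint blocks $\{1,2\}$ and $\{3,4\}$, and conclude the total variation distance is exactly $1$. The only difference is that you spell out the multi-step block-parity bookkeeping via the count $\kappa$, which the paper compresses into a one-line ``similarly for any $j>i$.''
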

\begin{proof} \ 
We observe that 
\[
\bar \theta_{i,i+1} = \underset{\substack{s_1,s_2\in\chi,l_1,l_2\in \Ibb,\\ \prob(X_i=s_1,a_i=l_1)>0,\\ \prob(X_i=s_2,a_i=l_2)>0  }}{\sup} \| \probl\lp X_j|X_i=s_1,a_i=l_1\rp-\probl\lp X_j|X_i=s_2,a_i=l_2\rp \|_{TV}
\]
is only well-defined if $l_1=l_2$. Let $s_1=1$, and $s_2=3$. Then,
\begin{align*}
    \bar \theta_{i,i+1} \geq  | \prob\lp X_j\in\{1,2\}|X_i=1,a_i=l_1\rp-\prob\lp X_j\in\{1,2\}|X_i=3,a_i=l_1\rp | = 1.
\end{align*}
Thus, $\bar \theta_{i,i+1}=1$. Similarly, $\bar \theta_{i,j}=1$ for any $j>i$ and 
\[
\sup_{1\leq i\leq \infty}\sum_{j=i+1}^\infty \bar \theta_{i,j}=\infty.
\]
\end{proof}

{
\section{Numerical Examples}

In this section we illustrate the validity of our theorems through simulations. We take two positive transition matrices in a time inhomogenous Markov chain and vary the return times and the mixing coefficients. To demonstrate the correctness of the sample complexities recovered by our theorems, we do two simulations. (1) Increase $T$, and (2) Increase $\|\Delta\|$. We then rerun the experiment $100$ times. The y-axis notes the average error along with error bars. The x-axis (scaling factor) denotes a multiple of the number of samples scaled by the sample complexity $10*m/(T\Delta)$ derived from the theory. $\Delta$ was derived using Lemma \ref{lemma:markov-assume4}, and $T$ was derived using Proposition \ref{prop:inhomogenous-mc}. In the left hand side figure, we varied $T$ values as $T, 10T$, and $20T$ and varied $\Delta$ similarly on the right hand side figure. The results demonstrate that estimation errors overlap, indicating the correctness of our theory.
% \[
%  a_i = \begin{cases}
%      1 \text{ with probability }\frac{1}{i-1}\sum_{j=1}^{i-1}\indicator[a_i = 1]\\
%      0 \text{ otherwise. }
%  \end{cases}
% \]

% Then to demonstrate the effectiveness of controls on the learning rate, we design a control sequence and then compare the effectiveness of learning the CMC to learning the two transition matrices individually.
}

\begin{figure}[htbp]
    \centering
    \begin{minipage}{0.48\textwidth}
        \centering
        \includegraphics[width = \textwidth]{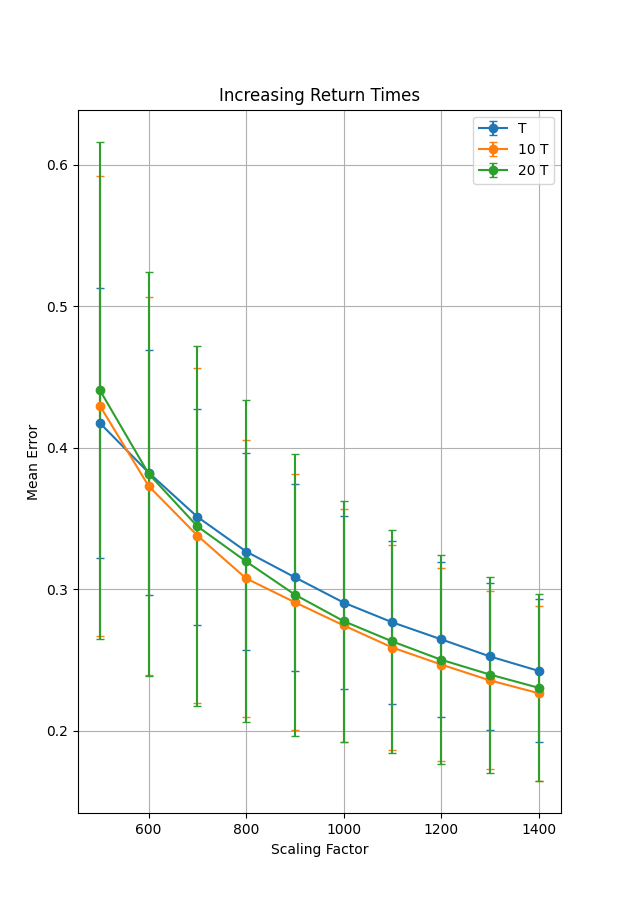}
        \label{fig:return-times}
    \end{minipage}\hfill
    \begin{minipage}{0.49\textwidth}
       \centering
        \includegraphics[width = \textwidth]{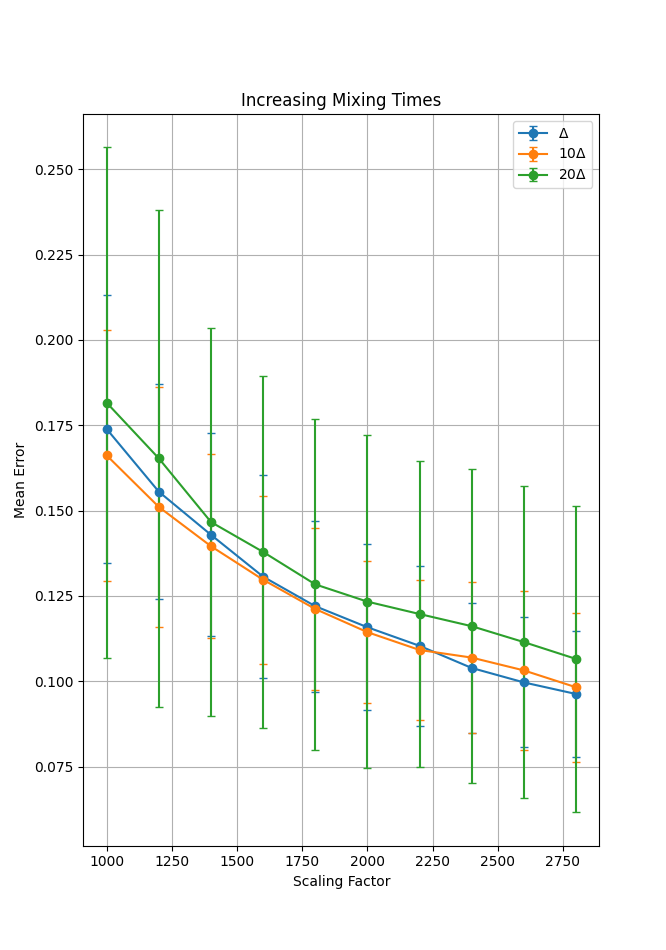}
        \label{fig:mixing-times}
    \end{minipage}
    % \begin{minipage}{0.45\textwidth}
    %    \centering
    %     \includegraphics[width = \textwidth]{figures/mixing_vs_non_mixing.png}
    %     \label{fig:mixing-vs-not}
    % \end{minipage}
    \caption{\textbf{X-axis:} Scaling Factor. \textbf{Y-axis:} Mean error. \textbf{Legend:} Increasing $T$ or $\Delta$. \textbf{Left:} Plot of estimation error as the return times $T$ increase. \textbf{Right:} The same as $\Delta$ increase (depicted by lines of different colors). Note that the bars represent 2.5\%-97.5\% quantiles of the estimation errors across the 100 simulations.}
\end{figure}

\section{Further Examples}~\label{sec:more-ex}
\subsection{Inhomogenous Markov chains}
A controlled Markov chain is said to be an inhomogenous Markov chain if there exists a sequence of constants $l_0,l_1,\dots$ such that for any non-negative integer $i$, history $\history_0^{i-1}\in(\chi\times\Ibb)^i$, and state $s\in\chi$
\[
\prob(a_i=l_i|\History_0^{i-1}=\history_0^{i-1},X_i=s)=1,
\]
\noindent We make the following assumptions on the controls and the transition probabilities. Let $\Tbb\in \naturalset$ be a known integer. Then, we assume that
\begin{align*}
    \sum_{i=j}^{\Tbb+j} \indicator[a_i=l]>1,\numberthis\label{eq:inhomogenous-eq2}
\end{align*}
for all $1\leq j\leq m-\Tbb$ and $l\in \Ibb$.
Moreover, we assume that the transition matrices $M\pow{l}$ are \textit{positive} stochastic matrices where every entry in the matrix is strictly positive. 
It follows from the fact that 
\begin{align*}
  &\prob\lp X_{i+1}=t|X_i=s,a_i=l\rp>M_{min}:=\min_{s,t,l} M_{s,t}\pow{l}, \quad \text{ and }& (I1)\\
  &\prob\lp X_{i+1}\neq t|X_i=s,a_i=l\rp<1-M_{min}.& (I2)
\end{align*}
\begin{proposition}~\label{prop:inhomogenous-mc}
Let $\indexeddata$ be a sample from an inhomogenous Markov chain satisfying $M_{min}>0$ and \cref{eq:inhomogenous-eq2}. Fix $\epsilon>0$, and $\delta \in (0,1)$. Then \Cref{thm:sample-complexity} holds with
\[
T=\lp1-M_{min}\rp^{1-1/\Tbb}/\lp{1-\lp1-M_{min}\rp^{\frac{1}{\Tbb}}}\rp,  \zeta_2=M_{min}, \zeta_1=M_{max}:=\max_{s,t,l}M_{s,t}\pow{l}, \textit{ and } \constant_{\theta}= e/(e-1).
\]
\end{proposition}
\begin{proof} \
The proof of this result can be found in \Cref{prf:inh-mc}.
\end{proof}
\begin{remark}
If $\{X_n\}$ is Markov, then it is well known that $Y_n=(X_n,\dots,X_{n-j+1})$ is a Markov process on state space $\chi^j$. Marginalising on the first component, conclusions can then be obtained about the distribution of $X_n$ given $X_{n-1},\dots,X_{n-j+1}$. It follows that our conclusions also hold for $j$th-order inhomogenous Markov chains.
\end{remark}
\begin{remark}
Observe that 
\[
\frac{(1-M_{min})^{1-1/\Tbb}}{1-(1-M_{min})^{\frac{1}{\Tbb}}}\leq\frac{1}{1-(1-M_{min})^{\frac{1}{\Tbb}}}\leq \frac{\Tbb}{M_{min}}\leq \frac{\Tbb M_{max}}{\max\{M_{max},1-M_{min}\}(1-\max\{M_{max},1-M_{min}\})}
\]
where the first inequality follows from Bernoulli's inequality \citep{brannan2006first} and the second inequality follows by multiplying the numerator and denominator by $M_{max}$ and some trivial algebra.
Comparing this to Proposition \ref{prop:markov-mdp} consequently implies that tighter upper bounds to expected return times can be derived when the controls are deterministic. In other words, a CMC with Markov non-deterministic controls require more samples to guarantee PAC-bounds than an inhomogenous Markov chain. 
\end{remark}

\subsection{Controlled Markov chains with episodic controls}
{
Controlled Markov chains with episodic controls appear frequently in offline reinforcement learning  \citep{rashidinejad2021bridging,li2022settling,li2022settlingh,yin2021towards} and constitute a fixed horizon $H$ after which the algorithm restarts. To formalize,
}
let $H $ be a fixed positive integer. For any positive integer $i>H $, let $H \pow{i}$ denote the greatest multiple of $H$ less than or equal to $i$. To be precize
\[
H \pow{i}:=H \lfloor\frac{i}{H}\rfloor.
\]
A sample $\indexeddata$ is said to be from a controlled Markov chain with episodic controls and horizon length $H $ if for any $s\in\chi$ and $\history_0^{i-1}\in(\chi\times\Ibb)^i$,
\begin{small}
\begin{align*}
\probl\lp a_i|X_i=s,\History_0^{i-1}=\history_0^{i-1}\rp = \probl\lp a_i|X_i=s, \History_{H ^{(i)}}^{i-1}=\history_{H \pow{i}}^{i-1}\rp,\numberthis\label{eq:episodic-eq2}
\end{align*}    
\end{small}
We assume that for all state-control pair $(t,l')$, there exists a time step $i^{(t,l')}<H $ such that 
\[
\min_{t,l'\in\chi\times\Ibb}\prob\lp X_{i^{(t,l')}}=t,a_{i^{(t,l')}}=l'|X_0=s, a_0=l \rp>0.
\]
for some state control pair $(s,a)$. Since this infimum is taken over finitely many objects there exists a probability $M_{min}$ such that
\[
\prob\lp X_{i^{(s,l)}}=s,a_{i^{(s,l)}}=l|X_0=s', a_0=l' \rp>M_{min}. 
\]
We observe that this assumption is similar to the so called \textit{persistence of excitation} (Assumption 1 in \cite{beck2012error}) and is close to the infinite updates assumption in \cite{yu2013boundedness} and \cite{yu2013q}. Our second technical assumption is that .  %To be precize, we assume
\begin{align*}
    (X_i,a_i)\overset{i.i.d.}{\sim} Uniform{\lc(1,1),\dots,(d,k)\rc}.\numberthis\label{eq:episodic-eq1}
\end{align*}
whenever $i$ is a multiple of $H$.
In other words, multiples of $H$ denote the start of a new episode.
\begin{remark}
We would like to point out that assuming \cref{eq:episodic-eq1} simplifies our calculations. However, our results also hold when the distribution is not uniform. The corresponding calculations are very similar (yet more tedious and notationally challenging). 
\end{remark}
\noindent Let $\zeta_1=M_{max}, \ \zeta_2=M_{min}, T=dkH -1,\ \constant=H ^2,\ \text{ and }\  \constant_{\theta}=H $. We can now state our main result about the sample complexity of a controlled Markov chain with episodic controls.
% there are $d^2k^2$ (finite) many choices of different state 
\begin{proposition}\label{prop:episodic-mdp}
Let $\indexeddata$ be a sample from a controlled Markov chain with episodic controls. Fix $\epsilon>0$, and $\delta \in (0,1)$. Then \Cref{thm:sample-complexity} holds with $\zeta_1=M_{max}, \ \zeta_2=M_{min}, T=dkH -1,\ \constant=H ^2,\ \text{ and }\  \constant_{\theta}=H$.

\end{proposition}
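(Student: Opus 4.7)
The plan is to verify the hypotheses of \Cref{thm:sample-complexity} with the advertised constants. By \Cref{remark:mixing-assume}, it suffices to check \Cref{assume:sets} and \Cref{assume:return_time} together with the sharper mixing assumptions \Cref{assume:control-mixing} and \Cref{assume:chain-mix}, since by \Cref{lemma:delta-bound} these imply \Cref{assume:eta-mix} with $\|\Delta_m\|\leq H^2+H+1$.

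\textbf{Verifying \Cref{assume:sets}.} I would take $\mathcal{S}_i=\{t\}\times\Ibb$ for an arbitrary fixed state $t\in\chi$. When $i$ is a multiple of $H$, $\prob[X_i=t]=1/d$ by the uniform restart \cref{eq:episodic-eq1}, and since each row of $M^{(l)}$ sums to $1$ with entries in $[M_{min},M_{max}]$, we have $M_{min}\leq 1/d\leq M_{max}$. For $i$ not a multiple of $H$, $\prob[X_i=t]=\sum_{s,l}\prob[X_{i-1}=s,a_{i-1}=l]\,M^{(l)}_{s,t}\in[M_{min},M_{max}]$ by positivity. Hence \Cref{assume:sets} holds with $\zeta_2=M_{min}$, $\zeta_1=M_{max}$.

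\textbf{Verifying \Cref{assume:return_time}.} I would couple $\tau^{(i)}_{s,l}$ with a geometric random variable counting the number of episode restarts after the previous visit until $(X,a)$ first lands on $(s,l)$ at a restart. Each restart is uniform on $\chi\times\Ibb$ by \cref{eq:episodic-eq1}, so the success probability is $1/(dk)$ and the expected restart count is $dk$. Since restarts are exactly $H$ steps apart and the first restart after the previous visit is at most $H$ steps away, the conditional expected return time is bounded by $dkH-1$, yielding $T=dkH-1$.

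\textbf{Verifying \Cref{assume:control-mixing} and \Cref{assume:chain-mix}.} By \cref{eq:episodic-eq2}, $a_p$ depends on history only through $\History_{H^{(p)}}^{p-1}$, so $\gamma_{p,j,i}=0$ unless $H^{(p)}<i$, which forces $p<i+H$; combined with $p\geq i+j+1$ this requires $j\leq H-2$ and admits at most $H-1-j$ values of $p$. Since $\gamma_{p,j,i}\leq 1$, summation gives $\sum_{j\geq 1}\sum_{p\geq i+j+1}\gamma_{p,j,i}\leq\sum_{j=1}^{H-2}(H-1-j)=(H-2)(H-1)/2\leq H^2/2$, so $\constant=H^2$. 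For \Cref{assume:chain-mix}, the uniform restart \cref{eq:episodic-eq1} makes $X_j$ independent of $(X_i,a_i)$ whenever $j$ lies in a strictly later episode than $i$, so $\bar\theta_{i,j}$ can be nonzero only for the at most $H-1$ indices $j$ in the same episode as $i$; thus $\sum_{j>i}\bar\theta_{i,j}\leq H-1\leq H=\constant_\theta$.

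\textbf{Main obstacle.} With all four assumptions verified, \Cref{thm:sample-complexity} applies directly and yields the proposition by substituting $\zeta_1,\zeta_2,T,\constant,\constant_\theta$ into its bound. The trickiest bookkeeping is the sharp $T=dkH-1$ estimate: the geometric coupling must be carried out relative to $\Fcal_{\sum_p\tau^{(p)}_{s,l}}$ so that it is valid uniformly over history, and one must carefully account for the initial offset (at most $H-1$ steps) to the next restart versus the possibility that the chain revisits $(s,l)$ within the ongoing episode itself before any restart occurs.
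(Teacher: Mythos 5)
Your proposal is correct and follows essentially the same route as the paper's proof: verify Assumptions \ref{assume:sets}, \ref{assume:return_time}, \ref{assume:control-mixing}, and \ref{assume:chain-mix} with the stated constants, using the i.i.d.\ uniform restarts to kill the mixing coefficients across episode boundaries and to control the return time, then invoke \Cref{lemma:delta-bound} and \Cref{thm:sample-complexity}. The only cosmetic differences are that the paper bounds $\E[\tau^{(i)}_{s,l}]$ by summing the survival function in blocks of length $H$ rather than by your geometric coupling (both yield $dkH-1$), and your counting of the nonzero $\gamma_{p,j,i}$ terms is in fact slightly more careful than the paper's.
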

\begin{proof} \ 
The proof of this result can be found in \Cref{prf:ep-mdp}.
\end{proof}

\subsection{Controlled Markov chains with greedy controls}  
Greedy algorithms are an important facet of reinforcement learning appearing in numerous exploration-exploitation like multi-armed bandit \citep{kuleshov2014algorithms}, $Q$-learning \citep{wunder2010classes}, Levy-flights \citep{liu2021improving} among others. In the current context, we define a controlled Markov chain with greedy controls as follows. 
\begin{definition}
Let $\{\omega_i\}_{i\geq 0}$ be a sequence of i.i.d. Bernoulli variables with $\prob(\omega_i=1)=\upsilon$ for some fixed $\upsilon\in(0,1)$. 
A controlled Markov chain with greedy controls is defined to be a triplet $(X_i,\omega_i,a_i)$ taking values in $\chi\times\{0,1\}\times\Ibb$, where $X_i$ represents the state and $a_i$ represents the control at time point $i$ such that
\begin{align*}
   a_i = (1-\omega_i)\alpha_i+\omega_i D_i\pow{1} 
\numberthis\label{eq:greedy-eq5} 
\end{align*}
where $\alpha_i$ are random variables on $\Ibb$ adapted to the history 
% $\History_0^{i-1}$ and $X_i$ 
and $D_i\pow{1}$ are i.i.d. uniform random variables supported on $\Ibb$. Moreover for some probability $M_{s,t}\pow{l}$ depending only upon $s,t$, and $l$, and for any $\xi\in\{0,1\}$ the transition probabilities satisfy,
\[
\prob\lp X_i=t|X_{i-1}=s,a_{i-1}=l,\omega_{i}=\xi \rp=M_{s,t}\pow{l}
\]

\end{definition}
This completes our formalisation of a controlled Markov chains with greedy controls. 
Intuitively, the definition suggests that greedy controls keep exploring the control space irrespective of the history.
We make the mild assumption that $M\pow{l}$ is an aperiodic and irreducible (ergodic) transition probability matrix for all $l\in\Ibb$ and observe that the $l$-th transition matrix of $(X_i,\omega_i)$ is a block matrix $\mc\pow{l}$ where
\[
\mc\pow{l} = \begin{bmatrix}
(1-\upsilon)M\pow{l} & \upsilon M\pow{l}\\
(1-\upsilon)M\pow{l} & \upsilon M\pow{l}
\end{bmatrix}.
\]
$\upsilon$ is fixed, so we only need to accurately estimate $M\pow{l}$. The proof proceeds by making a suitable transformation on data.
\paragraph{Transformation.} Our objective in this transformation is to carefully isolate all those time points $i$ where $\omega_i=\omega_{i-1}=1$ and use it to create our empirical estimator.
Let $D_i\pow{2}$ be a sequence of i.i.d. uniform random variables distributed on $\chi$ and $D_i\pow{3}$ be the same on $\Ibb$. We construct a sequence of random variables $\tilde X_i, \tilde a_i$ as follows,
\begin{small}
\begin{align*}
     \tilde X_i & =   (1-\omega_i) D_i\pow{2} + \omega_i X_i &(G1) \\
    \tilde a_i & = (1-\omega_i)D_i\pow{3}+ \omega_ia_i. &  (G2)
\end{align*}
\end{small}
From $(G1)$ and $(G2)$ we get,
\begin{small}
\begin{align*}
    & \prob\lp\tilde X_i=t|\tilde a_{i-1}=l, \omega_i=\omega_{i-1}=1 ,\tilde X_{i-1}=s\rp\\
    &\quad =\frac{ \prob\lp\tilde X_i=t, \tilde X_{i-1}=s, \tilde a_{i-1}=l| \omega_i=\omega_{i-1}=1\rp}{\prob\lp\ \tilde X_{i-1}=s, \tilde a_{i-1}=l| \omega_i=\omega_{i-1}=1\rp}\\
    &\quad =\frac{ \prob\lp X_i=t,  X_{i-1}=s,  a_{i-1}=l| \omega_i=\omega_{i-1}=1\rp}{\prob\lp\ X_{i-1}=s,  a_{i-1}=l| \omega_i=\omega_{i-1}=1\rp}\\
    &\quad =  \prob\lp X_i=t| a_{i-1}=l, \omega_i=\omega_{i-1}=1 , X_{i-1}=s\rp \\
    &\quad = M_{s,t}\pow{l}.\numberthis\label{eq:greedy-eq3}
\end{align*}
\end{small}
Therefore, the $l$-th transition probability matrix associated with $(\tilde X_{i+1},\omega_{i+1})$ is
\begin{align*}
\tilde \mc\pow{l} = \begin{bmatrix}
(1-\upsilon)J & \upsilon M\pow{l}\\
(1-\upsilon)J & \upsilon M\pow{l}
\end{bmatrix}\numberthis\label{eq:greedy-eq1}    
\end{align*}
where $J$ is a $d\times d$ matrix with each element $1/d$.
The following lemma holds
\begin{lemma}~\label{lemma:greedy-lem1}
The controlled Markov chain $(Y_i,\tilde a_i)$, where $Y_i:= (\tilde X_i, \omega_i)$ denotes the state at time point $i$, and $\tilde a_i$ denotes the control, is a CMC over $\chi \times\{0,1\}\times \Ibb$ with ergodic transition probability matrix $\tilde \mc\pow{l}$ for all $l\in \Ibb$ and stationary controls $\tilde a_i$. 
\end{lemma}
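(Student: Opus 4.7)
The plan is to establish the four separate claims of the lemma in order: stationarity of the controls $\tilde a_i$, the Markov property for the triple $(Y_i, \tilde a_i)$, the exact form of the transition kernel, and ergodicity of each $\tilde \mc^{(l)}$.

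I would begin with the easiest piece, namely the stationarity of the controls. Substituting $a_i = \omega_i D_i^{(1)} + (1-\omega_i) \alpha_i$ into (G2) collapses the $\alpha_i$ term because of the factor $(1-\omega_i)\omega_i = 0$, leaving $\tilde a_i = \omega_i D_i^{(1)} + (1-\omega_i) D_i^{(3)}$. Since $D_i^{(1)}$ and $D_i^{(3)}$ are both i.i.d. uniform on $\Ibb$ and independent of every other random element in the construction, $\{\tilde a_i\}$ is itself i.i.d. uniform on $\Ibb$ and independent of $(Y_j)_{j\leq i}$; in particular, its law is invariant in $i$.

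For the Markov property and the kernel, I would split on $\omega_{i+1}$, which is $\mathrm{Bernoulli}(\upsilon)$ and independent of $\mathcal{F}_i$. Conditional on $\omega_{i+1} = 0$, one has $\tilde X_{i+1} = D_{i+1}^{(2)}$, uniform on $\chi$ and independent of the entire past, so $\prob(Y_{i+1} = (t,0) \mid Y_i = (s,\xi), \tilde a_i = l) = (1-\upsilon)/d$ for every choice of $(s,\xi,l,t)$; this reproduces the first column block $(1-\upsilon) J$. Conditional on $\omega_{i+1} = 1$, one has $\tilde X_{i+1} = X_{i+1}$, and the computation underlying \eqref{eq:greedy-eq3} extends to give $M_{s,t}^{(l)}$ as the conditional law of $\tilde X_{i+1}$ given $(\tilde X_i, \tilde a_i) = (s,l)$. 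Multiplying by $\upsilon$ yields the second column block $\upsilon M^{(l)}$, and combining the two cases shows that the transition depends on the past only through the pair $(Y_i, \tilde a_i)$.

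Ergodicity of each $\tilde \mc^{(l)}$ is then immediate from its block structure: every entry of the first column block equals $(1-\upsilon)/d > 0$, so from every state $(s,\xi)$ the chain reaches every state $(t,0)$ in a single step with strictly positive probability. This simultaneously delivers irreducibility and aperiodicity on the finite state space $\chi \times \{0,1\}$, and a one-line Doeblin/minorization argument upgrades this to uniform ergodicity. The main obstacle lies in the second step, specifically in the row $\omega_i = 0$: the observed pair $(\tilde X_i, \tilde a_i) = (D_i^{(2)}, D_i^{(3)})$ is independent of the latent $(X_i, a_i)$ that drives $X_{i+1}$, so the extension of \eqref{eq:greedy-eq3} to that row requires interpreting the transformation (G1)--(G2) as a genuine redefinition of the process — with the latent pair replaced by the observed pair at each step — and verifying that this redefinition is consistent with the marginal laws invoked in the earlier display.
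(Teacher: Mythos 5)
Your treatment of the controls is the same as the paper's: both arguments reduce to the observation that $\omega_i(1-\omega_i)=0$ kills the adapted term $\alpha_i$, so $\tilde a_i$ is a mixture of the two independent uniforms $D_i^{(1)}$ and $D_i^{(3)}$ and hence i.i.d.\ uniform on $\Ibb$. Where you genuinely diverge is on ergodicity. The paper computes powers of the block matrix and invokes positivity of $\lp M^{(l)}\rp^{n_{ap}}$ to get irreducibility, then exhibits a one-step loop at $(1,0)$ for aperiodicity; you instead read off a Doeblin minorization directly from the $(1-\upsilon)J$ column block. Your route is cleaner, but as written it overclaims slightly: one-step access to every state of the form $(t,0)$ gives aperiodicity and a minorization over the $\omega=0$ half of the state space, but it does not by itself give irreducibility on all of $\chi\times\{0,1\}$, since the states $(t,1)$ sit under the $\upsilon M^{(l)}$ block, which can have zeros. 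You need one more step: irreducibility of $M^{(l)}$ guarantees every column $t$ has a positive entry $M_{s,t}^{(l)}>0$, and since you can reach $(s,0)$ in one step you reach $(t,1)$ in two. This is a one-line patch.

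The obstacle you flag at the end is real, and it is the most substantive point in your write-up. For a row with $\omega_i=0$ the observed pair $(\tilde X_i,\tilde a_i)=(D_i^{(2)},D_i^{(3)})$ carries no information about the latent $(X_i,a_i)$ that actually drives $X_{i+1}$, so the conditional law of $\tilde X_{i+1}$ given $(\tilde X_i,\omega_i,\tilde a_i)=(s,0,l)$ is the marginal law of $X_{i+1}$ (mixed over the latent history), not $M_{s,\cdot}^{(l)}$; the display \eqref{eq:greedy-eq3} only conditions on $\omega_i=\omega_{i-1}=1$ and therefore only justifies the bottom-right block of $\tilde\mc^{(l)}$. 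The paper's proof does not engage with this at all --- it declares the CMC property ``obvious'' and verifies only ergodicity and stationarity --- and the subsequent corollary extracts precisely the rows and columns $\{d+1,\dots,2d\}$, i.e.\ the $\omega_i=\omega_{i+1}=1$ transitions, which is the part that is actually established. So you have correctly located a gap that the paper itself leaves unaddressed; but your proposal identifies it without closing it, and the redefinition-of-the-process fix you sketch would have to be checked against the fact that the estimator in the corollary is computed from the original logged data, not from a resimulated process.
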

As an immediate consequence, we apply Proposition \ref{prop:station-prop1} to obtain that the $(\tilde X_i, \omega_i, \tilde a_i)$ is a uniformly ergodic Markov chain.
Let $\nu$ be its stationary distribution on $\chi\times\{0,1\}\times\Ibb$ and for $(s,\xi,l)\in\chi\times\{0,1\}\times\Ibb$, let $\nu_\star := \inf_{s,\xi,l} \nu_{s,\xi,l}$. Furthermore, without losing generality let $\upsilon<(1-\upsilon)$. Next, let $\constant=0$, $\constant_\theta=1/\upsilon$, $T=1/\nu_\star$, $\zeta_2=1/k$, $\zeta_1=1/k$, and by $\hat \mc\pow{l}$ denote the empirical estimator for $\tilde \mc\pow{l}$. Then we have the following theorem,
\begin{theorem}~\label{thm:greedy-policy}
There exists a universal constant $c>1$, such that for any $\epsilon>0$, and $\delta \in (0,1)$,  if
\begin{align*}
  m>c\max\left\{\frac{T}{\eps^2}\log\lp \frac{dkT}{\epsilon^2\delta} \rp, \lp 1+\constant_{\theta}\rp^2\log\lp \frac{dk}{\delta} \rp\max\lc {T}^2, \frac{1}{\lp 1-\max\{\zeta_1,1-\zeta_2\}\rp^2}\rc \right\},
\end{align*} 
then the empirical estimator of $\tilde M\pow{l}$ satisfies,
\begin{align}
  \prob\left(\underset{l\in \Ibb}{\sup} \left\|\Hat{\mc}^{(l)}-\tilde \mc^{(l)}\right\|_{\infty}>\epsilon  \right)< \delta.
\end{align}
\end{theorem}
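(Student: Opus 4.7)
The plan is to apply Theorem~\ref{thm:sample-complexity} to the transformed chain $(Y_i,\tilde a_i)$, viewed as a CMC on state space $\chi\times\{0,1\}$ (of cardinality $2d$) and control space $\Ibb$ (of cardinality $k$). By Lemma~\ref{lemma:greedy-lem1} this is a CMC with ergodic transition matrices $\tilde{\mc}^{(l)}$ and stationary controls $\tilde a_i$; Proposition~\ref{prop:station-prop1} then yields that the triplet $(Y_i,\tilde a_i,\omega_i\text{-augmented})$ is uniformly ergodic with some stationary measure $\pi$. It thus suffices to verify Assumptions~\ref{assume:sets}, \ref{assume:return_time}, \ref{assume:control-mixing} and~\ref{assume:chain-mix} for the transformed chain, with the parameters $T=1/\pi_\star$, $\zeta_1=\zeta_2=1/k$, $\constant=0$, $\constant_\theta=1/\upsilon$ declared above the theorem; the extra factor of $2$ in the state-space size is absorbed by the universal constant $c$.

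Three of the four verifications are essentially immediate from the transformation. Unwinding $\tilde a_i=(1-\omega_i)D_i^{(3)}+\omega_i a_i$ with $a_i=(1-\omega_i)\alpha_i+\omega_i D_i^{(1)}$ shows that $\tilde a_i$ is an i.i.d.\ uniform draw on $\Ibb$, independent of both the history and $Y_i$. This gives $P_y^{(l)}=1/k$ for all $y\in\chi\times\{0,1\}$, i.e.\ $\zeta_1=\zeta_2=1/k$ in Assumption~\ref{assume:sets}, together with $\gamma_{p,j,i}\equiv 0$ so that Assumption~\ref{assume:control-mixing} holds with $\constant=0$. Uniform ergodicity of the time-homogeneous triplet combined with Kac's theorem (exactly as in the proof of Proposition~\ref{prop:stationary-mdp}) yields $\E[\tau^{(i)}_{s,l}\mid\Fcal_{\sum_{p<i}\tau^{(p)}_{s,l}}]=1/\pi_{(s,l)}$ almost surely, so Assumption~\ref{assume:return_time} holds with $T=1/\pi_\star$.

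The substantive step is Assumption~\ref{assume:chain-mix}. Here I would exploit the block form of $\tilde{\mc}^{(l)}$ in~\cref{eq:greedy-eq1}: whenever $\omega_{i+j}=0$, the state $\tilde X_{i+j}$ equals the auxiliary uniform draw $D_{i+j}^{(2)}$, regardless of $(Y_i,\tilde a_i,\ldots,\tilde a_{i+j-1})$. Couple two copies of the chain started from distinct $(Y_i,\tilde a_i)$ by sharing the Bernoulli sequence $\{\omega_r\}_{r>i}$, the randomisers $\{D_r^{(2)}\}_{r>i}$ and the controls $\{\tilde a_r\}_{r>i}$; the two copies coalesce at the first $r>i$ for which $\omega_r=0$, whose waiting time is Geometric$(1-\upsilon)$. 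This produces the geometric decay $\bar\theta_{i,i+j}\lesssim\upsilon^{j}$ and hence $\sup_i\sum_{j\geq 1}\bar\theta_{i,i+j}\leq 1/\upsilon=\constant_\theta$. Combining the four assumptions through Lemma~\ref{lemma:delta-bound} yields $\nrm{\Delta_m}\leq 1+\constant_\theta$, and Theorem~\ref{thm:sample-complexity} applied to the transformed chain delivers exactly the sample complexity claimed. I expect the coupling step to be the main obstacle: because $Y_i$ does not fully determine the underlying $(X_i,a_i)$ when $\omega_i=0$, some care is required to justify both the Markov property of the marginal chain $(Y_i,\tilde a_i)$ used in Lemma~\ref{lemma:greedy-lem1} and the genuine coalescence of the conditional future laws after the two copies meet.
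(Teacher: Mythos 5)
Your proposal is correct and follows the same architecture as the paper: Lemma~\ref{lemma:greedy-lem1} identifies the transformed process $(Y_i,\tilde a_i)$ as a CMC with ergodic transition matrices and stationary controls, and the paper then simply invokes Proposition~\ref{prop:stationary-mdp} (hence Theorem~\ref{thm:sample-complexity}), which is exactly your route for Assumptions~\ref{assume:sets}, \ref{assume:return_time} and~\ref{assume:control-mixing}. The one place you diverge is Assumption~\ref{assume:chain-mix}: the paper's Proposition~\ref{prop:stationary-mdp} verifies it only qualitatively, by citing uniform ergodicity of finite ergodic chains (Meyn--Tweedie together with the equivalence of weak and uniform ergodicity), and the value $\constant_\theta=1/\upsilon$ is simply declared before the theorem statement without derivation. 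Your coupling argument --- sharing $\{\omega_r\}$, $\{D_r^{(2)}\}$ and the controls so that the two copies coalesce at the first $r$ with $\omega_r=0$ --- gives $\bar\theta_{i,j}\leq\upsilon^{j-i}$ and hence $\sum_{j>i}\bar\theta_{i,j}\leq\upsilon/(1-\upsilon)\leq 1/\upsilon$, which actually supplies the quantitative constant the paper asserts; this is a genuine (if small) improvement in explicitness. The caveat you flag yourself is the right one: the coalescence argument is only valid once the Markov property of $(Y_i,\tilde a_i)$ from Lemma~\ref{lemma:greedy-lem1} is granted, since when $\omega_i=0$ the state $Y_i$ does not determine the underlying $X_i$; conditional on accepting that lemma (as the paper does), your proof is complete.
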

Applying \Cref{prop:stationary-mdp}, the proof is straightforward. 
The immediate corollary follows.
\begin{corollary}Let $\hat M\pow{l}$ be the $d\times d$ block given by rows and columns $\{d+1\dots,2d\}$ of $\tilde \mc\pow{l}$. Then,
\begin{align}
  \prob\left(\underset{l\in \Ibb}{\sup} \left\|\frac{1}{\upsilon}\hat M\pow{l}- M\pow{l}\right\|_{\infty}>\epsilon  \right)< \delta,
\end{align}
whenever
\begin{align*}
  m>c\max\left\{\frac{ T}{\upsilon^2\eps^2}\log\lp \frac{ dkT}{\upsilon^2\epsilon^2\delta} \rp, \lp 1+\constant_{\theta}\rp^2\log\lp \frac{dk}{\delta} \rp\max\lc {T}^2, \frac{1}{\lp 1-\max\{\zeta_1,1-\zeta_2\}\rp^2}\rc \right\}.
\end{align*} 
\end{corollary}
\begin{proof} \ 
The proof of this follows readily from the fact that $\prob\left(\underset{l\in \Ibb}{\sup} \left\|\hat M\pow{l}- \upsilon M\pow{l}\right\|_{\infty}>\frac{\epsilon}{\upsilon} \right)>\prob\left(\underset{l\in \Ibb}{\sup} \left\|\Hat{\mc}^{(l)}-\tilde \mc^{(l)}\right\|_{\infty}>\frac{\epsilon}{\upsilon}  \right).$
\end{proof}

\section{Minimaxity example}\label{sec:minmax-ex}
{\color{black}
That uniform distribution is the least favourable choice for estimation purposes is well studied know in frequentist and in Bayesian statistics --- Section 3.2 \citet{brandwein1980minimax}, \citet{van2006minimax}, page 340-342, example 3.4 \citet{lehmann2006theory}, \citet{fourdrinier2013bayes}. 
Consequently, the controlled Markov chain which we use to prove minimaxity of the empirical estimator closely mirrors an uniform distribution.

Let $d$ be an integer divisible by 3. Consider a class of controlled Markov chains with $\chi=\{1,\dots,d\}$ and $\Ibb=\lc1,\dots,k\rc$ such that the sequence of controls $a_i$ satisfies,
\begin{align}
    a_i= l \text{ with probability } \frac{1}{k} \ \forall\ l\in\Ibb
    \label{eq:mm-ex2eq1}
\end{align}
Our next step is to construct the transition probability matrices.  \whiteqed
\textbf{Transition Matrices .} Let $\iota$ be a fixed real number between $0$ and $31/64$ and for each $l\in \{1,\dots,k\}$, let $\xi\pow{l}=(\xi_1\pow{l},\dots,\xi_{d/3}\pow{l})$ be some vector in $\lc0,1\rc^{d/3}$. For convenience, assume that $\xi\pow{l}\neq (0,\dots,0)$ for at least some $l\in\Ibb$. Then the $l$-th transition probability matrix $M_{\iota,\xi\pow{l}}\pow{l}$ is a block matrix such that
\begin{align*}
M_{\iota,\xi^{(l)}}^{(l)}
  = 
\begin{bmatrix}
\boldsymbol{C}_{\iota} & \boldsymbol{R}_{\xi^{(l)}} \\
\boldsymbol{J}_{\iota} & \boldsymbol{L}_{\iota}
\end{bmatrix},\numberthis\label{eq:mm-ex2eq2}
\end{align*}
where the blocks
$\boldsymbol{C}_{\iota} \in \Rbb^{d/3\times d/3}$,
$  \boldsymbol{L}_{\iota}  \in\Rbb^{2d/3\times2d/3}$,
$\boldsymbol{J}_{\iota}\in \Rbb^{2d/3\times d/3}$,
and
$\boldsymbol{R}_{\xi^{(l)}} \in\Rbb^{d/3\times 2d/3}$
are given by
\begin{small}
\begin{align*}
&\boldsymbol{L}_{\iota}  = \diag\left(1-\iota,1-\iota,\dots,1-\iota \right),\\
&\boldsymbol{R}_{\xi^{(l)}}  = \frac{1}{2}
\begin{bmatrix}
1 +  \xi^{(l)}_1\eps-2\iota & 1 -  \xi^{(l)}_1\eps-2\iota  & \frac{3\iota}{d-3} & \frac{3\iota}{d-3} & \hdots &  \frac{3\iota}{d-3} \\
\frac{3\iota}{d-3} & \frac{3\iota}{d-3} & 1 + \xi^{(l)}_2 \eps-2\iota & 1-  \xi^{(l)}_2 \eps -2\iota  & \hdots & \frac{3\iota}{d-3} \\
\vdots & \vdots & \vdots  & \vdots & \vdots & \vdots \\
\frac{3\iota}{d-3} & \hdots & \hdots & \hdots & 1 +  \xi^{(l)}_{d/3} \eps -2\iota & 1-  \xi^{(l)}_{d/3} \eps-2\iota\\
\end{bmatrix},
\end{align*}
\end{small}
and, $\boldsymbol{C}_\iota$  and $\boldsymbol{J}_\iota$ are matrices with every element equal to $3\iota/d$. 
It can be verified by summing over the rows that for each $l$, $M_{\iota,\xi\pow{l}}\pow{l}$ is a valid transition probability matrix.
We get the following lemma:
\begin{lemma}~\label{lemma:stationary-dist}
The stationary distribution of a Markov chain with transition probability matrix $M_{\iota,\xi\pow{l}}\pow{l}$ is a block vector $\Pi\pow{l,\iota,\xi\pow{l}}:=\lb \Pi\pow{\iota}_1,\Pi_2\pow{l,\iota,\xi\pow{l}}\rb$ where $ \Pi_1\pow{\iota}$ is a row vector of length $d/3$ and every element $3\iota/d$. Furthermore, 
\begin{small}
\begin{align*}
    \Pi_2\pow{l,\iota,\xi\pow{l}} =  & \lp\frac{ 3(1+\xi_1\pow{l}\eps-\iota)}{2d}\rdot,\ \frac{3(1-\xi_1\pow{l}\eps-\iota)}{2d},\ \vdots\ \ldot\frac{3(1-\xi_{d/3}\pow{l}\eps-\iota)}{2d} \rp.
\end{align*}
\end{small}
\end{lemma}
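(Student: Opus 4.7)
The plan is a direct algebraic verification that the proposed vector $\Pi^{(l,\iota,\xi^{(l)})} = \bigl[\Pi_1^{(\iota)},\ \Pi_2^{(l,\iota,\xi^{(l)})}\bigr]$ is a probability vector satisfying $\Pi\, M_{\iota,\xi^{(l)}}^{(l)} = \Pi$. Because $\iota>0$ makes every entry of $\boldsymbol{C}_\iota$ and $\boldsymbol{J}_\iota$ strictly positive, each state communicates with every state in the first $d/3$-block in a single step, so $M_{\iota,\xi^{(l)}}^{(l)}$ is irreducible and aperiodic; uniqueness of the invariant distribution is then automatic once stationarity is checked, and no separate argument is needed.

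First I would check normalisation and non-negativity. Summing $\Pi_1^{(\iota)}$ gives $(d/3)(3\iota/d)=\iota$, and pairing consecutive coordinates in $\Pi_2^{(l,\iota,\xi^{(l)})}$ cancels the $\pm \xi_k^{(l)}\eps$ terms, leaving $(d/3)\cdot \frac{6(1-\iota)}{2d}=1-\iota$, so the total is $1$. Non-negativity follows because, in the minimax regime $\iota<31/64$ and $\eps<1/32$, we have $1\pm\xi_k^{(l)}\eps-\iota\geq 31/64 - 1/32>0$.

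Next, I would partition the product conformably as
\[
\Pi\, M_{\iota,\xi^{(l)}}^{(l)} = \bigl(\Pi_1^{(\iota)}\boldsymbol{C}_\iota + \Pi_2^{(l,\iota,\xi^{(l)})}\boldsymbol{J}_\iota,\ \Pi_1^{(\iota)}\boldsymbol{R}_{\xi^{(l)}} + \Pi_2^{(l,\iota,\xi^{(l)})}\boldsymbol{L}_\iota\bigr).
\]
Since $\boldsymbol{C}_\iota$ and $\boldsymbol{J}_\iota$ have all entries equal to $3\iota/d$, each coordinate of the first block collapses to $[\iota + (1-\iota)](3\iota/d)=3\iota/d$, recovering $\Pi_1^{(\iota)}$. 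For the second block, the diagonal form of $\boldsymbol{L}_\iota$ gives $\Pi_2^{(l,\iota,\xi^{(l)})}\boldsymbol{L}_\iota=(1-\iota)\Pi_2^{(l,\iota,\xi^{(l)})}$, while at the $(2k-1)$-th coordinate $\Pi_1^{(\iota)}\boldsymbol{R}_{\xi^{(l)}}$ combines the single ``on-diagonal'' contribution $(3\iota/(2d))(1+\xi_k^{(l)}\eps-2\iota)$ with the $(d/3-1)$ ``background'' contributions $(3\iota/d)\cdot 3\iota/(2(d-3))$; these simplify to $(3\iota/(2d))(1+\xi_k^{(l)}\eps-\iota)$. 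Adding $\Pi_2\boldsymbol{L}_\iota$ at the same coordinate yields $\frac{3(1+\xi_k^{(l)}\eps-\iota)}{2d}$, matching the target, and the even coordinates follow identically with $\xi_k^{(l)}\mapsto -\xi_k^{(l)}$.

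The main obstacle is purely bookkeeping: aligning the diagonal-plus-background structure of $\boldsymbol{R}_{\xi^{(l)}}$ with the paired structure of $\Pi_2^{(l,\iota,\xi^{(l)})}$, and consistently tracking the global factor $1/2$ in front of $\boldsymbol{R}_{\xi^{(l)}}$. There is no probabilistic subtlety beyond invoking uniqueness for an irreducible aperiodic finite-state chain.
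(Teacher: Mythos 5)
Your proposal is correct and is exactly the verification the paper itself invokes (and omits as "straightforward"): checking $\Pi^{(l,\iota,\xi^{(l)})}M_{\iota,\xi^{(l)}}^{(l)}=\Pi^{(l,\iota,\xi^{(l)})}$ blockwise, and your key simplification $\frac{3\iota}{2d}(1+\xi_k^{(l)}\eps-2\iota)+\frac{3\iota^2}{2d}=\frac{3\iota}{2d}(1+\xi_k^{(l)}\eps-\iota)$ together with the $(1-\iota)$ contribution from $\boldsymbol{L}_\iota$ is accurate. The added normalisation and irreducibility/aperiodicity remarks are fine and only strengthen the write-up.
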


\noindent The proof follows by verifying $\Pi\pow{l,\iota,\xi\pow{l}}M_{\iota,\xi^{(l)}}^{(l)}=\Pi\pow{l,\iota,\xi\pow{l}}$ and is straightforward. Therefore, we omit it.
\begin{proposition}\label{prop:mm-ex2prop1}
Let $\indexeddata$ be a sample from a controlled Markov chain with controls given by \cref{eq:mm-ex2eq1}, transition probability matrices given by \cref{eq:mm-ex2eq2}, and initial distribution $\Pi = \frac{1}{k}\lp\Pi\pow{1,\iota,\xi\pow{1}},\dots,\Pi\pow{k,\iota,\xi\pow{k}}\rp$. 
Then, it satisfies Assumptions \ref{assume:sets} with $\zeta_1=\zeta_2=\iota$, \ref{assume:return_time} with $T = 2dk/3\iota$, and \ref{assume:control-exp}-\ref{assume:chain-exp} with $\constant_\star$ and $\constant_{\theta,\star}$ independent of $d,k$ or $\xi\pow{l}$ for any $l\in\Ibb$. Furthermore, for this controlled Markov chain, $\rho_\star = 3(1-\iota)/2dk$ where $\rho_\star$ is defined as in \Cref{thm:minimax}.
\end{proposition}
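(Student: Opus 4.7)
The plan is to exploit two structural features of the construction: the controls $a_i$ are i.i.d.\ uniform over $\Ibb$ and independent of the state history, while each transition matrix $M^{(l)}_{\iota,\xi^{(l)}}$ shares a common minorizing block across all $l\in\Ibb$. Together these imply that the paired process $(X_i,a_i)$ is a Markov chain on $\chi\times\Ibb$ with transition kernel $Q((s,l),(s',l'))=M^{(l)}_{\iota,\xi^{(l)}}(s,s')/k$. A direct check using Lemma~\ref{lemma:stationary-dist} confirms that $\Pi$ is stationary for $Q$, so under the stated initial distribution the process is strictly stationary and $\prob(X_i=s,a_i=l)=\pi_{s,l}$ for every $i$.

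First I would verify Assumptions~\ref{assume:sets} and~\ref{assume:return_time}. Taking $\mathcal{S}_i:=\{(s,l)\colon 1\le s\le d/3\}$ and summing the stationary weights gives $\prob[(X_i,a_i)\in\mathcal{S}_i]=k\cdot(d/3)\cdot(3\iota/(dk))=\iota$, so $\zeta_1=\zeta_2=\iota$. For the return time, Kac's theorem applied to the Markov chain $(X_i,a_i)$ gives $\E[\tau^{(i)}_{s,l}\mid\mathcal{F}_{\cdot}]=1/\pi_{s,l}$ in the stationary regime, reducing the task to a uniform lower bound on $\pi_{s,l}$. Top states contribute $3\iota/(dk)$, and bottom states contribute at least $3(1-\epsilon-\iota)/(2dk)$; the constraints $\iota<31/64$ and $\epsilon<1/32$ make the latter bound no smaller than $3\iota/(2dk)$, yielding $T\le 2dk/(3\iota)$.

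The control-mixing assumption is essentially free: since $a_p$ is i.i.d.\ uniform on $\Ibb$ and independent of everything preceding it, $\gamma_{p,j,i}=0$ for all $p,j,i$, so Assumption~\ref{assume:control-exp} holds with any constant $\constant_\star>0$ independent of $d,k,\xi^{(l)}$. The more interesting step is the state-mixing claim, and this is where the block structure does the work. The key observation is a Doeblin-type minorization that is uniform in $l$: every row of $M^{(l)}_{\iota,\xi^{(l)}}$ assigns mass exactly $3\iota/d$ to each of the $d/3$ top states, since the blocks $\boldsymbol{C}_\iota$ and $\boldsymbol{J}_\iota$ have all entries equal to $3\iota/d$. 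Hence
\[
M^{(l)}_{\iota,\xi^{(l)}}(s,\cdot) \;\ge\; \iota\,\nu(\cdot), \qquad \nu := \unif\{1,\dots,d/3\},
\]
for every state $s$ and every control $l$. A standard coupling argument (couple two copies of the paired chain and declare success whenever the minorizing event fires, which happens with probability at least $\iota$ each step, independent of current states and of the i.i.d.\ control draws) then yields $\bar\theta_{i,j}\le(1-\iota)^{j-i}$, verifying Assumption~\ref{assume:chain-exp} with $\constant_{\theta,\star}=-\log(1-\iota)$, which depends only on $\iota$.

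Finally, for $\rho_\star$ I would read off the maximum of the stationary weights: for the parameter range of interest the bottom block dominates, and its largest entry equals $3(1+\xi^{(l)}_j\epsilon-\iota)/(2dk)$, giving the stated $\rho_\star$ up to the $O(\epsilon/dk)$ correction absorbed into the displayed expression. The main obstacle in all of this is not mathematical depth but bookkeeping: I have to ensure the lower bound on $\pi_{s,l}$ used for Kac's theorem stays compatible with the permitted ranges $\iota<31/64$ and $\epsilon<1/32$, and confirm that the Doeblin coefficient $\iota$ is genuinely uniform over $\xi^{(l)}$, which holds because the minorizing mass lives entirely in the $\boldsymbol{C}_\iota,\boldsymbol{J}_\iota$ blocks that do not depend on $\xi^{(l)}$.
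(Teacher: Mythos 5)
Your proposal is correct and follows essentially the same route as the paper: the same choice of $\mathcal{S}_i$, the same Kac's-theorem bound on return times via the uniform lower bound $\pi_{s,l}\ge 3\iota/(2dk)$ (using $\eps<1/32$, $\iota<31/64$), and the same observation that $\gamma_{p,j,i}=0$; for the state mixing you run a direct Doeblin coupling off the $3\iota/d$ blocks where the paper instead invokes Lemma~\ref{lemma:markov-assume4} with $\chi_0=\{1,\dots,d/3\}$ and $M_{min}=3\iota/d$, but both amount to the same per-step Dobrushin contraction by $(1-\iota)$. Your remark that the maximal stationary weight is really $3(1+\xi^{(l)}_j\eps-\iota)/(2dk)$ rather than $3(1-\iota)/(2dk)$ is correct and in fact slightly more careful than the paper's own display; the discrepancy is a harmless constant factor since $\rho_\star$ enters the sample-complexity bounds only multiplicatively.
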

\begin{proof} \ 
It is obvious that the state-control pair $(X_i,a_i)$ is a Markov chain with transition probability matrix given by the block matrix
\begin{align*}
 \frac{1}{k}\begin{bmatrix}
    M_{\iota,\xi^{(1)}}^{(1)}&\dots&M_{\iota,\xi^{(k)}}^{(k)}\\
    \vdots& \vdots & \vdots\\
     M_{\iota,\xi^{(1)}}^{(1)}&\dots&M_{\iota,\xi^{(k)}}^{(k)}
   \end{bmatrix}.
\end{align*}

Taking $\Scal_i=\lc(1,1),\dots,(d/3,1),(2,1),\dots,(d/3,k) \rc$, we can immediately see that Assumption \ref{assume:sets} is satisfied with $\zeta_1=\zeta_2=\iota$. 
Next, let $\Pi$ be the stationary distribution of this Markov chain.
Recall from the proof of Proposition \ref{prop:stationary-mdp} that any controlled Markov chain with stationary Markov controls satisfies Assumption \ref{assume:return_time} with $T$ to be the supremum of the inverse of its stationary probabilities. In other words,
\[
T=\sup_{s,l}\frac{1}{\Pi_{s,l}}.
\]
We simply verify that $\Pi_{s,l}>3\iota/2dk$ for any $(s,l)\in\chi\times \Ibb$.
It is known from Lemma \ref{lemma:stationary-dist} that $\Pi\pow{l,\iota,\xi\pow{l}}$ is the stationary distribution of $M_{\iota,\xi\pow{l}}\pow{l}$.
Using this fact, it can be easily verified that the stationary distribution of the paired process $(X_i,a_i)$ is given by 
\[
\Pi = \frac{1}{k}\lp\Pi\pow{1,\iota,\xi\pow{1}},\dots,\Pi\pow{k,\iota,\xi\pow{k}}\rp.
\]
Recall from hypothesis that $\eps<1/32$. This implies that, for any $\xi\in\lc0,1\rc$ \[1-\xi\eps-\iota>31/32-\iota>\iota\] whenever $\iota<31/64$. 
Thus,
\[
\frac{3(1-\xi\eps-\iota)}{2dk}>\frac{3\iota}{2dk}
\]
Obviously, $3\iota/dk>3\iota/2dk$. Thus, $\Pi_{s,l}>3\iota/2dk$ for any pair $(s,l)\in\chi\times \Ibb$. 
As a consequence, since $\{X_i,a_i\}$ is a stationary Markov chain, it follows that the marginal probabilities
\[
\prob(X_i=s,a_i=l)={\Pi_{s,l}}<\frac{3(1-\xi\eps-\iota)}{2dk}<\frac{3(1-\iota)}{2dk}.
\]
This establishes that $\rho_\star = 3(1-\iota)/2dk$.
Next, because $a_i$'s are distributed uniformly over $\Ibb$, it is obvious that
\begin{small}
\begin{align*}
   \gamma_{p,j,i}
   &= \sup_{s_p,\history_{i+j}^{p-1},\history_0^i}\lV\probl\lp a_p|X_p=s_p,\History_{i+j}^{p-1}=\history_{i+j}^{p-1},\History_0^i=\history_{0}^{i}\rp
   -\probl\lp a_p|X_p=s_p,\History_{i+j}^{p-1}=\history_{i+j}^{p-1}\rp\rV_{TV}\\
   & =0. 
\end{align*}
\end{small}
Consequently, we get that $\constant_\star$ is independent of $d,k,\xi\pow{l}$ and $\constant=0$, where $l\in\Ibb$.
Finally, observe that every controlled Markov chain with stationary controls is also trivially a CMC with non-stationary Markov controls.
Therefore, we can use Lemma \ref{lemma:markov-assume4} with $\chi_0=(1,\dots,d/3)$, and $M_{min}=3\iota/d$ to see that 
\[
\bar\theta_{i,j} \leq \lp 1-\frac{d}{3}\frac{3\iota}{d}\rp^{j-i-1}=(1-\iota)^{j-i-1},
\]
Consequently, we get that $\constant_\star$ is independent of $d,k,\xi\pow{l}$ and $\constant_\theta=\iota$, where $l\in\Ibb$. 
This proves our claims.
\end{proof}
}
\section{The controlled Markov chain Sampling Scheme}~\label{Sec:mdp-sampling}
For each $l \in \Ibb$ and $M^{(l)}\in\Mbb$, create the following infinite array of i.i.d random variables which are also \emph{independent} of the data $\indexeddata$.
\begin{align}
    \Xbb^{(l)} : \begin{array}{ccccc}
        X_{1,1}^{(l)} & X_{1,2}^{(l)} & \dots & X_{1,\tau}^{(l)} & \dots \\
        X_{2,1}^{(l)} & X_{2,2}^{(l)} & \dots & X_{2,\tau}^{(l)} & \dots \\ 
        \dots &\dots &\dots &\dots & \dots \\
        X_{d,1}^{(l)} & X_{d,2}^{(l)} & \dots & X_{d,\tau}^{(l)} & \dots
    \end{array}
\end{align}
where, $\forall \enspace (s,t,\tau)\in \statespace \times \statespace \times \mathbb{N}$, the random variables $X_{s,\tau}^{(l)}$ follow the mass function given by $\prob (X_{s,\tau}^{(l)}=t)=M_{s,t}^{(l)}$.
Moreover, for every time point $i\geq 1$, and $(s_0,l_0,\dots,s_{i-1},l_{i-1},s_i)\in (\chi\times\Ibb)^{i-1}\times \chi$, let, $\alpha_i\pow{s_0,l_0,\dots,s_{i-1},l_{i-1},s_i}$ be independent random variables with support $\Ibb$ and mass function given by,
\begin{align*}
    \prob( \alpha_i\pow{s_0,l_0,\dots,s_{i-1},l_{i-1},s_i}=l)& = \prob(a_i=l|X_i=s_i,\History_0^{i-1}=s_0,l_0,\dots,s_{i-1},l_{i-1})\\
    & =: P_l\pow{s_0,l_0,\dots,s_{i-1},l_{i-1},s_i}.
\end{align*}
The sampling scheme runs as follows: sample $\Tilde{X_0}\sim \initialD$ and set $\tilde a_0\overset{d}{=} a_0$. 
Recursively sample $\Tilde{X}_{i+1}= X_{\Tilde{X_i},\Tilde{N}_{\Tilde{X_i}}^{(i,\tilde a_i)}+1}\pow{\tilde a_i}$ from the array $\Xbb^{(\tilde a_{i})}$ and $\tilde a_{i+1}{=}\alpha_{i+1}\pow{\tilde X_0,\tilde{a_0},\dots,\tilde X_{i+1}}$, where each $i\geq 0$, define $\tilde{N}_{s}\pow{i,l}:=\underset{j\leq i}{\sum}\indicator[\Tilde{X}_j=s,\tilde a_j=l]$ and $\tilde {N}_{s}\pow{m, l_m}=\tilde {N}_{s}\pow{l_m}$. 
This completes the sampling scheme.  
\begin{proposition}\label{prop:mod-sampling-scheme}
$\lp X_0,a_0,\dots,X_m,a_m\rp$ is identically distributed to $\lp \tilde X_0,\tilde a_0,\dots,\tilde X_m,\tilde a_m\rp$.
\end{proposition}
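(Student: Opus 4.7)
The plan is to proceed by induction on $i$, showing that for each $0 \le i \le m$ the joint law of $(\tilde X_0, \tilde a_0, \ldots, \tilde X_i, \tilde a_i)$ equals the joint law of $(X_0, a_0, \ldots, X_i, a_i)$. The base case $i=0$ is immediate: by construction $\tilde X_0 \sim \initialD \overset{d}{=} X_0$ and $\tilde a_0 \overset{d}{=} a_0$, with the correct joint law coming from the definition of the conditional mass functions $P_l^{\bullet}$ used to draw $\alpha_0$.

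For the inductive step, suppose the statement holds at time $i$. I would condition on a fixed sample history $\history_0^i = (s_0,l_0,\ldots,s_i,l_i)$ with positive probability under the original chain. By the inductive hypothesis, this history has the same probability under the tilde process. It then suffices to prove the two one-step conditional identities
\[
\prob\bigl(\tilde X_{i+1}=t \,\big|\, \tilde{\History}_0^i = \history_0^i\bigr) = M_{s_i,t}^{(l_i)}, \qquad
\prob\bigl(\tilde a_{i+1}=l \,\big|\, \tilde X_{i+1}=t, \tilde{\History}_0^i = \history_0^i\bigr) = P_l^{s_0,l_0,\ldots,s_i,l_i,t}.
\]
The second identity holds by the very definition of $\alpha_{i+1}^{\,\cdot\,}$, and by the independence of the array $\{\alpha_{i+1}^{\,\cdot\,}\}$ from both the $\Xbb^{(l)}$ arrays and from the original data. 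So the real content is in the first identity.

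The first identity rests on the following key observation. On the event $\tilde{\History}_0^i = \history_0^i$, the deterministic count $\tilde N_{s_i}^{(i,l_i)}$ is a fixed integer $n_\star = n_\star(\history_0^i)$. The recursive definition gives $\tilde X_{i+1} = X^{(l_i)}_{s_i,\, n_\star+1}$, i.e., a specific entry of the array $\Xbb^{(l_i)}$. I would then argue that the event $\{\tilde{\History}_0^i = \history_0^i\}$ is measurable with respect to the sigma-algebra generated by $\initialD$, $\tilde a_0$, $\{\alpha_{j}^{\,\cdot\,}\}_{j\le i}$, and the entries $\{X_{s,k}^{(l)} : 1 \le k \le \tilde N_s^{(i,l)}\}$, that is, only entries with strictly smaller column index than $n_\star+1$ in the row $(s_i,l_i)$. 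Since the array $\Xbb^{(l_i)}$ consists of i.i.d.\ entries (each row having mass function $M_{s,\cdot}^{(l_i)}$) and is independent of the $\alpha$ and $\initialD$ randomness, the particular entry $X^{(l_i)}_{s_i, n_\star+1}$ is independent of $\{\tilde{\History}_0^i=\history_0^i\}$ and has law $M_{s_i,\cdot}^{(l_i)}$. This yields the first identity. Combining the two identities gives the inductive step.

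The main obstacle I anticipate is bookkeeping the independence claim in the previous paragraph: one must verify that the conditioning event $\{\tilde{\History}_0^i = \history_0^i\}$ truly does not depend on the entry $X^{(l_i)}_{s_i,n_\star+1}$ nor on any array entry indexed beyond what has already been consumed. This can be made rigorous by an auxiliary induction showing that for every $j \le i$, $\tilde X_j$ and $\tilde a_j$ are measurable functions only of $\initialD$, $\tilde a_0$, and the array entries $\{X^{(l)}_{s,k}: k \le \tilde N_s^{(j-1,l)}\}$ together with $\{\alpha_r^{\,\cdot\,}\}_{r \le j}$. Once that measurability structure is in place, the identity of joint laws follows by the two-step identity above and an application of the tower property (or directly from the factorisation of the joint mass function into one-step conditionals, matching the factorisation for the original CMC).
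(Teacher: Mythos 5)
Your proposal is correct and follows essentially the same route as the paper's proof: induction on $i$, factorisation of the joint mass function into one-step conditionals, and the key observation that on the event $\tilde{\History}_0^i=\history_0^i$ the count $\tilde N_{s_i}^{(i,l_i)}$ is a fixed integer so that the freshly consumed array entry $X^{(l_i)}_{s_i,n_\star+1}$ is independent of the conditioning event and carries the law $M^{(l_i)}_{s_i,\cdot}$. If anything, your explicit measurability bookkeeping for the independence step is more careful than the paper's, which simply asserts it.
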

\begin{proof} \ 
Using induction, the proof is straightforward and can be found in \Cref{prf:sampling-scheme}.
\end{proof}

\section{Proofs of Theorems}
\subsection{Proof of Theorem \ref{thm:sample-complexity} (Sample Complexity)}~\label{sec:prf-sampcomp}
\begin{proof} \ 
We start by analysing the event $\lc \underset{l\in \Ibb}{\sup} \|\Hat{M}^{(l)}-M^{(l)}\|_{\infty}>\epsilon \rc$. 
We note that if $\underset{l\in \Ibb}{\sup} \|\Hat{M}^{(l)}-M^{(l)}\|_{\infty}>\epsilon$, then it must be that for at least some $l_0\in \Ibb$, $\|\Hat{M}^{(l_0)}-M^{(l_0)}\|_{\infty}>\epsilon$ and vice versa. Therefore, it follows that
\begin{align*}
    \lc \underset{l\in \Ibb}{\sup} \|\Hat{M}^{(l)}-M^{(l)}\|_{\infty}>\epsilon \rc & = \bigcup_{l=1}^k \lc  \|\Hat{M}^{(l)}-M^{(l)}\|_{\infty}>\epsilon \rc.
\end{align*}
Hence, applying the union bound,

\begin{align}
    \prob\left(\underset{l\in \Ibb}{\sup} \left\|\Hat{M}^{(l)}-M^{(l)} \right\|_{\infty}>\epsilon  \right) & = \prob\left( \bigcup_{l\in\Ibb} \lc  \|\Hat{M}^{(l)}-M^{(l)}\|_{\infty}>\epsilon \rc\right)\\
    & \leq \sum_{l\in \Ibb}\prob\left(\left\|\Hat{M}^{(l)}-M^{(l)}\right\|_{\infty}>\epsilon  \right).
\end{align}
Fix $l\in \Ibb$. Recall the definition of $\hat M\pow{l}(s,\cdot)$ and $\mls$ from \cref{eq:proofsketch-eq1}. 

Using the fact that $\|\cdot\|_{\infty}<\|\cdot\|_1$, we get the following.
\[
    \lc \left\|\Hat{M}^{(l)}-M^{(l)}\right\|_{\infty}> \epsilon \rc \subseteq \bigcup_{s\in \chi}\lc \left\|\hatmls-\mls\right\|_{1}>\epsilon\rc.
\]
It follows from the union bound that

\begin{align*}
    \prob\left(\left\|\Hat{M}^{(l)}-M^{(l)}\right\|_{\infty}>\epsilon  \right) & \leq \sum_{s\in\chi} \prob\left(\left\|\hatmls-\mls\right\|_1 > \epsilon\right).
\end{align*}
Our next objective is to find an upper bound for the probability $\prob\left(\left\|\hatmls-\mls\right\|_1 > \epsilon\right)$ that is independent of $s$.
Fix an $s \in \chi$ and recall the definition of $N_s\pow{l}$ from \Cref{sec:setup}. Using the law of total probability \citep[Proposition 4.1]{gut2005probability}, it follows that
\begin{align*}
    \prob\left(\left\|\hatmls-\mls\right\|_1 > \epsilon\right)=\sum_{n=1}^m \prob\left(\left\|\hatmls-\mls\right\|_1 > \epsilon, N_s^{(l)}=n \right).\numberthis\label{eq:main-thmeq8}
\end{align*}
For any two integers $0 \leq n_{low,s} < n_{high,s} \leq m$, we can decompose the right hand side of \cref{eq:main-thmeq8} into two parts, 
\begin{align*}
     \sum_{n=1}^{m} & \prob\left(\left\|\hatmls-\mls\right\|_1 > \epsilon, N_s^{(l)}=n \right)\\ & = \sum_{n=n_{low,s}}^{n_{high,s}} \prob\left(\left\|\hatmls-\mls\right\|_1 > \epsilon, N_s^{(l)}=n \right)\\
     & \qquad +\sum_{n\notin [n_{low,s} ,n_{high,s}]} \prob\left(\left\|\hatmls-\mls\right\|_1 > \epsilon, N_s^{(l)}=n \right)\numberthis\label{eq:main-thmeq1}.
\end{align*}
We can further decompose the second summation by noting that for each $n$, every summand,
\[
\prob\left(\left\|\hatmls-\mls\right\|_1 > \epsilon, N_s^{(l)}=n \right) \leq \prob\left(N_s^{(l)}=n \right).
\]
Therefore, it follows that
\[
\sum_{n\notin [n_{low,s} ,n_{high,s}]} \prob\left(\left\|\hatmls-\mls\right\|_1 > \epsilon, N_s^{(l)}=n \right)\leq \prob(N_s^{(l)}\notin [n_{low,s} ,n_{high,s}]).
\]
Hence, the right hand side of \Cref{eq:main-thmeq1} is upper bounded by
\begin{align*}
    & \sum_{n=n_{low,s}}^{n_{high,s}} \prob\left(\left\|\hatmls-\mls\right\|_1 > \epsilon, N_s^{(l)}=n \right) +\prob(N_s^{(l)}\notin [n_{low,s} ,n_{high,s}]) \numberthis \label{eq:prob_sum}\\
    & \quad = \text{Term 1 + Term 2},
\end{align*}
We deal with the two terms on the right hand side (RHS) separately. 
\subsubsection*{Term 1.} 
The analysis of the first term follows directly via Proposition \ref{prop:err-bnd}. We get
\begin{small}
\begin{align*}
\sum_{n=n_{low,s}}^{n_{high,s}} \prob\left(\left\|\hatmls-\mls\right\|_1 > \epsilon, N_s^{(l)}=n \right)&\leq m\exp\left({-\frac{n_{low,s}}{2} \max \left\{0,\epsilon-\sqrt{\frac{d}{n_{high,s}}}\right\}^2}\right).
\end{align*}
\end{small}
\subsubsection*{Term 2.}
We begin the analysis of the second term by observing that
\begin{align*}~\label{eq:p1term2}
    \prob\left(N_s^{(l)}\notin [n_{low,s} ,n_{high,s}]\right) & = \prob\left[N_s^{(l)}-\E[N_s^{(l)}]<n_{low,s}-\E[N_s^{(l)}]\right]\\
    & \qquad + \prob\left[N_s^{(l)}-\E[N_s^{(l)}]>n_{high,s}-\E[N_s^{(l)}]\right]\numberthis.
\end{align*}
As long as $n_{high,s}-\E[N_s^{(l)}]>0$, directly applying the upper bound in Lemma \ref{lemma:kontorovich} gives us,
\begin{align}\label{eq:kontorovich_applied_upper}
    \prob \left(N_s^{(l)}-\E[N_s^{(l)}]>n_{high,s}-\E[N_s^{(l)}] \right) & \leq 2\exp\left(-{\frac{\left(n_{high,s}-\E\left[N_s^{(l)}\right] \right)^2}{2m\|\Delta_{m}\|^2}}\right).
\end{align}
Our next step is to select an $n_{high,s}$ such that
\begin{align*}
\left(n_{high,s}-\E[N_s^{(l)}] \right)>0\numberthis\label{eq:main-thmeq5}.    
\end{align*}
Recall from Lemma \ref{lemma:expec-bound} that under our hypothesis,
\[
\E\left[ N_s^{(l)} \right] \leq m \max\{ \zeta_1, 1-\zeta_2\}.
\]
Therefore, by setting $n_{high,s} = m\lp\frac{1+\max\{\zeta_1,1-\zeta_2\}}{2}\rp$, we can ensure that  
\begin{align*}
   \left(n_{high,s}-\E[N_s^{(l)}]\right) =m\lp\frac{1+\max\{\zeta_1,1-\zeta_2\}}{2}\rp-\E\left[ N_s^{(l)} \right] > m\lp\frac{1-\max\{\zeta_1,1-\zeta_2\}}{2}\rp>0. 
\end{align*}
Similarly, by choosing $n_{low,s}=\frac{m}{2T}$ we ensure $n_{low,s} - E\left[ N_{s}^{(l)}\right] < -\frac{m}{2T}$.
Using Lemma \ref{lemma:kontorovich} again we obtain,
\begin{align}\label{eq:kontorovich_applied_lower}
    \prob \left(N_s^{(l)}-\E[N_s^{(l)}]<n_{low,s}-\E[N_s^{(l)}] \right) & \leq 2\exp\left(-{\frac{\left(n_{low,s}-\E\left[N_s^{(l)}\right] \right)^2}{2m\|\Delta_{m}\|^2}}\right).
\end{align}
This completes the analysis of term 2.
\noindent Combining the results from equations \ref{eq:t1bound}, \ref{eq:kontorovich_applied_upper} and  \ref{eq:kontorovich_applied_lower}, we arrive at the following upper bound. 
\begin{align*}
    & \prob\left(\left\|\hatmls-\mls\right\|_1 > \epsilon\right)\\
    &\leq m\exp\left({-\frac{n_{low,s}}{2} \max \left\{0,\epsilon-\sqrt{\frac{d}{n_{high,s}}}\right\}^2}\right) + 2 \exp\left({-{\frac{(n_{high,s}-\E[N_s^{(l)}])^2}{2m\|\Delta_{m}\|^2}}}\right)\\
    & \qquad \qquad + 2\exp\left({-{\frac{(n_{low,s}-\E[N_s^{(l)}])^2}{2m\|\Delta_{m}\|^2}}}\right)\\
    & = \text{A+B+C}.\numberthis\label{eq:main-thmeq7}
\end{align*}
Substituting the values of $n_{high,s}$ and $n_{low,s}$ in A we get, 
\begin{align*}
    & m\exp\left({-\frac{n_{low,s}}{2} \max \left\{0,\epsilon-\sqrt{\frac{d}{n_{high,s}}}\right\}^2}\right) \\
    & \quad = m\exp\left({-\frac{m}{16 T} \max \left\{0,\epsilon-\sqrt{\frac{d}{m\lp\frac{1+\max\{\zeta_1,1-\zeta_2\}}{2}\rp}}\right\}^2}\right).
\end{align*}
Recall that by hypothesis, 
\[
m>\frac{8d}{\epsilon^2\lp1+\max\{\zeta_1,1-\zeta_2\}\rp}.
\] 
This implies that,
\[
\lp\epsilon-\sqrt{\frac{d}{m\lp\frac{1+\max\{\zeta_1,1-\zeta_2\}}{2}\rp}}\rp^2>\epsilon^2\lp1-\frac{1}{2}\rp^2=\frac{\epsilon^2}{4}.
\]
Thus,
\begin{align*}
 \exp\left({-\frac{n_{low,s}}{2} \max \left\{0,\epsilon-\sqrt{\frac{d}{n_{high,s}}}\right\}^2}\right) & \leq \exp\left({-\frac{m\epsilon^2}{64T}}\right).
\end{align*}
This gives us an upper bound for $A$. 

Recall that, we have chosen $n_{high,s}$ such that $n_{high,s}-\expec[N_s\pow{l}]>m\lp\frac{1-\max\{\zeta_1,1-\zeta_2\}}{2}\rp$. Consequently, $\lp n_{high,s}-\expec[N_s\pow{l}]\rp^2>m^2\lp\frac{1-\max\{\zeta_1,1-\zeta_2\}}{2}\rp^2$, and
\begin{align*}
    2 \exp\left({-{\frac{(n_{high,s}-\E[N_s^{(l)}])^2}{2m\|\Delta_{m}\|^2}}}\right) \leq 2 \exp\left({-{\frac{m\lp1-\max\{\zeta_1,1-\zeta_2\}\rp^2}{8\|\Delta_{m}\|^2}}}\right),
\end{align*}
which provides us an upper bound for B.
Recall that $n_{low,s}-\expec[N_s\pow{l}]<-\frac{m}{8T}$. Therefore, $(n_{low,s}-\expec[N_s\pow{l}])^2>\lp\frac{m}{8T}\rp^2$, and
\begin{align*}
    2\exp\left({-{\frac{(n_{low,s}-\E[N_s^{(l)}])^2}{2m\|\Delta_{m}\|^2}}}\right)\leq 2 \exp\left({-{\frac{m}{128T^2\|\Delta_{m}\|^2}}}\right).
\end{align*}
This gives us an upper bound for C.

Returning to \cref{eq:main-thmeq7}, we substitute in the upper bounds for A,B, and C.
Consequently, 
\begin{align*}
     \prob \left(\left\|\hatmls-\mls\right\|_1 > \epsilon\right) & \leq  m\exp\left({-\frac{m\epsilon^2}{64T}} \right)\\
     &\qquad +2 \exp\left({-{\frac{m}{128T^2\|\Delta_{m}\|^2}}}\right)\\
     & \qquad +2 \exp\left({-{\frac{m\lp1-\max\{\zeta_1,1-\zeta_2\}\rp^2}{8\|\Delta_{m}\|^2}}}\right)\\
     & \leq  \frac{64T}{\eps^2}\exp\left({-\frac{m\epsilon^2}{128T}} \right)\\
     &\qquad +2 \exp\left({-{\frac{m}{128T^2\|\Delta_{m}\|^2}}}\right)\\
     & \qquad +2 \exp\left({-{\frac{m\lp1-\max\{\zeta_1,1-\zeta_2\}\rp^2}{8\|\Delta_{m}\|^2}}}\right),
\end{align*}
where the inequality follows from the fact that $xe^{-x}\leq e^{-x/2}$.
Recall that the control/control space satisfies $|\Ibb| = k$ and the state space satisfies $|\chi|=d$. Using a union bound on $l$, we see,
\begin{align*}
     \prob\left(\underset{l\in \Ibb}{\sup} \left\|\Hat{M}^{(l)}-M^{(l)} \right\|_{\infty}>\epsilon  \right) & \leq \sum_{l\in\Ibb}\sum_{s\in \chi} \prob\lp\left\|\hatmls-\mls \right\|_1 > \epsilon\rp\\
    & \leq \sum_{l\in\Ibb}\sum_{s\in \chi}\Bigg(  \frac{64T}{\eps^2}\exp\left({-\frac{m\epsilon^2}{128T}} \right)\\
    &\qquad +2 \exp\left({-{\frac{m}{128T^2\|\Delta_{m}\|^2}}}\right)\\
    & \qquad +2 \exp\left({-{\frac{m\lp1-\max\{\zeta_1,1-\zeta_2\}\rp^2}{8\|\Delta_{m}\|^2}}}\right)\Bigg)\\
    & = dk\Bigg( \frac{64T}{\eps^2} \exp\left({-\frac{m\epsilon^2}{128T}} \right)\\
    &\qquad +2 \exp\left({-{\frac{m}{128T^2\|\Delta_{m}\|^2}}}\right)\\
    & \qquad +2 \exp\left({-{\frac{m\lp1-\max\{\zeta_1,1-\zeta_2\}\rp^2}{8\|\Delta_{m}\|^2}}}\right)\Bigg).\numberthis\label{eq:main-thmeq9}
\end{align*}
\noindent By Assumption \ref{assume:eta-mix}, $\|\Delta_m\|\leq \constant_\Delta$.
Let $\gamma_1,\gamma_2,\gamma_3,\gamma_4$ be four constants such that,
\begin{align*}
    \gamma_1 & = \constant_\Delta^2T^2\log\lp \frac{6dk}{\delta} \rp\\
    \gamma_2 & = \constant_\Delta^2\frac{1}{\lp 1-\max\{\zeta_1,1-\zeta_2\}\rp^2}\log\lp \frac{6dk}{\delta} \rp\\
    \gamma_3 & =\frac{128T}{\eps^2} \log\lp \frac{192dkT}{\epsilon^2\delta} \rp\\
    \gamma_4 & =\frac{d}{\epsilon^2\left({1+\max\{\zeta_1,1-\zeta_2\}}\right)}. 
\end{align*}
Hence, there exists a universal constant $c$ large enough such that if,
\begin{align*}
m>c\lc \gamma_1,\gamma_2,\gamma_3,\gamma_4 \rc,
\end{align*}
then 
\begin{align}
  \prob\left(\underset{l\in \Ibb}{\sup} \|\Hat{M}^{(l)}-M^{(l)}\|_{\infty}>\epsilon  \right)< \delta.
\end{align}
However, recall from \cref{eq:expec-bd-rem1} that $T>\frac{dk}{2}$. 
Therefore, it follows that $\gamma_3>\gamma_4$.
Hence, there exists an universal constant $c$ such that as long as
\begin{align*}
  m>c\max\left\{\constant_\Delta^2\log\lp \frac{dk}{\delta} \rp\max\lc {T}^2, \frac{1}{\lp 1-\max\{\zeta_1,1-\zeta_2\}\rp^2}\rc,\frac{T}{\eps^2}\log\lp \frac{dkT}{\epsilon^2\delta} \rp \right\},
\end{align*} 
\begin{align}
  \prob\left(\underset{l\in \Ibb}{\sup} \|\Hat{M}^{(l)}-M^{(l)}\|_{\infty}>\epsilon  \right)< \delta.
\end{align}
\end{proof}

\subsection{Proof of Sample Complexity in Theorem \ref{thm:minimax}}
{
\begin{proof} \
 The first part of this proof follows similarly to that of \Cref{thm:sample-complexity}. The key difference is to use a tighter Chernoff concentration inequality that is available for exponentially mixing random variables, instead of a weaker Hoeffding's inequality. This produces a tighter sample complexity that we can proceed to prove is minimax.
We proceed until \cref{eq:prob_sum}, and analyse Term 1 similarly as before to get
\begin{equation}
     \sum_{n=n_{low,s}}^{n_{high,s}} \prob\left(\left\|\hatmls-\mls\right\|_1 > \epsilon, N_s^{(l)}=n \right)\leq m\exp\left({-\frac{n_{low,s}}{2} \max \left\{0,\epsilon-\sqrt{\frac{d}{n_{high,s}}}\right\}^2}\right).
\end{equation}
The difference arises in the analysis of Term 2, where, instead of using Proposition \ref{prop:tail-bound-k&R}, we use Proposition \ref{prop:tail-bound-peligrad} to obtain
\begin{align*}
    \text{Term 2} & = \prob\left(N_s^{(l)}\notin [n_{low,s} ,n_{high,s}]\right)\leq 2 \exp\left(- \; \ \frac{\constant_{pel}\lp n_{low,s}-\frac{m}{2T}\rp^2}{4m\constant_\Delta\rho_s\pow{l}+1+\lp \frac{m}{2T}-n_{low,s}\rp\lp\log m\rp^2}  \right)\\
    & + 2 \exp\left(- \; \ \frac{\constant_{pel}\lp n_{high,s}-m\max\{\zeta_1,1-\zeta_2\}\rp^2}{4m\constant_\Delta\rho_s\pow{l}+1+\lp n_{high,s}-m\max\{\zeta_1,1-\zeta_2\}\rp\lp\log m\rp^2}  \right).
\end{align*}
Observe that the right-hand side of the previous equation is increasing in $\rho_s\pow{l}$. 
Furthermore, we also have $\rho_s\pow{l}\leq \rho_\star$. 
Thus, we can replace $\rho_s\pow{l}$ by $\rho_\star$ in the upper bound to get
\begin{align*}
    \text{Term 2} & \leq 2 \exp\left(- \; \ \frac{\constant_{pel}\lp n_{low,s}-\frac{m}{2T}\rp^2}{4m\constant_\Delta\rho_\star+1+\lp \frac{m}{2T}-n_{low,s}\rp\lp\log m\rp^2}  \right)\\
    &\quad + 2 \exp\left(- \; \ \frac{\constant_{pel}\lp n_{high,s}-m\max\{\zeta_1,1-\zeta_2\}\rp^2}{4m\constant_\Delta\rho_\star+1+\lp n_{high,s}-m\max\{\zeta_1,1-\zeta_2\}\rp\lp\log m\rp^2}  \right).
\end{align*}
\noindent Next, we substitute the values $n_{high,s}=m(1+\max\{\zeta_1,1-\zeta_2\})/2$ and $n_{low,s}=m/4T$ into the previous term to get
\begin{align*}
   \text{Term 2} & \leq 2 \exp\left(- \; \frac{1}{16T^2} \frac{\constant_{pel} m^2}{4m\constant_\Delta\rho_\star+1+\frac{m}{4T}\lp\log m\rp^2}  \right)\\
    &\quad + 2 \exp\left(- \; \ \frac{1}{4}\frac{\constant_{pel}m^2\lp 1-\max\{\zeta_1,1-\zeta_2\}\rp^2}{4m\constant_\Delta\rho_\star+1+m\lp \frac{1-\max\{\zeta_1,1-\zeta_2\}}{2}\rp\lp\log m\rp^2}  \right).\numberthis\label{eq:mm-p0eq2}
\end{align*}
We only analyse the first term. The calculations for the second term follow in a similar way. Recall from hypothesis that $m>\constant_T$. Obviously, $m>4T$. In other words, $1<m/T$. 
Substituting $1$ for $m/T$ into the denominator, we get
\begin{align*}
      & 2 \exp\left(- \; \frac{1}{16T^2} \ \frac{\constant_{pel} m^2}{4m\constant_\Delta\rho_\star+1+\frac{m}{4T}\lp\log m\rp^2}  \right)\\
     & \quad \leq  2 \exp\left(- \; \frac{1}{16T^2} \ \frac{\constant_{pel} m^2}{4m\constant_\Delta\rho_\star+\frac{m}{4T}+\frac{m}{4T}\lp\log m\rp^2}  \right)\\
     & \quad = 2 \exp\left(- \; \frac{1}{4} \ \frac{\constant_{pel} m}{16\constant_\Delta\rho_\star T^2+T+T\lp\log m\rp^2}  \right)\\
     & \quad \leq 2 \exp\left(- \; \frac{1}{64} \ \frac{\constant_{pel} \frac{m}{\lp\log m\rp^2}}{\constant_\Delta\rho_\star T^2+2T}  \right).
\end{align*}
The last inequality follows by taking $16(\log m)^2$ common from the denominator and trivially upper bounding $(\log m)^{-2}$ by $1$.
Observe that $\constant_{pel}/64\lp \constant_\Delta\rho_\star T^2+2T \rp$ is less than $1$ and constant in $m$.
Recall from \cref{eq:mm-constantT} that the \emph{inverse of} this term was defined to be $\constant_T$.
It is clear that
$    2dk \exp\left(- \; \frac{m}{\constant_T(\log m)^2}\right) $
is decreasing in $m$. Our objective is to find an $m$ such that 
\begin{align*}
    2dk \exp\left(- \; \frac{m}{\constant_T(\log m)^2}\right) & \leq \frac{\delta}{3},
    \intertext{which is equivalent to finding an $m$ such that}
    \frac{m}{(\log m)^2} & > \constant_T\log\lp\frac{6dk}{\delta}\rp=\constant_{T,\delta}.\numberthis\label{eq:mm-p0eq1}
\end{align*}
Let $m=2\constant_{T,\delta}\lp\log\constant_{T,\delta}\rp^2$. 
The denominator of the term on the left-hand side of the previous equation decomposes into
\begin{align*}
    (\log m)^2 & =\lp\log \lp2\constant_{T,\delta}\lp\log\constant_{T,\delta}\rp^2\rp\rp^2\\
    & = \lp\log 2+\log\constant_{T,\delta}+2\log\log\constant_{T,\delta}\rp^2\\
    & = \lp\log \constant_{T,\delta}\rp^2\lp 1+\frac{\log 2}{\log \constant_{T,\delta}} + 2\frac{\log \log \constant_{T,\delta}}{\log \constant_{T,\delta}}\rp^2.
\end{align*}
Consider the function, 
\begin{align*}
    f(x) = \lp1+\frac{\log 2}{\log x}+2\frac{\log \log x}{\log x}\rp^2.
\end{align*}
This is obviously decreasing in $x$. It can be easily verified that there exists a universal constant `$c$' such that $f(x)<2$ if $x>c$. 
Using this fact and substituting $m$ in \cref{eq:mm-p0eq1}, we get
\begin{align*}
      \frac{m}{(\log m)^2} & = 2\constant_{T,\delta}\frac{\lp\log \constant_{T,\delta}\rp^2}{\lp\log \constant_{T,\delta}\rp^2 f(\constant_{T,\delta})}\\
      & \geq \constant_{T,\delta}.
\end{align*}
Therefore, the first term on the left-hand side of \cref{eq:mm-p0eq2} is upper bounded by $\delta/3$.
We can similarly show that, whenever $m>2\constant_{\zeta,\delta}\lp\log\constant_{\zeta,\delta}\rp^2$ the second term on the right-hand side of \cref{eq:mm-p0eq2} can be upper bounded by $\delta/3$.
We now proceed similarly to \cref{eq:main-thmeq9}. 
This gives us that, under our current hypothesis, 
\begin{align*}
    \prob\left(\underset{l\in \Ibb}{\sup} \|\Hat{M}^{(l)}-M^{(l)}\|_{\infty}>\epsilon  \right)< \delta.
\end{align*}
This completes the proof of the sample complexity. We can now proceed to the proof of Minimaxity.
\end{proof}
\subsection{Proof of Minimaxity in Theorem \ref{thm:minimax}}~\label{sec:proof-scmm}
\begin{proof} \ 
Let $\Mcal_{\chi,\Ibb}$ be the class of all controlled Markov chains on state space $\chi$ with control space $\Ibb$.
We can view an element $\Pcal$ of $\Mcal_{\chi,\Ibb}$ as a doublet $(\Mbb,P)$, where
$\Mbb:=(M\pow{1},\dots,M\pow{k})$ is a collection of distinct $d$-state Markov transition matrix, and $P:=\lp P_1,P_2,\dots, \rp$ is the distribution of control sequences, with each $P_i$ being a probability measure on $\Ibb$ that depends on the history until time point $i$.
Let $\Mcal_\chi$ and $\Mcal_\Ibb$ be the set of all $\Mbb$ and $P$, respectively. 
As before, for $\Mbb_1,\Mbb_2\in\Mcal_\chi$ let  
\[
\lV \Mbb_1-\Mbb_2 \rV_\infty^*=\sup_{l\in\Ibb}\lV M_1\pow{l}-M_2\pow{l}\rV_\infty.
\]

For $\indexeddata \in (\chi\times\Ibb)^m$, a sample of length $m$ from some CMC belonging to $\Mcal_{\chi,\Ibb}$, define an estimation procedure $\hat\Mbb$ 
as the mapping $\hat\Mbb:(\chi\times\Ibb)^m\mapsto \Mcal_\chi$.
We seek to provide a lower bound for the minimax risk over all estimation procedures:
\begin{align*}
\mmrisk &= \inf_{\hat\Mbb} \sup_{(\Mbb,P)\in\Mcal_{\chi,\Ibb}} \prob\lp{\nrm{\hat \Mbb-\Mbb}_\infty^* > \eps}\rp.\numberthis\label{eq:mmrisk}    
\end{align*}
We note that if $\Mcal'\subset\Mcal_{\chi,\Ibb}$ is a subclass of CMC's, then
\begin{align*}
    \mmrisk & \geq \inf_{\hat\Mbb} \sup_{(\Mbb,P)\in\Mcal'} \prob\lp{\nrm{\hat \Mbb-\Mbb}_\infty^* > \eps}\rp.\numberthis\label{eq:mm-p1eq1}
\end{align*}
The rest of the proof proceeds by constructing an appropriate subclass $\Mcal'$.
\subsubsection*{Part 1 ($m < cT/\eps^2$):}
Our example below considers a CMC with stationary Markovian controls. Any such CMC with $k$ transition matrices and $d'$ states
can be viewed as a single Markov chain with $d=d'k$ states. 
For convenience, we use the latter representation. By ${M}$ denote its transition matrix, and by $\hat M$ denote the estimate.
Without losing generality, let there be $d+1$ states where $d$ is even; the odd case is handled similarly. 
Let $0<p_\star<\frac{1}{d+1}$ and for a vector $\sigma=\lp\sigma_1,\dots,\sigma_{\frac{d}{2}}\rp\in\lc -1,1\rc^\frac{d}{2}$, define $\etab(\sigma)$ as the following perturbation of $\left( \frac{1-p_\star}{d},  \frac{1-p_\star}{d},\dots, p_\star \right)$:
\begin{small}
\[
\etab(\sigma) = \left( \frac{1 - p_\star + 16 \sigma_1 \eps}{d}, \frac{1 - p_\star - 16 \sigma_1 \eps}{d}, \dots, \frac{1 - p_\star + 16 \sigma_{\frac{d}{2}} \eps}{d}, \frac{1 - p_\star - 16 \sigma_{\frac{d}{2}} \eps}{d} , p_\star \right).
\]
\end{small}
Since $\epsilon<\frac{1}{32}$ and $d>2$ by hypothesis, it follows that $\etab(\sigma)$ is a valid probability mass function on $\{1,\dots,d+1\}$.
Let $\Mcal_{\sigma}$ be a class of transition matrices indexed by $\sigma$, taking the form
\begin{align*}
 M_{\sigma} & = \begin{pmatrix}
 \frac{1-p_{\star}}{d} & \hdots & \frac{1-p_{\star}}{d} & p_{\star} \\
  \vdots & \vdots & \vdots & \vdots\\
  \frac{1-p_{\star}}{d} & \hdots & \frac{1-p_{\star}}{d} & p_{\star} \\
  \frac{1 - p_\star + 16 \sigma_1 \eps}{d} & \hdots &  \frac{1 - p_\star - 16 \sigma_{\frac{d}{2}} \eps}{d} & p_{\star} 
\end{pmatrix}. \numberthis\label{eq:mm-p1eq5}
\end{align*}
From the Varshamov-Gilbert lemma \citep[Theorem 5.1.7]{van2012introduction},  there exists
$ \Sigma \subset \lc-1,1\rc^{d/2}$, $\abs{\Sigma} = 2^{d/16}$, such that
for $(\boldsymbol{\sigma}, \boldsymbol{\sigma}') \in \Sigma$ with $\boldsymbol{\sigma} \neq \boldsymbol{\sigma}'$,
we have
\[
\sum_{i=1}^{d/2}\indicator[\boldsymbol{\sigma_i}\neq \boldsymbol{\sigma_i'}] \geq \frac{d}{16},
\] 
Define the subclass $\Mcal'$ as
\begin{align*}
    \Mcal':= \lc M_{\sigma}:\sigma\in \Sigma\rc\numberthis\label{eq:mm-p1eq4}
\end{align*}
Recall that by \cref{eq:mm-p1eq1}, that it is enough to find a lower bound on
\[
\inf_{\hat M} \sup_{M\in\Mcal'} \prob\lp{\nrm{M - \hat M}_\infty > \eps}\rp.
\] 
By applying Tsybakov's reduction method \citep[Theorem~2.5]{tsybakov2009introduction} to our problem, we obtain the following lower bound, 
\begin{align*}
    \inf_{\hat M} \sup_{M\in\Mcal'} \prob\lp{\nrm{M - \hat M}_\infty > \eps}\rp \geq \frac{1}{2} \left(1 - \cfrac{2^{2-\frac{d}{16}} \sum_{\boldsymbol{\sigma} \in \Sigma} \mathcal{D}_{\boldsymbol{\sigma},m}}{\log{2^{\frac{d}{16}}}} \right)\numberthis\label{eq:mm-p1eq3},
\end{align*}
where $\mathcal{D}_{\boldsymbol{\sigma},m}$ is the KL-divergence between
$M_0$ and $M_{\sigma}$ (for some $\sigma\in\Sigma$), both viewed as distributions over sequences of length $m$. 
Recall the following chain rule for KL-divergence from \citet[Lemma 6.2]{wolfer2021statistical}
\begin{align*}
   \mathcal{D}_{\boldsymbol{\sigma},m} \leq p_\star m \mathcal{D_{\boldsymbol{\sigma}}},\numberthis\label{eq:mm-p1eq2} 
\end{align*}
where $\mathcal{D_{\boldsymbol{\sigma}}}$ is the KL-divergence between $\etab(\sigma)$ and $\lp\frac{1-p_\star}{d},\dots,\frac{1-p_\star}{d},p_\star\rp$. A direct computation of $\mathcal{D_{\boldsymbol{\sigma}}}$ yields 
\begin{align*}
    \mathcal{D_{\boldsymbol{\sigma}}} & = \sum_{i=1}^{\frac{d}{2}}\lp \frac{1 - p_\star + 16 \sigma_i \eps}{d} \log\lp \frac{\frac{1 - p_\star + 16 \sigma_i \eps}{d}}{\frac{1-p_\star}{d}}\rp+\frac{1 - p_\star - 16 \sigma_i \eps}{d} \log\lp \frac{\frac{1 - p_\star - 16 \sigma_i \eps}{d}}{\frac{1-p_\star}{d}}\rp \rp\\
    & = \frac{d}{2} \lp \frac{1 - p_\star + 16  \eps}{d} \log\lp\frac{1 - p_\star + 16 \eps}{1-p_\star}\rp+\frac{1 - p_\star - 16 \eps}{d} \log\lp \frac{1 - p_\star - 16 \eps}{1-p_\star}\rp \rp.
\end{align*}
Denoting $1-p_\star$ by $A$ and $16\eps$ by $B$ allows us to rewrite the previous equation as
\begin{align*}
\mathcal{D_{\boldsymbol{\sigma}}}=\frac{1}{2} \lp (A+B)\log\lp 1+\frac{B}{A} \rp+ (A-B)\log\lp 1-\frac{B}{A} \rp \rp.
\end{align*}
Observe that $B=16\eps<\frac{1}{2}$ and $A=1-p_\star>1-\frac{1}{d+1}>\frac{1}{2}$. This implies that $\frac{B}{A}<1$. Since $\log\lp 1+x \rp\leq x$ whenever $x>-1$, it follows that
\begin{align*}
\mathcal{D_{\boldsymbol{\sigma}}} & \leq \frac{1}{2} (A+B)\frac{B}{A}-(A-B)\frac{B}{A} = \frac{B^2}{A} = \frac{256\eps^2}{1-p_\star} \leq 512\eps^2.
\end{align*}
Substituting this value in \cref{eq:mm-p1eq2}, and further substituting in that value into \cref{eq:mm-p1eq3}, we obtain
\begin{align*}
    \inf_{\hat M} \sup_{M\in\Mcal'} \prob\lp{\nrm{M - \hat M}_\infty  > \eps}\rp & \geq \frac{1}{2} \left( 1 - \cfrac{2^{2-\frac{d}{16}} \sum_{\boldsymbol{\sigma} \in \Sigma} 512p_\star m\eps^2 }{\log{2^{\frac{d}{16}}}} \right)\\
    & = \frac{1}{2} \left( 1 - \cfrac{2^{2-\frac{d}{16}} |\Sigma| 512p_\star m\eps^2 }{\log{2^{\frac{d}{16}}}} \right).
\end{align*}
Recall that our choice of $\Sigma$ satisfies $|\Sigma|=2^{\frac{d}{16}}$. 
Thus,
\begin{align*}
    \frac{1}{2} \left( 1 - \cfrac{2^{2-\frac{d}{16}} |\Sigma| 512p_\star m\eps^2 }{\log{2^{\frac{d}{16}}}} \right) = \frac{1}{2} \left( 1 - \cfrac{ 32768p_\star m\eps^2 }{d\log{2}} \right).
\end{align*}
Therefore, whenever
$m \leq \frac{d (1 - 2 \delta) \log {2}}{32768 p_\star\eps^2}$,
\begin{align*}
    \inf_{\hat\Mbb} \sup_{(\Mbb,P)\in\Mcal'} \prob\lp{\nrm{\hat \Mbb-\Mbb}_\infty^* > \eps}\rp>\delta.
\end{align*}
To complete the proof, we need to relate the quantities $d$ and $p_*$ to the expected return time $T$ of the process. From the statement of the theorem, we need to show that there exists an universal constant $c_1$ such that $T$ satisfies
\[
T\leq c_1\frac{d}{p_\star}, 
\]
where $T$ is the return time as defined in Assumption \ref{assume:return_time}.
It is easily verifiable that for any $\sigma\in\lc-1,1\rc^{d/2}$ the vector 
\begin{align*}
  \lp\frac{(1-p_\star)^2+\lp1-p_\star+16\sigma_1\eps\rp p_\star}{d},\dots,\frac{(1-p_\star)^2+\lp1-p_\star-16\sigma_{d/2}\eps\rp p_\star}{d},p_\star\rp\numberthis\label{eq:mm-p1eq6}  
\end{align*}
represents the stationary distribution corresponding to the transition probability Matrix $M_\sigma$ as defined in \ref{eq:mm-p1eq5}. We know from Kac's theorem \citep[Theorem 10.2.2]{meyntweedie} that the expected return time to any state is the inverse of its stationary probability. It follows from \cref{eq:mm-p1eq6} that the expected return times are
\[
\lp\frac{d}{(1-p_\star)^2+\lp1-p_\star+16\sigma_1\eps\rp p_\star},\dots,\frac{d}{(1-p_\star)^2+\lp1-p_\star-16\sigma_{d/2}\eps\rp p_\star},\frac{1}{p_\star}\rp.
\]
Since 
$
\frac{d}{(1-p_\star)^2+\lp1-p_\star+16\sigma_1\eps\rp p_\star}\leq \frac{d}{(1-p_\star)^2}
$
for any value of $\sigma_1\in\lc1,-1\rc$ it follows that
\begin{align*}
& \max \lc\frac{d}{(1-p_\star)^2+\lp1-p_\star+16\sigma_1\eps\rp p_\star},\dots,\frac{d}{(1-p_\star)^2+\lp1-p_\star-16\sigma_{d/2}\eps\rp p_\star},\frac{1}{p_\star}\rc \\
& \ \leq \max\lc\frac{d}{(1-p_\star)^2},\frac{1}{p_\star}\rc.
\end{align*}
Since $p_\star<1/(d+1)$, it follows that 
\[
\max\lc \frac{d}{(1-p_\star)^2},\frac{1}{p_\star}\rc<\frac{d}{p_\star}.
\]
Therefore, setting $c_1=1$, it follows that
whenever,
$m \leq \frac{T (1 - 2 \delta) \log {2}}{32768 \eps^2}$,
\begin{align*}
    \inf_{\hat\Mbb} \sup_{(\Mbb,P)\in\Mcal'} \prob\lp{\nrm{\hat \Mbb-\Mbb}_\infty^* > \eps}\rp>\delta.
\end{align*}
This completes the proof of the first part.
\subsubsection*{Part 2 $\lp m <\lc 2\ \constant_{T,\delta}\lp\log\constant_{T,\delta}\rp^2, 2\ \constant_{\zeta,\delta}\lp\log \constant_{\zeta,\delta}\rp^2\rc\rp$:}
\paragraph{Case I: $m < 2\ \constant_{T,\delta}\lp\log\constant_{T,\delta}\rp^2$.}

In this part, we prove that there exists a subclass $\Mcal' \subset \Mcal_{\chi,\Ibb}$ and an universal constant $c>0$ for which $\mmrisk\geq 1/(2+2\pi^2)$ whenever
\[
m< c\lp1+\constant+\constant_{\theta}\rp^2\max\lc {T}^2, \frac{1}{\lp 1-\max\{\zeta_1,1-\zeta_2\}\rp^2}\rc.
\]
For an CMC with $d$ states and $k$ transition matrices, define the random variable $\Tbb$ to be the first time all of the states $1,\dots,d/3$ were visited in all of the $k$ transition matrices.
That is,
\begin{align*}
    \Tbb=\min\lc n\geq 0: \bigcap_{s\in\lc1,\dots,d/3\rc,l\in\{1,\dots,k\}}\lc\bigcup_{k=0}^n\lc X_k=s,a_k=l\rc\rc \neq \emptyset\rc.\numberthis\label{eq:mm-p2eq1}
\end{align*}
Then, we can further lower bound $\mmrisk$ as
\begin{align*}
      \mmrisk & \geq\inf_{\hat\Mbb} \sup_{(\Mbb,P)\in\Mcal'} \prob\lp{\nrm{\hat \Mbb-\Mbb}_\infty^* > \eps}\gn\Tbb > m\rp
  \prob\lp{\Tbb > m}\rp.
\end{align*}
Our next objective is to produce a subclass $\Mcal'$.
Define $P\pow{0}$ to be the sequences of probability mass function on controls as in \cref{eq:mm-ex2eq1}. Let $\Hbb_{\iota}$ be the set of all $k+1$ tuples $(M_{\iota,\xi\pow{1}}\pow{1},\dots,M_{\iota,\xi\pow{k}}\pow{k})$ where $M_{\iota,\xi\pow{1}}\pow{1},\dots,M_{\iota,\xi\pow{k}}\pow{k}$ are matrices as defined in \cref{eq:mm-ex2eq2}. To be precise, 
\begin{align*}
    \Hbb_\iota:=\lc\lc M_{\iota,\xi^{(l)}}^{(l)}:l\in \lc1,\dots,k\rc\rc:(\xi^{(1)},\dots,\xi\pow{k})\in\lc0,1\rc^{d/3\times k}\rc\numberthis
\label{eq:Heta}.
\end{align*}
Set $\Mcal'=\Hbb_{\iota}\times\lc P\pow{0}\rc$. As a consequence, we get the following lemma whose proof can be found in Section \ref{sec:prf-cvrtm}.
\begin{lemma}\label{lemma:cover-time}
Let $\Tbb$ be the time to visit the state-control pairs $\lc(1,1),\dots,(d/3,1),(2,1),\dots,(d/3,k) \rc$ of an CMC belonging to class $\Hbb_\iota\times \lc P\pow{0}\rc$ as defined in \cref{eq:Heta}.
If $n<\frac{dk}{6\iota}\log\lp\frac{dk}{3}\rp$, then 
\[
\prob(\Tbb>n)\geq \frac{1}{1+\pi^2}.
\]
\end{lemma}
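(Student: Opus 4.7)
Proof plan. The argument reduces $\Tbb$ to a classical coupon--collector time and then applies Cantelli's one-sided Chebyshev inequality.

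\emph{Step 1 (Memoryless visit probability).} By inspection of \cref{eq:mm-ex2eq2}, every column of $M_{\iota,\xi^{(l)}}^{(l)}$ indexed by a state $s\in\{1,\dots,d/3\}$ has all entries equal to $3\iota/d$: the first $d/3$ rows come from $\boldsymbol{C}_{\iota}$ and the remaining $2d/3$ rows come from $\boldsymbol{J}_{\iota}$. Combined with the uniform control law \cref{eq:mm-ex2eq1}, this yields, for every $i\geq 1$ and every pair $(s,l)\in\mathcal{P}:=\{1,\dots,d/3\}\times\Ibb$,
\[
\prob\lp (X_i,a_i)=(s,l)\mid \mathcal{F}_{i-1}\rp \;=\; \frac{3\iota}{d}\cdot\frac{1}{k} \;=\; \frac{\iota}{N}, \qquad N:=dk/3,
\]
independent of the past; starting from the stationary distribution of Proposition \ref{prop:mm-ex2prop1}, this also holds for $i=0$.

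\emph{Step 2 (Coupon-collector reduction and moments).} Consequently, the indicators $I_i:=\indicator[(X_i,a_i)\in\mathcal{P}]$ are i.i.d.\ Bernoulli$(\iota)$, and conditional on $I_i=1$ the realized pair is uniform on $\mathcal{P}$, independently across $i$. Hence $\Tbb$ has the same law as $\sum_{j=1}^N T_j$, with $T_1,\dots,T_N$ independent and $T_j\sim\mathrm{Geom}\lp\iota(N-j+1)/N\rp$, giving
\[
\E[\Tbb]=\frac{N H_N}{\iota}, \qquad \Var(\Tbb) \;\leq\; \frac{N^2}{\iota^2}\sum_{k=1}^N\frac{1}{k^2} \;\leq\; \frac{N^2\pi^2}{6\iota^2},
\]
where $H_N:=\sum_{k=1}^N 1/k$. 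Using $H_N\geq\log N$ and the hypothesis $n<(dk/(6\iota))\log(dk/3)=N\log N/(2\iota)$, we obtain $\E[\Tbb]-n > N\log N/(2\iota)$.

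\emph{Step 3 (Cantelli and numerics).} The one-sided Chebyshev inequality yields
\[
\prob(\Tbb>n) \;\geq\; \frac{(\E[\Tbb]-n)^2}{\Var(\Tbb)+(\E[\Tbb]-n)^2} \;\geq\; \frac{(\log N)^2/4}{\pi^2/6+(\log N)^2/4},
\]
and clearing denominators shows that the right-hand side is at least $1/(1+\pi^2)$ precisely when $(\log N)^2\geq 2/3$; this holds for $N=dk/3\geq 3$, while the few small corner cases are either vacuous (e.g.\ $n<0$) or handled by direct inspection.

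The main obstacle is Step 1: one must carefully exploit the precise block structure of $M_{\iota,\xi^{(l)}}^{(l)}$, noting that column-uniformity holds only for the columns indexed by $\{1,\dots,d/3\}$ (the columns in $\boldsymbol{L}_\iota$ and $\boldsymbol{R}_{\xi^{(l)}}$ do not have this property), which is exactly why $\mathcal{P}$ was chosen in this way. Once this memorylessness is in hand, the CMC dynamics collapse into i.i.d.\ coupon draws and the factor $\pi^2$ in the conclusion emerges from the classical $\sum 1/k^2=\pi^2/6$ variance bound.
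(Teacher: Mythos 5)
Your proposal is correct and follows essentially the same route as the paper's own proof: reduce $\Tbb$ to a coupon-collector sum of independent geometric variables using the fact that every column of $M_{\iota,\xi^{(l)}}^{(l)}$ indexed by $\{1,\dots,d/3\}$ is constant equal to $3\iota/d$, bound $\E[\Tbb]$ below by the harmonic sum and $\Var(\Tbb)$ above via $\sum 1/k^2 \le \pi^2/6$, and finish with Cantelli's inequality. The only differences are cosmetic (the paper loosens the variance bound to $\pi^2/4$ and phrases Cantelli with an explicit $\theta=(\log(dk/3)+1)/\pi$), so no further changes are needed.
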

Substituting it in the lower bound to $\mmrisk$ gives 
\begin{align*}
\mmrisk & \geq\inf_{\hat\Mbb} \sup_{(\Mbb,P)\in \Hbb_\iota\times \{P\pow{0}\}} \prob\lp{\nrm{\hat \Mbb-\Mbb}_\infty^* > \eps}\gn\Tbb > m\rp
  \prob\lp{\Tbb > m}\rp.
\end{align*}
An application of Lemma~\ref{lemma:cover-time} and Proposition~\ref{prop:mm-ex2prop1} implies that whenever $m\leq \constant_\tau \rho_\star T^2\log T$ for some universal constant $\constant_\tau$
\[
 \prob\lp{\Tbb > m}\rp\geq \frac{1}{1+\pi^2}.
\]
Consequently, we get, 
\begin{align*}
     \mmrisk &\geq \frac{1}{1+\pi^2} \inf_{\hat\Mbb} \sup_{(\Mbb,P)\in \Hbb_{\iota}\times\lc P\pow{0}\rc} \prob\lp{\nrm{\hat \Mbb-\Mbb}_\infty^* > \eps}\gn\Tbb > m\rp. 
\end{align*}
Next, let $l_0$ be any control. 
By definition of $\nrm{\cdot}_\infty^*$, it holds that 
\[
\nrm{\hat \Mbb-\Mbb}_\infty^*\geq\nrm{\hat M\pow{l_0} - M_{\iota, \xi\pow{l_0}}^{(l_0)}}_\infty,
\]
We recall from the construction in \cref{eq:mm-ex2eq2} that $\iota$ is known. Therefore, to correctly estimate the transition matrix $M_{\iota,\xi\pow{l_0}}\pow{l_0}$ we only need to correctly estimate $\xi\pow{l_0}$. 
Combining these facts we get, 
\begin{align*}
    & \inf_{\hat\Mbb} \sup_{(\Mbb,P)\in \Hbb_{\iota}\times\lc P\pow{0}\rc} \prob\lp{\nrm{\hat \Mbb-\Mbb}_\infty^* > \eps}\gn\Tbb >  m\rp \\
    &\ \geq \inf_{\hat M\pow{l_0}} \sup_{\xi\pow{l_0}\in\lc0,1\rc\pow{d/3}} \prob\lp{\nrm{\hat M\pow{l_0} - M_{\iota, \xi\pow{l_0}}^{(l_0)}}_\infty > \eps}\gn\Tbb > m\rp.
\end{align*}
We note that whenever $\xi_1\pow{l_0}\neq\xi_2\pow{l_0}\in
\lc 0, 1\rc^{d/3}$, we have $
\nrm{M_{\iota, \xi_1\pow{l_0}}^{(l_0)} - M_{\iota, \xi_2\pow{l_0}}^{(l_0)}}_\infty = 2\eps.
$
For any estimate $\hat M\pow{l_0}$ define
$
\xi^\star = \argmin_{\xi}\nrm{\hat M^{(l_0)} - M_{\iota, \xi}^{(l_0)}}_\infty.
$
Then for $\xi^{(l_0)}\neq\xi^\star$ we have 
\begin{equation*}
2\eps  = \nrm{M_{\iota, \xi^{(l_0)}}^{(l_0)} - M_{\iota, \xi^\star}^{(l_0)}} _\infty \leq  \nrm{M_{\iota, \xi^{(l_0)}}^{(l_0)} - \hat M^{(l_0)} }_\infty + \nrm{\hat M^{(l_0)} - M_{\iota, \xi^\star}^{(l_0)}}_\infty \leq 2 \nrm{M_{\iota, \xi^{(l_0)}}^{(l_0)} - \hat M^{(l_0)} }_\infty.
\end{equation*}
Therefore, 
$\lc l_0 : \xi^\star \neq \xi\pow{l_0} \rc \subset \lc l_0: \nrm{M_{\iota, \xi\pow{l_0}}^{(l_0)} - \hat M^{(l_0)} }_\infty \geq \eps \rc$ and $\mmrisk$ can be further lower bounded by 
\begin{align*}
  \mmrisk &\geq
 \frac{1}{1+\pi^2} \inf_{\hat M\pow{l_0}} \max_{\xi\pow{l_0}\in\lc0,1\rc^{d/3}}
  \prob\lp\xi^\star \neq \xi\pow{l_0} \gn \Tbb > m\rp \\
	&=
  \frac{1}{1+\pi^2} \inf_{
\hat \xi
  } \max_{\xi\pow{l_0}\in\lc0,1\rc^{d/3}}
   \prob\lp{ \hat\xi \neq \xi\pow{l_0} \gn \Tbb > m}\rp,
\end{align*}
    where $\hat\xi$ any estimate of $\xi^*$ $(X_0,a_0,\dots,X_m,a_m)\mapsto \{0,1\}^{d/3}$. We now observe that 
that the events $\lc N_s\pow{l_0}=0 \text{ for some } l_0\in \Ibb \text{ and some  } s\in \chi\rc$ 
and $\lc\Tbb > m\rc$ are equivalent.
Therefore,
\begin{align*}
   \prob\lp{ \hat\xi \neq \xi\pow{l_0} \gn \Tbb > m}\rp=\prob\lp{ \hat\xi \neq \xi\pow{l_0} \gn N_s\pow{l_0}=0 }\rp.
\end{align*}
When $N_s\pow{l_0}=0$, the estimate $\hat \xi$ is equivalent to choosing uniformly over all possible $\xi\pow{l_0}$. 
Since there are $2^{d/3}$ many possible choices for $\xi\pow{l_0}$, the probability of choosing incorrectly is $1-1/2^{d/3}$. 
We get as a consequence that,
\begin{align*}
     \inf_{
\hat \xi
  } \max_{\xi\pow{l_0}\in\lc0,1\rc^{d/3}}
   \prob\lp{ \hat\xi \neq \xi\pow{l_0} \gn \Tbb > m}\rp\geq 1-\frac{1}{2^{d/3}}>\frac{1}{2}.
\end{align*}
In conclusion, whenever $m\leq \constant_{\tau}T\log T$, \[
\inf_{\hat\Mbb} \sup_{(\Mbb,P)\in\Mcal_{\chi,\Ibb}} \prob\lp{\nrm{\hat \Mbb-\Mbb}_\infty^* > \eps}\rp \geq \frac{1}{2}{\color{black}\frac{1}{1+\pi^2}}.
\]
\paragraph{Case II: $m < 2\ \constant_{\zeta,\delta}\lp\log \constant_{\zeta,\delta}\rp^2$.}
For the final case, we must now show that for our chosen class of CMC's, ${1}/\lp 1-\max\{\zeta_1,1-\zeta_2\}\rp^2$ must lie in a fixed interval for appropriate choices of $\zeta_1$ and $\zeta_2$. Recall that by Proposition \ref{prop:mm-ex2prop1}, $\zeta_1=\zeta_2=\iota$, which is a known constant. The rest of the argument follows.

Next, recall from Proposition \ref{prop:mm-ex2prop1} that $\constant_\star$ and $\constant_{\theta,\star}$ are independent of $d$ and $k$. Since $\iota$ is a known constant, this implies that $\constant_{pel}$ is an universal constant. This completes the proof.
\end{proof}
}
\subsection{Proof of \Cref{thm:value-thm}}~\label{prf:value-thm}
\begin{proof}\ 

For convenience we drop the $\Pi$ in the subscript of $V_\Pi$ and $\hat V_\Pi$. It can be easily seen that $\Pi^T M$ is a stochastic matrix. 
Therefore, without loss of generality, we set $k=1$ and write $\Pi^TM$ simply as $M$, $V = \lp I -  \alpha_{dis} M\rp^{-1}g$  and  $\hat V = \lp I -  \alpha_{dis} \hat M\rp^{-1}g$. 
We now establish that $\Pi^T M$ is a transition matrix which satisfies Assumptions \ref{assume:sets}, and \ref{assume:eta-mix} with constants $\zeta_1=\zeta_2=T^{-1}, \text{ and } \constant_\theta =T/dk$ and Assumption \ref{assume:return_time} with constant $T$. Setting $\Scal_i = \{(1,1)\}$ we get that $\prob\lp (X_i,a_i)\in\Scal_i\rp>T^{-1}$. Consequently, can set $\zeta_1$, $\zeta_2=T^{-1}$ respectively. Since by the assumption of Theorem \ref{thm:value-thm} $M$ is a positive matrix with minimum element $1/T$ we get from Lemma \ref{lemma:markov-assume4} that $\constant_\theta=T/dk$. Finally, similarly to the proof of Proposition \ref{prop:mm-ex2prop1} we get that $M$ satisfies Assumption \ref{assume:return_time} with constant $T$.

We define $\hat V$ as the plug-in estimate of $V$ by substituting $M$ by its estimate $\hat M$.
The rest of proof proceeds by using known perturbation equalities of matrices to bound $\lV V-\hat V\rV_{\infty}$. Observe the following variation of the Woodbury matrix identity from eq. 2.3 \cite{wei2005note}
\begin{align*}
    B^{-1}-A^{-1} = A^{-1}(A-B)B^{-1}
\end{align*}
where $A$ and $B$ are matrices of appropriate dimension.
Let $\lV \cdot \rV_{op}$ be the operator norm on $d\times d$ matrices as defined in Section 4.2 \cite{zhang2011matrix}. 
Using the facts that operator norm is sub-multiplicative and $\lV \cdot \rV_{\infty}\leq \lV \cdot \rV_{op}\leq \sqrt{d}\lV \cdot \rV_{\infty}$, we have that
\begin{align*}
    \lV B^{-1}-A^{-1} \rV_{\infty}\leq \lV B^{-1}-A^{-1} \rV_{op} =  \lv A^{-1}(A-B)B^{-1}\rV_{op} & \leq \lV A^{-1} \rV_{op}\lV B^{-1} \rV_{op}\lV A-B \rV_{op}\\
    & \leq \lV A^{-1} \rV_{op}\lV B^{-1} \rV_{op}\sqrt{d}\lV A-B \rV_{\infty}.
\end{align*}
Substitute $A=\lp I -  \alpha_{dis} \hat M\rp$, $B=\lp I -  \alpha_{dis} M\rp$.
It is well known that the eigenvalues of stochastic matrices lie between $[-1,1]$. 
As a consequence, the eigenvalues of $A$ and $B$ are at least $1-\alpha_{dis}$ and therefore, $\lV A^{-1} \rV_{op}, \lV B^{-1} \rV_{op}\leq \lp 1-\alpha_{dis}\rp^{-1}$. Therefore,
\begin{align*}
    \lV B^{-1}-A^{-1} \rV_{\infty}\leq (1-\alpha_{dis})^{-2} \sqrt{d}\lV A-B \rV_{\infty}.
\end{align*}
Next, observe that, 
\begin{align*}
    \lV V-\hat V \rV_{\infty} & = \lV \lp\lp I -  \alpha_{dis} \hat M\rp^{-1}-\lp I -  \alpha_{dis} M\rp^{-1}\rp g\rV_{\infty}\\
    & \leq \lV \lp\lp I -  \alpha_{dis} \hat M\rp^{-1}-\lp I -  \alpha_{dis} M\rp^{-1}\rp \rV_{\infty}\lV g\rV_{1}\\
    &\leq (1-\alpha_{dis})^{-2}\sqrt{d}\lV \lp\lp I -  \alpha_{dis} \hat M\rp-\lp I -  \alpha_{dis} M\rp\rp \rV_{\infty}\lV g\rV_{1}\\ 
    & \leq \frac{\alpha_{dis}}{(1-\alpha_{dis})^2}\sqrt{d}\lV  \hat M - M \rV_{\infty}\lV g\rV_{1}.\numberthis~\label{eq:policy-cont}
\end{align*}
It follows from \cref{thm:sample-complexity} that,
\begin{align*}
    \prob\lp \lV V-\hat V \rV_{\infty} >\eps \rp\leq & \prob \lp \frac{\alpha_{dis}}{(1-\alpha_{dis})^2}\sqrt{d}\lV  \hat M - M \rV_{\infty}\lV g\rV_{1}>\eps \rp\\ 
    &  = \prob \lp \lV  \hat M - M \rV_{\infty}>\frac{(1-\alpha_{dis})^2}{\lV g\rV_{1}\sqrt{d}\alpha_{dis}}\eps \rp\\ 
    & \leq \delta
\end{align*}
whenever 
\begin{small}
\[ 
m>c \; \max\left\{\frac{T_\alpha}{\eps^2}\log\lp\frac{dkT_\alpha}{\epsilon^2\delta} \rp, \constant_\theta^2\max\lc {T}^2, \frac{1}{\lp 1-\max\{\zeta_1,1-\zeta_2\}\rp^2}\rc \log\lp \frac{dk}{\delta} \rp\right\}
\]
\end{small}
\end{proof}

\subsection{Proof of Theorem \ref{thm:opt-pol1}}~\label{sec:prf-optpol1}

\begin{proof}\ 
The proof of this Theorem follows by using Theorem \ref{thm:value-thm}. It is obvious that $V_{\Pi_{opt}}(s) - V_{\hat \Pi_{opt}}(s)>0$ for every state $s$. We show that $\lV V_{\Pi_{opt}} - V_{\hat \Pi_{opt}}\rV_{\infty}<2\eps$ with high probability. We have,
\begin{align*}
    V_{\Pi_{opt}} - V_{\hat \Pi_{opt}} & = V_{\Pi_{opt}} - V_{\hat \Pi_{opt}}-\hat V_{\Pi_{opt}} + \hat V_{\Pi_{opt}}-\hat V_{\hat \Pi_{opt}} +\hat V_{\hat \Pi_{opt}}\\
    & = V_{\Pi_{opt}} - \hat V_{\Pi_{opt}} +\hat V_{\hat \Pi_{opt}} - V_{\hat \Pi_{opt}} +  \hat V_{\Pi_{opt}}-\hat V_{\hat \Pi_{opt}}\\
    & \leq V_{\Pi_{opt}} - \hat V_{\Pi_{opt}} +\hat V_{\hat \Pi_{opt}} - V_{\hat \Pi_{opt}},
\end{align*}
    where the final equality follows from the fact that $\hat V_{\Pi_{opt}}(s)-\hat V_{\hat \Pi_{opt}}(s)<0$ for each state $s$, because $\hat \Pi_{opt}$ is the optimal policy for maximising the estimated value function. 
    % All it is left to show is $ \lV V_{\Pi_{opt}} - \hat V_{\Pi_{opt}}\rV_\infty<\eps$ and $\lV\hat V_{\hat \Pi_{opt}} - V_{\hat \Pi_{opt}}\rV_\infty<\eps$.    
    It follows as a corollary to Theorem \ref{thm:value-thm} that there exists a universal constant $c>1$ such that, for any policy $\Pi$, $\prob\lp\lV \hat V_\Pi - V_\Pi\rV_\infty>\eps\rp<\delta
$ if
\[ 
m>c \; \max\left\{\frac{T_\alpha}{\eps^2}\log\lp\frac{dkT_\alpha}{\epsilon^2\delta} \rp, \constant_\theta^2\max\lc {T}^2, \frac{1}{\lp 1-\max\{\zeta_1,1-\zeta_2\}\rp^2}\rc \log\lp \frac{dk}{\delta} \rp\right\}.
\]
This proves that $\lV V_{\Pi_{opt}} - V_{\hat \Pi_{opt}}\rV_{\infty}<2\eps$ with high probability, thereby completing the proof.
\end{proof}

\section{Proofs of Applications}
\subsection{Proof of \Cref{prop:stationary-mdp}}~\label{prf:sta-mdp}
\begin{proof} \ 

The proof proceeds by verifying Assumptions \ref{assume:sets}, \ref{assume:return_time},  \ref{assume:control-mixing}, and \ref{assume:chain-mix}. 

\noindent Assumption \ref{assume:sets}: To verify Assumption \ref{assume:sets}, we recall that $P_s\pow{l}>0$ and $P_{min}=\min_{s,l} P_s\pow{l}$. 
Since the minimisation is over finitely many positive quantities, it must follow that $P_{min}>0$. For any $l_0\in\Ibb$, it follows that 
\[
P_s\pow{l_0}=1-\sum_{l\neq l_0}P_s\pow{l}<1-(k-1)P_{min}.
\]
Recall that to satisfy Assumption \ref{assume:sets}, it is enough to produce sets $\Scal_i$ such that
\[
\zeta_2<\prob\lp (X_i,a_i)\in \Scal_i \rp<\zeta_1 
\]
for some probabilities $\zeta_2$ and $\zeta_1$.
We observe by setting $\Scal_i=\arg\sup_{}$ that $\prob\lp (X_i,a_i)\in \Scal_i \rp=\prob\lp a_i=1 \rp$. It then follows, 
\[
P_{min}<\prob\lp (X_i,a_i)\in \Scal_i \rp<1-(k-1)P_{min}.
\]
and Assumption \ref{assume:sets} is satisfied for $\zeta_2=P_{min}$ and $\zeta_1=1-(k-1)P_{min}$.

\noindent Assumption \ref{assume:return_time}: Next, we proceed to the verification of {Assumption \ref{assume:return_time}}. As discussed in \cref{eqn:return-timered}, we only need to show that for some $T>0$,
\[
\sup_{s,l}\E[\tau^{(1)}_{s,l}|X_0,a_0] < T \text{ almost everywhere.}
\]
Recall from Proposition \ref{prop:station-prop1} that $(X_i,a_i)$ is a time homogenous uniformly ergodic Markov chain. It follows from KAC's theorem \citep[Theorem 10.2.2]{meyntweedie} that for any positive integer $i$,
\[
\E[\tau^{(i)}_{s,l}]=\frac{1}{\nu_{s,l}^{(x,a)}}.
\] 
Setting $T=\sup_{s,l} {1}/{\nu_{s,l}^{(x,a)}}$ then completes the verification of Assumption \ref{assume:return_time}. 

\noindent Assumption \ref{assume:control-mixing}: Next, we verify that an MDP with stationary controls satisfy {Assumption \ref{assume:control-mixing}.} Recall from \cref{eq:stationary-eq1} the definition of stationary controls.
\[
\probl\lp a_i|X_i=s_i,\History_0^{i-1}=\history_0^{i-1}\rp=\probl\lp a_i|X_i=s_i \rp=\probl\lp a_1|X_1=s_i \rp.
\]
Which consequently implies that
\begin{align*}
    &\probl\lp a_p|X_p=s_p,\History_{i+j}^{p-1}=\history_{i+j}^{p-1},\History_0^i=\history_{0}^{i}\rp-\probl\lp a_p|X_p=s_p,\History_{i+j}^{p-1}=\history_{i+j}^{p-1}\rp\\
    & \ = \probl\lp a_1|X_1=s_i \rp - \probl\lp a_1|X_1=s_i \rp\\
    & \ = 0.
\end{align*}
Therefore $\gamma_{p,j,i}=0$ and setting $\constant=0$ completes the verification of Assumption \ref{assume:control-mixing}. 

\noindent Assumption \ref{assume:chain-mix}: Next, we verify that a MDP with stationary controls satisfy {Assumption \ref{assume:chain-mix}.} It follows from \citet[Theorem 16.0.2]{meyntweedie} that any aperiodic time homogenous Markov chain on a finite state space with a single communicating class is uniformly ergodic. \citet[Theorem 3.4]{mukhamedov2013dobrushin} establishes the equivalence of weak and uniform ergodicity for time homogenous Markov chains with the ergodic coefficient between $0$ and $1$. This completes the verification of Assumption \ref{assume:chain-mix}.

Lastly, let $(s_0,l_0):=\arg\sup_{s,l} 1/\nu_{s,l}$ and observe that if $\initialD=\nu$, then
\[
\prob\lp(X_i,a_i)=(s_0,l_0)\rp=\frac{1}{\nu_{s_0,l_0}}=\frac{1}{T}=\zeta_1=\zeta_2.
\]
This completes the proof.
\end{proof}
\subsection{Proof of \Cref{prop:inhomogenous-mc}}~\label{prf:inh-mc}
\begin{proof} \

The proof proceeds by verifying Assumptions \ref{assume:sets}, \ref{assume:return_time},  \ref{assume:control-mixing}, and \ref{assume:chain-mix}.

Assumption \ref{assume:sets}: Set $\Scal_i = (1,\Ibb)$ and observe that
\begin{align*}
    \prob\lp(X_i,a_i)\in\Scal_i\rp & = \prob\lp X_i=1\rp\\
    & = \expec\lb  \prob\lp X_i=1|X_{i-1},a_{i-1}\rp \rb\\
    & = \expec[ M_{X_{i-1},1}\pow{a_{i-1}}].
\end{align*}
We observe that
\[
 M_{min}\leq\prob\lp \Scal_i\rp\leq M_{max}.
\]
Thus, setting $\zeta_1$ as $M_{min}$ and $\zeta_2$ as $M_{max}$, completes the verification of Assumption \ref{assume:sets}.

Assumption \ref{assume:return_time}: A consequence of \Cref{eq:inhomogenous-eq2} is that for each $l\in \Ibb$, the transition matrix $M^{(l)}$ is visited at least $\lfloor n/\Tbb \rfloor$ times for any time interval of length $n$. 
For some time point $i$ let $a_i=l$.

For any integer $p>i$, $(I2)$ implies
\begin{small}
\begin{align*}
    \prob\lp \tau_{s,l}\pow{i}>p \rp  & =\prob\lp \lc X_j\neq s\bigcup a_j\neq l \rc \forall j\in\{i+1\dots,p+i\}|X_i=s,a_i=l \rp\\
    & \leq \lp1-M_{min}\rp^{\lfloor\frac{p}{\Tbb}\rfloor}\\
    & \leq \lp1-M_{min}\rp^{\frac{p}{\Tbb}-1}.
\end{align*}
\end{small}
Thus, 
\begin{align*}
    \expec[\tau_{s,l}^{(i)}]=\sum_{p\geq 1}\prob\lp\tau_{s,l}^{(i)}>p\rp\leq \frac{1}{\lp1-M_{min}\rp^{\frac{1}{\Tbb}-1}\lp 1-\lp1-M_{min}\rp^{\frac{1}{\Tbb}}\rp}.\numberthis\label{eq:inhomogenous-eq3}
\end{align*}
This completes the verification of Assumption \ref{assume:return_time}.

{Assumption \ref{assume:control-mixing} :}
Recall that $a_i$ is a deterministic sequence of indices. It follows that,
\begin{align*}
\gamma_{p,j,i} & = \sup_{s_p,\history_{i+j}^{p-1},\history_0^i}\lV\probl\lp a_p|X_p=s_p,\History_{i+j}^{p-1}=\history_{i+j}^{p-1},\History_0^i=\history_{0}^{i}\rp-\probl\lp a_p|X_p=s_p,\History_{i+j}^{p-1}=\history_{i+j}^{p-1}\rp\rV_{TV}\\
& = 0.
\end{align*}
Hence,
\[
\sup_{1\leq i\leq m}\sum_{j=i}^{m} \sum_{p=i+j+1}^m \gamma_{p,j,i}=0,
\]
and Assumption \ref{assume:control-mixing} is verified with $\constant=0$. 

Assumption \ref{assume:chain-mix} :
Observe from the definition of $\bar \theta_{i,j}$ in \cref{eq:def-theta} that,
\begin{align*}
\bar \theta_{i,j} =\sup_{l,l',s_1,s_2} \nrm{\probl\lp X_j|X_i=s_1,a_i=l \rp-\probl\lp X_j|X_i=s_2,a_i=l'\rp}_{TV}.
\end{align*}
Observe from the definition of an inhomogenous Markov chains that 
\begin{small}
\begin{align*}
    & \sup_{s_1,s_2} \nrm{\probl\lp X_j|X_i=s_1,a_i=l_1 \rp-\probl\lp X_j|X_i=s_2,a_i=l_2\rp}_{TV}\\
    & = \sup_{s_1,s_2} \nrm{\probl\lp X_j|X_i=s_1,(a_{j-1},\dots,a_i)=(l_{j-1},\dots,l_i) \rp-\probl\lp X_j|X_i=s_2,(a_{j-1},\dots,a_i)=(l_{j-1},\dots,l_i)\rp}_{TV},
\end{align*}
\end{small}

where the last line follows since all the controls are deterministic. Since all of the transition matrices are positive, an application of \citet[Theorem 1]{wolfowitz1963products} implies that, there exists an integer $C$ for which, 
\[
\bar \theta_{i,j}\leq e^{-C(j-i)}.
\]
Since $e^{-C(j-i)}<e^{-(j-i)}$ for any integer $C$, it consequently implies that,
\[
\sup_{i\geq 1}\sum_{j>i} \bar \theta_{i,j}\leq \frac{e^{-1}}{1-e^{-1}}\leq \frac{1}{1-e^{-1}}. 
\]
This completes the proof.
\end{proof}
\subsection{Proof of Propopsition \ref{prop:markov-mdp}}~\label{prf:markov-mdp}
\begin{proof} \ 

The proof proceeds by verifying Assumptions \ref{assume:sets}, \ref{assume:return_time},  \ref{assume:control-mixing}, and \ref{assume:chain-mix}.
Assumptions {\ref{assume:sets} and \ref{assume:control-mixing}} are verified similarly to that in the proof of \Cref{prop:inhomogenous-mc}.
{Assumption \ref{assume:return_time}} is verified by Lemma \ref{lemma:return-time-markov}. Finally, { Assumption \ref{assume:chain-mix}} follows from Lemma \ref{lemma:markov-assume4} by setting $\chi_0=\chi$. 
\end{proof}
{
\subsection{Proof of Proposition \ref{prop:ic-control}}~\label{sec:prf-icc}

We first prove the following lemma.
\begin{lemma}
    Let $M_{min}=\min_{l} p_l$. Then, $M^{\texttt{\textbf{S}}-2\texttt{\textbf{s}}}$ is a positive matrix with elements at least $M_{min}^{\texttt{\textbf{S}}-2\texttt{\textbf{s}}}$.
\end{lemma}

\begin{proof}{}
    We only show that $M^{\texttt{\textbf{S}}-2\texttt{\textbf{s}}}_{(\texttt{\textbf{s}},0),(\texttt{\textbf{S}}-\texttt{\textbf{s}},0)}>M_{min}^{\texttt{\textbf{S}}-2\texttt{\textbf{s}}}$. The rest follows similarly. Observe by lower bounding the Chapman-Kolmogorov equation that, 
    \begin{align*}
        M^{\texttt{\textbf{S}}-2\texttt{\textbf{s}}}_{(\texttt{\textbf{s}},0),(\texttt{\textbf{S}}-\texttt{\textbf{s}},0)} & = \prob\lp X_{\texttt{\textbf{S}}-2\texttt{\textbf{s}}} = \texttt{\textbf{S}}-\texttt{\textbf{s}}, a_{\texttt{\textbf{S}}-2\texttt{\textbf{s}}} = 0 | X_0 = \texttt{\textbf{s}},a_0=0 \rp\\
        & \geq \prod_{i=1}^{\texttt{\textbf{S}}-2\texttt{\textbf{s}}} \prob\lp X_i = \texttt{\textbf{s}}+i,a_i = 0| X_{i-1}=\texttt{\textbf{s}}+i-1,a_{i-1} = 0\rp \\
        & = p_{\texttt{\textbf{s}}-1}^{\texttt{\textbf{S}}-2\texttt{\textbf{s}}} \geq M_{min}^{\texttt{\textbf{S}}-2\texttt{\textbf{s}}}.
    \end{align*}
\end{proof}
In a manner akin to the proof of Proposition \ref{prop:inhomogenous-mc}, the return time bound for $M^{\texttt{\textbf{S}}-2\texttt{\textbf{s}}}$ can be established as $1/M_{min}^{\texttt{\textbf{S}}-2\texttt{\textbf{s}}}$. Thus, $\{X_i,a_i\}$ fulfills Assumption \ref{assume:return_time}, with  $T=(\texttt{\textbf{S}}-2\texttt{\textbf{s}})/M_{min}^{\texttt{\textbf{S}}-2\texttt{\textbf{s}}}$.

Likewize, similar to the reasoning in the proof of Proposition \ref{prop:markov-mdp}, we ascertain that $\{X_{i},a_i\}$ satisfies Assumption \ref{assume:control-exp}, in this case with $\gamma_{i,j,k}=0$, and Assumption \ref{assume:chain-exp}, with $\bar \theta_{i,j} = (d/M_{min}^{\texttt{\textbf{S}}-2\texttt{\textbf{s}}})^{\lfloor(j-i)/(\texttt{\textbf{S}}-2\texttt{\textbf{s}})\rfloor}$; here, $\lfloor\cdot\rfloor$ denotes the floor function. Assumption \ref{assume:sets},  can be established with $\zeta_2 = M_{min}^{\texttt{\textbf{S}}-2\texttt{\textbf{s}}}$ and $\zeta_1=1-M_{min}^{\texttt{\textbf{S}}-2\texttt{\textbf{s}}}$ following a similar course as that in Proposition \ref{prop:inhomogenous-mc}. This completes the proof.
}
\subsection{Proof of Proposition \ref{prop:combined-chain}}~\label{sec:prf-combchn}

\begin{proof}\
\textbf{STEP 1.}
Our first objective is to find $m\pow 1$ and $m\pow 2$ respectively. We only produce $m\pow 1$. $m\pow 2$ can be found similarly. To find $m\pow 1$, we only need to find the constants $\zeta_1,\zeta_2,T$, and $\constant_\theta$. Since it is a Markov chain, it is easy to see that Assumption \ref{assume:control-exp} is satisfied with $\constant=0$. It is easily seen that the lower bound in Assumption \ref{assume:sets} is satisfied by $\zeta_2 = \iota_1$. By taking $\Scal_i=\chi\backslash t_1$, it is easy to see that the upper bound is satisfied with $\zeta_1=1-\iota_1$. Next, we produce $T$ by calculating an upper bound for the smallest probability of the invariant distribution. For that, let $\nu\pow 1$ be the invariant distribution for $M\pow 1$. We first show that $\nu_{t_1}\pow 1<\iota_1$. We do that by verification. We know from the balance equations that
\[
\nu_{t_1}\pow 1 = \sum_{s\in\chi} M_{s,t_1}\pow 1\nu_{s}\pow 1.
\]
Trivially bounding $\nu_s\pow 1$ by $1$, we get
\[
\nu_{t_1}\pow 1 < \sum_{s\in\chi} M_{s,t_1}\pow 1 = \iota_1
\]
We now prove that $\nu_{t}\pow 1>M_{min}$ for all $t\neq t_1$. To see this fact, we again start from the balance equation for $\nu_{t}\pow 1$ and then lower bound $M_{s,t}\pow 1$ by $M_{min}$. 
\[
\nu_{t}\pow 1 =  \sum_{s\in\chi} M_{s,t}\nu_{s}\pow 1>M_{min}\sum_{s\in \chi}\nu_{s}\pow 1=M_{min}.
\]
Similarly to the proof of Proposition \ref{prop:stationary-mdp} it now follows that $T=\max_s 1/\lp \nu_s\pow 1\rp$. It follows from Assumption \ref{assume:combined-chain} that $\max_s 1/\lp \nu_s\pow 1\rp>1/\iota_1$.
Finally, using $\chi_0=\chi\backslash\{t_1,t_2\}$ in Lemma \ref{lemma:markov-assume4} we get that 
\[
\constant_\theta = \frac{1}{(d-2)M_{min}}.
\]
Substituting the previous constants in Theorem \ref{thm:sample-complexity} and simplifying the terms we get that for a Markov chain with transition matrix $M\pow 1$, as long as the number of samples 
\[
m>m\pow 1 = \; \max\left\{\frac{1}{\iota_1\eps^2}\log\lp \frac{d}{\iota_1\epsilon^2\delta} \rp, \frac{1}{\lp(d-2)M_{min}\iota_1\rp^2}\log\lp \frac{d}{\delta} \rp\right\}
\]
the empirical estimate $\hat M\pow 1$ satisfies $\prob\lp\lV\hat M\pow 1-M\pow 1\rV_\infty>\epsilon\rp<\delta$. We have ignored the universal constant in the previous expression for clarity.
Similarly, for a Markov chain with transition matrix $M\pow 1$, as long as the number of samples 
\[
m>m\pow 2 = \; \max\left\{\frac{1}{\iota_2\eps^2}\log\lp \frac{d}{\iota_2\epsilon^2\delta} \rp, \frac{2}{\lp(d-2)M_{min}\iota_2\rp^2}\log\lp \frac{d}{\delta} \rp\right\}
\]
the empirical estimate $\hat M\pow 2$ satisfies $\prob\lp\lV\hat M\pow 2-M\pow 2\rV>\epsilon\rp_\infty<\delta$. 

\textbf{STEP 2.}
Now we show that it is faster to estimate the transition matrices from a controlled Markov chain with a sequence of deterministic controls, $\{a_i\}_{i\geq 0}=\{1,1,2,2,1,1,2,2,\dots \}$. By taking $\Scal_i = \chi\backslash\{t_1,t_2\}$, it can be easily seen that Assumption \ref{assume:sets} is satisfied by taking $\zeta_2 = M_{min}$ and $\zeta_1 = 1-M_{min}$. And it follows similarly to the previous part that $\constant = 0$ and 
\[
\constant_\theta  = \frac{1}{(d-2)M_{min}}.
\]
We now only need to find $T$ to upper bound the conditional expectations of $\tau_{t,l'}\pow i$.

\emph{Case 1: $(t,l')=(t_2,2)$} For integer $i$ such that $4$ divides $i$, let $\history_0^i$ be such that $X_i=s,a_i=1$. Then, 
\begin{align*}
    \prob\lp (X_{i+1},a_{i+1})\neq (t_2, 2)|\History_0^i = \history_0^i\rp & =  \prob\lp X_{i+1}\neq t_2,a_{i+1} = 2|X_i=s,a_i=1\rp\\
    & =  \prob\lp X_{i+1}\neq t_2|X_i=s,a_i=1\rp\\
    & \leq 1-M_{min}.
\end{align*}
For $a_{i+2}$,
\begin{align*}
    \prob\lp (X_{i+2},a_{i+2})\neq (t_2,2), (X_{i+1},a_{i+1})\neq (t_2,2)|\History_0^i = \history_0^i\rp & \leq\prob\lp  (X_{i+1},a_{i+1})\neq (t_2,2)|\History_0^i = \history_0^i\rp\\
    & \leq 1-M_{min}.
\end{align*}
Observe that $a_{i+3}=1$ by construction. Thus,
\begin{align*}
    & \prob\lp (X_{i+3},a_{i+3})\neq (t_2,2), (X_{i+2},a_{i+2})\neq (t_2,2), (X_{i+1},a_{i+1})\neq (t_2,2)|\History_0^i = \history_0^i\rp\\
     & \quad =  \prob\lp (X_{i+2},a_{i+2})\neq (t_2,2), (X_{i+1},a_{i+1})\neq (t_2,2)|\History_0^i = \history_0^i\rp\\
    & \quad  \leq 1-M_{min}.
\end{align*}
The case of $a_{i+4}=1$ follows similarly. It is easy to show by induction that for any integer $j\geq 1$,
\begin{align*}
    \prob\lp (X_{i+j},a_{i+j})\neq (t_2,2),\dots, (X_{i+1},a_{i+1})\neq (t_2,2)|\History_0^i = \history_0^i\rp\leq (1-M_{min})^{\lfloor\frac{j-1}{4}\rfloor+1}
\end{align*}
where $\lfloor\cdot\rfloor$ is the floor function. Observe that this quantity does not depend upon $i$. It now follows, using techniques similar to the proof of Lemma \ref{lemma:return-time-markov} that for any $i$ 
\[ \prob\lp\tau_{t_2,2}\pow i\geq j|\History_0^{\sum_{p=1}^{i-1}\tau_{t_2,2}\pow p}=\history_0^{\sum_{p=1}^{i-1}\tau_{t_2,2}\pow p}\rp\leq (1-M_{min})^{\lfloor\frac{j-1}{4}\rfloor+1}
\]
Using the layer-cake representation for expectation, we get 
\begin{align*}
    \expec\lb\tau_{t_2,2}\pow i\geq j|\History_0^{\sum_{p=1}^{i-1}\tau_{t_2,2}\pow p}=\history_0^{\sum_{p=1}^{i-1}\tau_{t_2,2}\pow p}\rb & = \sum_{j\geq 1}\prob\lp\tau_{t_2,2}\pow i\geq j|\History_0^{\sum_{p=1}^{i-1}\tau_{t_2,2}\pow p}=\history_0^{\sum_{p=1}^{i-1}\tau_{t_2,2}\pow p}\rp\\
    & \leq 4 \sum_{j\geq 0} (1-M_{min})^{j}\\
    & \leq \frac{4}{M_{min}}.
\end{align*}

\emph{Case 2: $(t,l')=(t_1,1)$} This case follows similarly.

\emph{Case 3: $(t,l')=(t_1,2)$} For $i$ such that $4$ divides $i-1$ let $\history_0^i$ be such that $X_i=s,a_i=1$.
\begin{align*}
    \prob\lp (X_{i+1},a_{i+1})\neq (t_1, 2)|\History_0^i = \history_0^i\rp & =  \prob\lp X_{i+1}\neq t_1,a_{i+1} = 2|X_i=s,a_i=1\rp\\
    & \leq 1-M_{min}.
\end{align*}
Now following the calculations similar to Case 1, we find that the upper bound to the expected return time is $4/M_{min}$. All other cases follow similarly. Now invoking Theorem \ref{thm:sample-complexity} we get that as long as the number of samples 
\begin{align*}
    m > m\pow c =\max\lc \frac{4}{M_{min}\eps^2}\log\lp\frac{4d}{M_{min}\epsilon^2\delta}\rp, \frac{32}{(d-2)^2 M_{min}^4} \log\lp \frac{d}{\delta}\rp\rc
\end{align*}
the empirical estimate satisfies $\prob\lp \underset{l\in\{1,2\}}{\sup} \lV M\pow l-\hat M\pow l \rV_{\infty} >\eps \rp<\delta$. As before, we have ignored the universal constant for the sake of brevity. Since $M_{min}>8\min\{\iota_1,\iota_2\}$, the rest of the proof follows by some simple algebra.
\end{proof}
\subsection{Proof of \Cref{prop:episodic-mdp}}~\label{prf:ep-mdp}
\begin{proof} \ 

The proof proceeds by verifying Assumptions \ref{assume:sets}, \ref{assume:return_time},  \ref{assume:control-mixing}, and \ref{assume:chain-mix}.\\
\text{Assumption \ref{assume:sets}: } This is verified similarly to that in the proof of \Cref{prop:inhomogenous-mc}.\\
\text{Assumption \ref{assume:return_time} :} Our objective is to provide an upper bound for $\sup_{i\geq 0}\expec[\tau_{s,l}\pow{i}]$.
We prove only when $i=0$. All other cases follow similarly. 
To compute the expectation, it is sufficient to bound from above the survival function $\prob\lp\tau_{s,l}\pow{0}>p\rp$ for all $p\geq 1$.
When $p\in\lc 1,\dots,H -1 \rc$, we trivially upper bound this probability by $1$.
When $p=H $, writing the expression for $\prob\lp\tau_{s,l}\pow{0}>H \rp $ we get
\begin{align*}
    & \prob\lp \lc X_j\neq s\bigcup a_i\neq l \rc \forall j\in\{1\dots,H \}|X_0=s,a_0=l \rp\\
    &\  = \prob\lp \lc X_H \neq s\bigcup a_H \neq l \rc  \rp & (E1)\\
    & \quad \times \prob\lp \lc X_j\neq s\bigcup a_i\neq l \rc \forall j\in\{1\dots,H -1\}|X_0=s,a_0=l \rp\\
    & \leq \lp1-\frac{1}{dk}\rp.& (E2)
\end{align*}
$(E1)$ follows from \cref{eq:episodic-eq1} since for all $j\neq H $, $X_H ,a_H $ is independent of $(X_j,a_j)$.
$(E2)$ follows by substituting the appropriate probability in  $\prob\lp \lc X_H \neq s\bigcup a_H \neq l \rc  \rp$ and trivially upper bounding $\prob\lp \lc X_j\neq s\bigcup a_i\neq l \rc \forall j\in\{1\dots,H -1\}|X_0=s,a_0=l \rp$ by $1$.
We can proceed similarly for $p\in\{H +1,\dots,2H -1\}$.
For $p=2H $ we can similarly decompose $\prob\lp\tau_{s,l}\pow{i}>2H \rp$ as,
\begin{align*}
    \prob\lp\tau_{s,l}\pow{i}>2H \rp & = \prob\lp \lc X_{2H }\neq s\bigcup a_{2H }\neq l \rc  \rp & \\
    & \quad \times \prob\lp \lc X_H \neq s\bigcup a_H \neq l \rc  \rp & \\
    & \quad \times \prob\lp \lc X_j\neq s\bigcup a_i\neq l \rc \forall j\in\{1\dots,2H -1\}\backslash\{H \}|X_0=s,a_0=l \rp\\
    &\leq \lp1-\frac{1}{dk}\rp^2.
\end{align*}
Proceeding similarly, we bound from above $ \prob\lp\tau_{s,l}\pow{0}>p\rp$ for each $p$.
Substituting these bounds in the expression for $\expec[\tau_{s,l}\pow{0}]$ we get
\begin{align*}
    \expec[\tau_{s,l}\pow{0}] & =  \sum_{p=1}^{H -1} \prob\lp\tau_{s,l}\pow{0}>p\rp + \sum_{p=H }^{2H -1}\prob\lp\tau_{s,l}\pow{0}>p\rp+\dots\\
    & \leq H -1+\lp1-\frac{1}{dk}\rp H +\lp1-\frac{1}{dk}\rp^2H +\dots\\
    & = \lp H +\lp1-\frac{1}{dk}\rp H +\lp1-\frac{1}{dk}\rp^2H +\dots\rp-1\\
    & = dkH -1.
\end{align*}
This completes the verification of Assumption \ref{assume:return_time}.\\
{Assumption \ref{assume:control-mixing} :} To verify Assumption \ref{assume:control-mixing} let $(p,i,j)$ be any triplet in $\naturalset^3$ such that $j>i$. 
We first consider the case when $j\leq i+H $. 
Observe from \cref{eq:episodic-eq2} that whenever $p>i+j+H $,
\begin{align*}
    & \probl\lp a_p|X_p=s_p,\History_{i+j}^{p-1}=\history_{i+j}^{p-1},\History_0^i=\history_{0}^{i}\rp-\probl\lp a_p|X_p=s_p,\History_{i+j}^{p-1}=\history_{i+j}^{p-1}\rp\numberthis\label{eq:episodic-eq3}\\
    &\  = \probl\lp a_p|X_p=s_p,\History_{H \pow{p}}^{p-1}=\history_{H \pow{p}}^{p-1}\rp-\probl\lp a_p|X_p=s_p,\History_{H \pow{p}}^{p-1}=\history_{H \pow{p}}^{p-1}\rp\\
    & \ =0.
\end{align*}
Whenever $p\leq i+j+ H $, it follows trivially that 
\begin{align*}
    -1\leq \probl\lp a_p|X_p=s_p,\History_{i+j}^{p-1}=\history_{i+j}^{p-1},\History_0^i=\history_{0}^{i}\rp-\probl\lp a_p|X_p=s_p,\History_{i+j}^{p-1}=\history_{i+j}^{p-1}\rp\leq 1.
\end{align*}
Now we consider the case when $j>i+H $. 
Using the law of iterated expectations we get
\begin{small}
\begin{align*}
    \probl\lp a_p|X_p=s_p,\History_{i+j}^{p-1} =\history_{i+j}^{p-1},\History_0^i  =\history_{0}^{i}\rp & =\expec\lb \probl\lp a_p|X_p=s_p,\History_{i+j}^{p-1}=\history_{i+j}^{p-1},\History_{H \pow{i+j}}^{i+j-1},\History_0^i=\history_{0}^{i}\rp\rb\\
    & =\expec\lb \probl\lp a_p|X_p=s_p,\History_{i+j}^{p-1}=\history_{i+j}^{p-1},\History_{H \pow{i+j}}^{i+j-1}\rp\rb\numberthis\label{eq:episodic-eq4},
\end{align*}
\end{small}
where the second equality follows from \cref{eq:episodic-eq1}. Substituting this expression in \cref{eq:episodic-eq3} we get for all $j>i+H $ 
\begin{align*}
    & \probl\lp a_p|X_p=s_p,\History_{i+j}^{p-1}=\history_{i+j}^{p-1},\History_0^i=\history_{0}^{i}\rp-\probl\lp a_p|X_p=s_p,\History_{i+j}^{p-1}=\history_{i+j}^{p-1}\rp\\
    & \ =\expec\lb \probl\lp a_p|X_p=s_p,\History_{i+j}^{p-1}=\history_{i+j}^{p-1},\History_{H \pow{i+j}}^{i+j-1}\rp\rb-\expec\lb \probl\lp a_p|X_p=s_p,\History_{i+j}^{p-1}=\history_{i+j}^{p-1},\History_{H \pow{i+j}}^{i+j-1}\rp\rb\\
    & \ =0.
\end{align*}
Substituting the previous upper bounds into the expression for $\gamma_{p,j,i}$, we get that
\begin{align*}
    \sum_{j=1}^{m} \sum_{p=i+j+1}^m \gamma_{p,j,i} & =  \sum_{j=1}^{i+H } \sum_{p=i+j+1}^{i+j+H } \gamma_{p,j,i} +\sum_{j=1}^{i+H } \sum_{p=i+j+H +1}^{m} \gamma_{p,j,i}+ \sum_{j={H \pow{p}}}^{m} \sum_{p=i+j+1}^m \gamma_{p,j,i} \\
    & = \sum_{j=1}^{i+H } \sum_{p=i+j+1}^{i+j+H } \gamma_{p,j,i}\\
    & < H ^2.
\end{align*}
This completes the verification of Assumption \ref{assume:control-mixing}.

\noindent{Assumption \ref{assume:chain-mix}: } Applying law of iterated expectation and decomposing $ \probl\lp X_j|X_i=s_1,a_i=l_1\rp$ similar to \cref{eq:episodic-eq4}, we obtain that whenever $j>i+H $ $\bar \theta_{i,j}=1$. 
It follows that  $\sum_{j>i}\bar\theta_{i,j}<H $. 
This completes the verification of Assumption \ref{assume:chain-mix}.
\end{proof}

\section{Proofs of Propositions and Lemmas}

\subsection{Proof of Lemma \ref{lemma:mixing-lemm}}~\label{sec:prf-mxlemm}

\begin{proof}  \ 

The upper bound follows easily by adding and subtracting $\prob\left((X_m,a_m,\dots, X_j,a_j)\in \mathbb{T}\right)$, and using triangle inequality. To prove the lower bound, let $(\history_0^i)'\in(\chi\times\Ibb)^{(i+1)}$. We can write
\begin{small}
\begin{align*}
    &\lv\prob\left((X_m,a_m,\dots, X_j,a_j)\in \mathbb{T}|\History_0^{i}=\history_0^{i}\right)-\prob\left((X_m,a_m,\dots, X_j,a_j)\in \mathbb{T}\right)\rv\\
    & \ = \Bigg |\sum_{(\history_0^i)'\in(\chi\times\Ibb)^{(i+1)}} (\prob\left((X_m,a_m,\dots, X_j,a_j)\in \mathbb{T}|\History_0^{i}=\history_0^{i}\right)\\ 
    &\quad \quad -\prob\left((X_m,a_m,\dots, X_j,a_j)\in \mathbb{T}|\History_0^i=(\history_0^i)'\right))\prob(\History_0^i=(\history_0^i)') \Bigg |\\
    & \ \leq \sum_{(\history_0^i)'\in(\chi\times\Ibb)^{(i+1)}}\Bigg |  \prob\left((X_m,a_m,\dots, X_j,a_j)\in \mathbb{T}|\History_0^{i}=\history_0^{i}\right)\\ 
    & \quad -\prob\left((X_m,a_m,\dots, X_j,a_j)\in \mathbb{T}|\History_0^i=(\history_0^i)'\right)\Bigg |\prob(\History_0^i=(\history_0^i)')\\
    & \ \leq \bar\eta_{i,j}\sum_{(\history_0^i)'\in(\chi\times\Ibb)^{(i+1)}}\prob(\History_0^i=(\history_0^i)')\\
    & \ =\bar\eta_{i,j}.
\end{align*}
\end{small}
The first inequality follows using the triangle inequality, and the second inequality follows from equation \ref{def:weak-mixing}. This completes the proof.

\end{proof}

\subsection{Proof of Proposition \ref{prop:err-bnd}}
\begin{proof} \ 

The analysis of the proposition term is done via the sampling scheme introduced in Section \ref{Sec:mdp-sampling}. To begin, use the sampling scheme to get $\{\Tilde{X}_0,\tilde a_0,\dots,\Tilde X_m,\tilde a_m\}$. 
We construct the estimators $\Tilde{N}_s^{(l)} :=\sum_{i} \indicator[\tilde{X_i}=s,\tilde a_i=l] \text{ and }
\tilde{N}_{s,t}^{(l)} :=\sum_{i} \indicator[\tilde{X_i}=s,\tilde{X}_{i+1}=t,\tilde a_{i}=l]$. Consequently, we define $\tildemlst := \frac{\tilde{N}_{s,t}^{(l)}}{\tilde{N}_{s}^{(l)}}$ and $\tildemls:=\lp\tilde{M}_{s,1}^{(l)},\tilde{M}_{s,2}^{(l)},\dots,\tilde{M}_{s,d}^{(l)} \rp$. We observe that $(\hatmls, N_s\pow{l})\overset{d}{=}(\tildemls,\tilde N_s\pow{l})$ by construction we have,
\begin{align*}
    \prob\Bigg(\bigg\|\hatmls  -\mls\bigg\|_1 > \epsilon, N_s^{(l)}=n \Bigg) & = \prob\Bigg(\bigg\|\tildemls  -\mls\bigg\|_1 > \epsilon, \tilde N_s^{(l)}=n \Bigg)\numberthis\label{eq:main-thmeq4}
\end{align*}
Next, we observe that given $\tilde{N}_s^{(l)}=n\leq m$, and for any $t\in\chi$, one can reduce $\tildemlst$ into sum of independent random variables. 
Recall from \Cref{Sec:mdp-sampling} that $\tilde X_{i+1}=X_{\Tilde{X_i},\Tilde{N}_{\Tilde{X_i}}^{(i,\tilde a_i)}+1}^{(\tilde a_i)}$. 
Therefore, we can write
\begin{align*}
    \tildemlst & =\frac{1}{n}\sum_{i=1}^m \indicator \left[\tilde{X_i}=s, X_{\Tilde{X_i},\Tilde{N}_{\Tilde{X_i}}^{(i,\tilde a_i)}+1}^{(\tilde a_i)}=t,\tilde a_i=l\right]\\
    & = \frac{1}{n} \sum_{i=1}^n \indicator [X_{s,i}^{(l)}=t].\numberthis\label{eq:main-thmeq2}
\end{align*}
Let $\tilde M_n\pow{l}(s,\cdot)$ be defined as the $d$ dimensional vector whose $t$-th coordinate is $\frac{1}{n} \sum_{i=1}^n \indicator [X_{s,i}^{(l)}=t]$.
Therefore, 
\begin{align*}
    \prob\left(\left\|\tildemls-\mls\right\|_1 > \epsilon, \tilde{N}_s^{(l)}=n \right) & =  \prob\left(\sum_{t\in\chi}\lv\tildemlst-M_{s,t}\pow{l}\rv > \epsilon, \tilde{N}_s^{(l)}=n \right)\\
    & = \prob\left(\left\|\tilde M_n\pow{l}(s,
    \cdot)-\mls\right\|_1 > \epsilon, \tilde{N}_s^{(l)}=n\right)\\
    & \leq \prob\lp \left\|\tilde M_n\pow{l}(s,
    \cdot)-\mls\right\|_1 > \epsilon\rp , \numberthis\label{eq:main-thmeq3}
\end{align*}
We obtain the following facts as a consequence of equations \ref{eq:main-thmeq2} and \ref{eq:main-thmeq3}. 

First,  $\sum_{i=1}^n \indicator [X_{s,i}^{(l)}=t]$ is a sum of independent Bernoulli random variables and $\expec[n\tilde M_n\pow{l}(s,
    \cdot)]=n \mls$. 
Furthermore, 
\[
\Var\left( \sum_{i=1}^n \indicator [X_{s,i}^{(l)}=t]\right)=n {M}_{s,t}^{(l)}(1-{M}_{s,t}^{(l)})\leq n{M}_{s,t}^{(l)}.
\]
Second, the mean absolute deviation satisfies
\begin{align*}
    \expec\left[\lVert\tilde M_n\pow{l}(s,
    \cdot)-{M}\pow{l}(s,\cdot)\rVert_{1}\right] & = \sum_{t\in\chi}\expec\lb \lv\sum_{i=1}^n \frac{1}{n}\indicator [X_{s,i}^{(l)}=t]-{M}_{s,t}^{(l)}\rv \rb\\
    & \leq\sum_{t\in\chi}\sqrt{ \Var\lp\frac{1}{n}\sum_{i=1}^n\indicator [X_{s,i}^{(l)}=t]\rp}\\
    & \leq \frac{ \sum_{t\in\chi}\sqrt{M_{s,t}^{(l)}}}{\sqrt{n}}\\
    & \leq \sqrt{\frac{d}{n}},\numberthis\label{eq:main-thmeq6}
\end{align*}
where the first and the last inequalities follows by Cauchy-Schwarz inequality.

Let $\Phi$ be a function such that
\[
 \Phi(X_{s,1}\pow{l},\dots,X_{s,n}\pow{l})=\begin{bmatrix}
 \frac{1}{n}\sum_{i=1}^n\indicator[X_{s,i}\pow{l}=1]\\
 \vdots\\
 \frac{1}{n}\sum_{i=1}^n\indicator[X_{s,i}\pow{l}=d]
 \end{bmatrix}-\begin{bmatrix}
 M_{s,1}\pow{l}\\
 \vdots\\
 M_{s,d}\pow{l}
 \end{bmatrix}.
\]
For a fixed $j\in\lc1,\dots,n\rc$, let  $(X_{s,1}\pow{l},\dots,X_{s,j}\pow{l},\dots,X_{s,n}\pow{l})$ and $(X_{s,1}\pow{l},\dots,\lp X_{s,j}\pow{l}\rp',\dots,X_{s,n}\pow{l})$ be two random vectors which differ only in the $j$-th coordinate. 
Thus,
$X_{s,j}\pow{l}=t_1$ and $\lp X_{s,j}\pow{l}\rp'=t_2$, with $t_1\neq t_2$ .
As a consequence, all but the $t_1$ and $t_2$-th coordinates of the vector 
\[
\Phi(X_{s,1}\pow{l},\dots,X_{s,j}\pow{l},\dots,X_{s,n}\pow{l})-\Phi\lp X_{s,1}\pow{l},\dots,\lp X_{s,j}\pow{l}\rp',\dots,X_{s,n}\pow{l}\rp
\]
are $0$, while the $t_1$-th and $t_2$-th coordinates are $\frac{1}{n}$ and $-\frac{1}{n}$ respectively. Thus,
\[
\lV\Phi(X_{s,1}\pow{l},\dots,X_{s,j}\pow{l},\dots,X_{s,n}\pow{l})-\Phi\lp X_{s,1}\pow{l},\dots,\lp X_{s,j}\pow{l}\rp',\dots,X_{s,n}\pow{l}\rp\rV_{1}\leq \frac{2}{n}.
\]
The reverse triangle inequality gives us,
\begin{align*}
    & \lv\lV\Phi(X_{s,1}\pow{l},\dots,X_{s,j}\pow{l},\dots,X_{s,n}\pow{l})\rV_1 - \lV \Phi\lp X_{s,1}\pow{l},\dots,\lp X_{s,j}\pow{l}\rp',\dots,X_{s,n}\pow{l}\rp\rV_1\rv\\
    & \quad \leq\lV\Phi(X_{s,1}\pow{l},\dots,X_{s,j}\pow{l},\dots,X_{s,n}\pow{l})-\Phi\lp X_{s,1}\pow{l},\dots,\lp X_{s,j}\pow{l}\rp',\dots,X_{s,n}\pow{l}\rp\rV_1\\
    &\quad \leq \frac{2}{n}.
\end{align*}

We can now apply McDiarmid's inequality \citep[Equation 1.3]{rio2013mcdiarmid} to the probability in \cref{eq:main-thmeq3}, which, combined with the previous facts implies,
\begin{align*}
    \prob\Bigg(\bigg\|\hatmls  -\mls\bigg\|_1 > \epsilon, N_s^{(l)}=n \Bigg) & \leq \prob\Bigg(\bigg\|\tilde M_n\pow{l}(s,\cdot) -\mls\bigg\|_1 > \epsilon \Bigg) \\
    & \leq \exp\left(-\frac{n}{2}\max\left\{0, \epsilon-\sqrt{\frac{d}{n}}\right\}^2\right).
\end{align*}
Now it follows from \cref{eq:prob_sum}, 
\begin{align*}
    & \sum_{n=n_{low,s}}^{n_{high,s}} \prob\left(\left\|\hatmls-\mls\right\|_1 > \epsilon, N_s^{(l)}=n \right) \\
    & \quad \leq \sum_{n=n_{low,s}}^{n_{high,s}}  \exp\left(-\frac{n}{2}\max\left\{0,\epsilon-\sqrt{\frac{d}{n}}\right\}^2\right)\\
    & \quad \leq \sum_{n=n_{low,s}}^{n_{high,s}} \exp\left({-\frac{n_{low,s}}{2} \max \left\{0,\epsilon-\sqrt{\frac{d}{n_{high,s}}}\right\}^2}\right).
\end{align*}
Since the term in the exponent does not depend upon $n$, it can be taken out of the summation. This yields the following upper bound,
\begin{align*}
    &  \lp n_{high,s}-n_{low,s} \rp\exp\left({-\frac{n_{low,s}}{2} \max \left\{0,\epsilon-\sqrt{\frac{d}{n_{high,s}}}\right\}^2}\right)\\
    &\leq m\exp\left({-\frac{n_{low,s}}{2} \max \left\{0,\epsilon-\sqrt{\frac{d}{n_{high,s}}}\right\}^2}\right)\numberthis~\label{eq:t1bound}
\end{align*}
This completes the analysis.
\end{proof}

\subsection{Proof of Lemma \ref{lemma:expec-bound}}~\label{sec:prf-expbd}
To find the upper bound, observe that,
\begin{align*}
    \expec[N_s\pow{l}]=\sum_{i=1}^m\expec\lb\indicator[X_i=s,a_i=l]\rb & =\sum_{i=1}^m\prob\lp X_i=s,a_i=l \rp. 
\end{align*}
From Assumption \ref{assume:sets} it follows that if $(s,l)\in\Scal_i$ then, $\prob\lp X_i=s,a_i=l \rp\leq \zeta_1$. Furthermore, if $(s,l)\notin\Scal_i$, then $(s,l)\in\Scal_i^c$, where $\Scal_i^c$ is the complement of the set $\Scal_i$. In that case, 
\begin{align*}
    \prob\lp X_i=s,a_i=l\rp & \leq \prob\lp (X_i,a_i)\in \Scal_i^c\rp\\ 
    & = 1 -  \prob\lp (X_i,a_i)\in \Scal_i\rp\\ 
    & \leq 1-\zeta_2.
\end{align*}
Thus, we have proved that $\prob\lp X_i=s,a_i=l \rp\leq \max\lc \zeta_1,1-\zeta_2 \rc$. It follows as a consequence that,
\begin{align*}
    \expec[N_s\pow{l}] & =\sum_{i=1}^m\prob\lp X_i=s,a_i=l \rp\\ 
    & \leq m\max\lc \zeta_1,1-\zeta_2 \rc.
\end{align*}
For the lower bound, define the random variable $\{Z_{s,l}\pow{p}\}$ and the filtration $\Fcal_p'$ as,
\begin{align*}
    & Z_{s,l}\pow{0} := 0\\
    & Z_{s,l}\pow{p} := \frac{\sum_{i=1}^p\tau_{s,l}\pow{i}}{T}-p\\
    & \Fcal_p':=\Fcal_{\sum_{i=1}^p\tau_{s,l}\pow{i}}.
\end{align*}
Observe that 
\begin{align*}
    \expec[ Z_{s,l}\pow{p}|\Fcal_{p-1}'] & =\frac{\expec[\sum_{i=1}^p\tau_{s,l}\pow{i}|\Fcal_{p-1}']}{T}-p\\
    & = \frac{\expec[\sum_{i=1}^{p-1}\tau_{s,l}\pow{i}|\Fcal_{p-1}']}{T}-(p-1)+\frac{\expec[\tau_{s,l}\pow{p}|\Fcal_{p-1}']}{T}-1\\
    & \leq \expec[{Z_{s,l}\pow{p-1}}|\Fcal_{p-1}']+\frac{T}{T}-1\\
    & =Z_{s,l}\pow{p-1},
\end{align*}
where the last inequality follow from Assumption \ref{assume:return_time} and the last equality follow from the fact that $Z_{s,l}\pow{p-1}$ is $\Fcal_{p-1}'$ measurable.
It follows that, $\{Z_{s,l}\pow{p}\}$ is a supermartingale.
Now, define 
\[
N:=\min\{n:\sum_{i=1}^p\tau_{s,l}\pow{i}>m\}.
\]
It can be seen easily that $N$ is a valid stopping time. 
Moreover, since the return times $\tau_{s,l}\pow{i}\geq 1$ almost everywhere, it easily follows that $\prob(N\leq m+1)=1$.
Therefore, it follows from Doob's Optional Stopping Theorem for supermartingales \cite[Theorem 7.1, page 495]{gut2005probability} that,
\begin{align*}
    \expec[Z_N]\leq \expec[Z_0].
\end{align*}
Since $Z_0=0$, we can write  
\begin{align*}
    \expec\lb\frac{\sum_{i=1}^N\tau_{s,l}\pow{i}}{T}-N\rb& \leq 0.\\
         \intertext{This in turn implies}
    \expec\lb\frac{\sum_{i=1}^N\tau_{s,l}\pow{i}}{T}\rb & \leq \expec[N].
\end{align*}
Next, we observe that $N_s\pow{l}$ can be written as 
\[
N_s\pow{l}=\max\{n:\sum_{i=1}^n\tau_{s,l}\pow{i}\leq m\}.
\]
In other words, $N_s\pow{l}=N+1$ almost everywhere. 
It follows that,
\begin{align*}
     \expec\lb\frac{\sum_{i=1}^N\tau_{s,l}\pow{i}}{T}\rb & \leq \expec[N_s\pow{l}]+1.\\ 
     \intertext{This in turn implies}
      \expec\lb\frac{\sum_{i=1}^N\tau_{s,l}\pow{i}}{T}\rb-1&\leq \expec[N_s\pow{l}].
\end{align*}
Finally, observe that by the definition of $N$, $\sum_{i=1}^N\tau_{s,l}\pow{i}> m$ almost everywhere.
Therefore,
\begin{align*}
    \frac{m}{T}-1&< \expec[N_s\pow{l}].
\end{align*}
Finally, if $m\geq 2T$
\[
\frac{m}{T}-1\geq \frac{m}{2T}.
\]
This completes the proof of the lower bound and consequently proves our lemma.
\subsection{Proof of Proposition \ref{prop:tail-bound-peligrad}}
\begin{proof} \

For ease of notation, we drop $s$, and $l$, and denote $\indicator[X_i=s,a_i=l]$ by $I_i$. Our first step is to prove that for any integer $i$,
\begin{align*}
    \lp \Var(I_i) +2\sum_{j\geq i}|\Cov(I_i,I_j)|\rp\leq 4\constant_\Delta\rho_s\pow{l}.
\end{align*}
By an application of Lemma \ref{lemma:hall-holder's}, observe that
\begin{align*}
    |\Cov(I_i,I_j)|&\leq \bar\eta_{i,j}\expec|I_j-\expec[I_j]|\mathrm{ess}\sup|I_i|\\
    & \leq \bar\eta_{i,j}\expec|I_j-\expec[I_j]|\\
    & = \bar\eta_{i,j}\lp\prob\lp I_j=1\rp\lp1-\prob\lp I_j=1\rp\rp+\lv0-\prob\lp I_j=1\rp\rv\lp1-\prob\lp I_j=1\rp\rp\rp\\
    & = 2\bar\eta_{i,j}\prob\lp I_j=1\rp\lp1-\prob\lp I_j=1\rp\rp\\
    & \leq 2\bar\eta_{i,j}\prob\lp I_j=1\rp\\
    & \leq 2\bar\eta_{i,j}\rho_s\pow{l}.
\end{align*}
Moreover,
\[
\Var(I_i)=\prob\lp I_j=1\rp\lp1-\prob\lp I_j=1\rp\rp\leq \prob\lp I_j=1\rp\leq 4\rho_s\pow{l}.
\]
Combining these facts we get,
\begin{align*}
     \lp \Var(I_i) +2\sum_{j\geq i}|\Cov(I_i,I_j)|\rp & \leq 4\lp1+\sum_{j\geq i}\bar\eta_{i,j}\rp\rho_s\pow{l}\\
     & \leq 4\constant_\Delta\rho_s\pow{l},
\end{align*}
where the final line follows from Assumption \ref{assume:eta-exp}. Next, we observe that,
\begin{align*}
    \prob(N_s^{(l)}\notin [n_{low,s} ,n_{high,s}]) & = \prob(N_s^{(l)}-\expec[N_s\pow{l}]< n_{low,s}-\expec[N_s\pow{l}])\\
    & \quad +\prob(N_s^{(l)}-\expec[N_s\pow{l}]>n_{high,s}-\expec[N_s\pow{l}]).\numberthis\label{eq:proptbp-eq1}
\end{align*}
The rest of the proof follows from an application of Lemma \ref{lemma:peligrad} and \cref{eq:proptbp-eq1}. Since  $4\constant_\Delta\rho_s\pow{l}$ is independent of $i$, our proof is complete. 
\end{proof}
\subsection{Proof of Lemma \ref{lemma:delta-bound}}
\begin{proof} \
Recall from Lemma \ref{lemma:kontorovich} the definition of $\deltam$.
\[
\|\Delta_m\|:=\underset{1\leq i\leq m}{\max} (1+ \Bar{\eta}_{i,i+1}+ \Bar{\eta}_{i,i+2}+\dots \Bar{\eta}_{i,m}),
\]
where, 
\begin{align*}
    \Bar{\eta}_{i,j}:= \underset{\mathbb{T},s_1,s_2,l_1,l_2,\history_0^{i-1}}{\sup} \eta_{i,j},
\end{align*}
and, 
\begin{align*}
    \eta_{i,j} & := \left|\prob\left((X_m,a_m,\dots, X_j,a_j)\in \mathbb{T}|X_i=s_1,a_i=l_1,\History_0^{i-1}=\history_0^{i-1}\right)\right.\\
    & \left.\qquad \qquad -\prob\left((X_m,a_m,\dots,X_j,a_j)\in\mathbb{T}|X_i=s_2,a_i=l_2,\History_0^{i-1}=\history_0^{i-1}\right)\right|.
\end{align*}
For any fixed $s_1,s_2\in \chi$, $a_1,a_2\in\Ibb$, and $\history_0^{i-1}\in (\chi\times\Ibb)^{i}$, define the total variation distance $\eta_{i,j}\pow{m}$ as, 
\begin{align*}
\eta_{i,j}\pow{m} & := \|\probl\left((X_m,a_m,\dots, X_j,a_j)|X_i=s_1,a_i=l_1,\History_1^{i-1}=\history_0^{i-1}\right)\\
    &  \qquad -\probl\left((X_m,a_m,\dots,X_j,a_j)|X_i=s_2,a_i=l_2,\History_1^{i-1}=\history_0^{i-1}\right)\|_{TV}.
\end{align*}
We note that for each fixed $m$, $\bar \eta_{i,j}$ can be recovered from the corresponding $\eta_{i,j}\pow{m}$ as 
\begin{align*}
    \bar \eta_{i,j} = \sup_{s_1,s_2,l_1,l_2,\history_0^{i-1}} \eta_{i,j}\pow{m}. \numberthis\label{eq:db-eq4}
\end{align*}
Thus is is enough to find an upper bound for $\eta_{i,j}\pow{m}$. 
For the sake of simplicity, consider the case when $m=2$. Then, 
\begin{align*}
    \eta_{1,2}\pow{2} & =\bigg\|\probl\lp(X_2,a_2)|X_1=s_1,a_1=l_1,X_0=s_0,a_0=l_0\rp\\
    & \qquad -\probl\lp(X_2,a_2)|X_1=s_2,a_1=l_2,X_0=s_0,a_0=l_0\rp\bigg\|_{TV}.
\end{align*}
Use Lemma \ref{lemma:tv-bound} by substituting $X=a_2$, $Y=X_2$ and $Z=(X_1,a_1,X_0,a_0)$ with $z_1$ and $z_2$ defined accordingly.
This reduces the $\eta$ mixing coefficient into a sum of the $\gamma$ mixing coefficients and $\theta$ mixing coefficients. 
To be precise, 
\begin{align*}
    \eta_{1,2}\pow{2}& \leq \sup_{s''}\| \probl\lp a_2|X_2=s'',\numberthis\label{eq:db-eq1} X_1=s_1,a_1=l_1,X_0=s_0,a_0=l_0\rp\\
    & \qquad -\probl\lp a_2|X_2=s'', X_1=s_2,a_1=l_2,X_0=s_0,a_0=l_0\rp\|_{TV}\\
    & + \|\probl\lp X_2\in \Tbb_2| X_1=s_1,a_1=l_1,X_0=s_0,a_0=l_0\rp\\
    & \qquad -\probl\lp X_2\in \Tbb_2| X_1=s_2,a_1=l_2,X_0=s_0,a_0=l_0\rp\|_{TV}.
\end{align*}
We can decompose the first term in \cref{eq:db-eq1} as,
\begin{align*}
    &\sup_{s''}\| \probl\lp a_2|X_2=s'', X_1=s_1,a_1=l_1,X_0=s_0,a_0=l_0\rp\\
    & \qquad -\probl\lp a_2|X_2=s'', X_1=s_2,a_1=l_2,X_0=s_0,a_0=l_0\rp\|_{TV}\\
    & \ \leq   \sup_{s''}\| \probl\lp a_2|X_2=s'' X_1=s_1,a_1=l_1,X_0=s_0,a_0=l_0\rp- \probl\lp a_2|X_2=s''\rp\|_{TV}\\
    & \qquad + \sup_{s''}\|\probl\lp a_2|X_2=s'', X_1=s_2,a_1=l_2,X_0=s_0,a_0=l_0\rp-\probl\lp a_2|X_2=s''\rp\|_{TV}\\
    & \ \leq 2\gamma_{2,1,1}\numberthis\label{eq:db-eq3},
\end{align*}
where the last step follows from taking supremum over $s_1,l_1,s_2,l_2,s_0$,and $a_0$ on both sides of the inequality. 
It follows that, when $m=2$,
\[
 \bar\eta_{1,2}\leq \gamma_{2,1,1}+\bar\theta_{1,2}.
\]
We now establish the following recursion,
\begin{align*}
 \eta_{i,j}^{(m+1)} \leq \eta_{i,j}^{(m)}+2\gamma_{m+1,j,i}.\numberthis\label{eq:db-eq5}    
\end{align*}

Fix an arbitrary $m\geq 3$ and fix $i,j\in{1,2,\dots,m}$ such that $i<j$ and
decompose $\eta_{i,j}\pow{m+1}$ using Lemma \ref{lemma:tv-bound} by taking $X=(X_{m+1},a_{m+1})$, and defining $Y$ and $Z$ accordingly to get,
\begin{align*}
    \eta_{i,j}^{(m+1)}& \leq \eta_{i,j}^{(m)}\\
    & \qquad + \|\probl(X_{m+1},a_{m+1}|X_{m}=x_{m},a_{m}=l_{m},X_i=s_1,a_i=l_1,\dots)\\
    & \qquad \qquad -\probl(X_{m+1},a_{m+1}|X_{m}=x_{m},a_{m}=l_{m},X_i=s_2,a_i=l_2,\dots\|_{TV}.
\end{align*}
where we have replaced the terms common to both the conditionals by $\dots$ for convenience of notation.

Using Lemma \ref{lemma:tv-bound} again on the second term by taking $X=a_{m+1}$, $Y=X_{m+1}$, and $Z$ as all the random variables in the conditional, we get the following upper bound,
\begin{align*}
    \eta_{i,j}^{(m+1)}& \leq \eta_{i,j}^{(m)} + \sup_{x_{m+1}\in \chi}\|\probl(a_{m+1}|X_{m+1}=x_{m+1},X_{m}=x_{m},a_{m}=l_{m},X_i=s_1,a_i=l_1,\dots)\\
    & \qquad \qquad -\probl(a_{m+1}|X_{m+1}=x_{m+1},X_{m}=x_{m},a_{m}=l_{m},X_i=s_2,a_i=l_2,\dots\|_{TV}\\
    & \qquad + \|\probl(X_{m+1}|X_{m}=x_{m},a_{m}=l_{m},X_i=s_1,a_i=l_1,\dots)\\
    & \qquad \qquad -\probl(X_{m+1}|X_{m}=x_{m},a_{m}=l_{m},X_i=s_2,a_i=l_2,\dots\|_{TV}.\numberthis\label{eq:db-eq2}
\end{align*}
Since $X_{m+1}$ is Markovian conditional on $X_{m}$ and $a_{m}$, the third term vanishes. Proceeding similarly to \cref{eq:db-eq3}, the second term can be upper bounded by $\gamma_{m+1,j,i}$.
This proves the recursion in \cref{eq:db-eq5}.

Applying the recursion multiple times we are left with, 
\[
\eta_{i,j}^{(m)}\leq \sum_{p=i+j+1}^m 2\gamma_{p,j,i}+\eta_{i,j}\pow{j}.
\]
Proceeding similarly to \cref{eq:db-eq1}, we can decompose $\eta_{i,j}\pow{j}$ to get, 
\[
\eta_{i,j}^{(j)}\leq 2\gamma_{j,i,i}+\bar\theta_{i,j}.
\]
The terms in the right hand side of the previous equation is a constant. 
Following the arguments of \cref{eq:db-eq4}, we can now take an appropriate supremum on the left hand side to recover $\bar \eta_{i,j}$. 
Therefore, 
\begin{align*}
     \bar\eta_{i,j}\leq 2\sum_{p=i+j}^m\gamma_{p,j,i}+\bar\theta_{i,j}.\numberthis\label{eq:lem-dbeq1}   
\end{align*}
By substituting the upper bound into the expression of $\deltam$, it follows that, 
\[
    \deltam \leq \sup_{1\leq i\leq m}\lc 1+2\sum_{j>i}\sum_{p=i+j}^m\gamma_{p,j,i}+\sum_{j>i}\bar\theta_{i,j}\rc,
\]
From Assumptions \ref{assume:control-mixing} and \ref{assume:chain-mix} it now follows that,
\[
     \deltam \leq 1+\constant+\constant_{\theta}, 
\]
which completes the proof.
\end{proof}
{
\subsection{Proof of Lemma \ref{lemma:eta-bound}}
\begin{proof}\ 
The proof of this Lemma follows similarly to that of Lemma \ref{lemma:delta-bound}. We begin by observing from \cref{eq:lem-dbeq1} that
\begin{align*}
    \bar\eta_{i,j}\leq 2\sum_{p=i+j}^m\gamma_{p,j,i}+\bar\theta_{i,j}.
\end{align*}
The rest of the proof follows from a direct application of Assumptions \ref{assume:control-exp} and \ref{assume:chain-exp}.
\end{proof}
}
\subsection{Proof of Lemma \ref{lemma:return-time-markov}}  
\begin{proof}\ 
As before, let $\tau_{s,l}\pow{i,\star,j}$ as the time between the $j-1$-th and $j$-th visit to control $l$ after visiting state-control pair $s,l$ for the $i$-th time.
Our next step is to represent $\tau_{s,l}\pow{i}$ in terms of $\tau_{s,l}\pow{i,\star,j}$'s. Observe that
\begin{align*}
    \tau_{s,l}\pow{i+1}=\begin{cases}
   \tau_{s,l}\pow{i,\star,1}\  \text{ if } \lc X_{\sum_{p=1}^i \tau_{s,l}\pow{p}+ \tau_{s,l}\pow{i,\star,1}}=s\rc\\
    \tau_{s,l}\pow{i,\star,1}+\tau_{s,l}\pow{i,\star,2}\  \text{ if } \lc X_{\sum_{p=1}^i \tau_{s,l}\pow{p}+ \tau_{s,l}\pow{i,\star,1}}\neq s \text{ and } \ X_{\sum_{p=1}^i \tau_{s,l}\pow{p}+ \tau_{s,l}\pow{i,\star,1}+ \tau_{s,l}\pow{i,\star,2}}=s\rc\\
    \quad \vdots \qquad \qquad \qquad \vdots
    \end{cases}
\end{align*}
Informally, $\tau_{s,l}\pow{i+1}$ is: $\tau_{s,l}\pow{i,\star,1}$ if the state at the corresponding time is $s$; it is $ \tau_{s,l}\pow{i,\star,1}+\tau_{s,l}\pow{i,\star,2}$ if the state was not $s$ after $\tau_{s,l}\pow{i,\star,1}$ time points and $s$ after $\tau_{s,l}\pow{i,\star,1}+\tau_{s,l}\pow{i,\star,2}$ time points, and so on. In other words,
\begin{small}
\begin{align*}
    \tau_{s,l}\pow{i+1} & =  \tau_{s,l}\pow{i,\star,1}\indicator\lb X_{\sum_{p=1}^i \tau_{s,l}\pow{p}+ \tau_{s,l}\pow{i,\star,1}}=s\rb\\
    & \ + \lp\tau_{s,l}\pow{i,\star,1}+\tau_{s,l}\pow{i,\star,2}\rp\indicator\lb X_{\sum_{p=1}^i \tau_{s,l}\pow{p}+ \tau_{s,l}\pow{i,\star,1}}\neq s,\ X_{\sum_{p=1}^i \tau_{s,l}\pow{p}+ \tau_{s,l}\pow{i,\star,1}+ \tau_{s,l}\pow{i,\star,2}}=s\rb\\
    & \ + \dots
\intertext{which in turn implies}
    & \expec[\tau_{s,l}\pow{i+1}|\Fcal_{\sum_{p=1}^{i-1} \tau_{s,l}\pow{p}}] \\
    & \ =  \expec\lb\tau_{s,l}\pow{i,\star,1}\indicator\lb X_{\sum_{p=1}^i \tau_{s,l}\pow{p}+ \tau_{s,l}\pow{i,\star,1}}=s\rb|\Fcal_{\sum_{p=1}^{i-1} \tau_{s,l}\pow{p}}\rb\\
    & \quad + \expec\lb\lp\tau_{s,l}\pow{i,\star,1}+\tau_{s,l}\pow{i,\star,2}\rp\indicator\lb X_{\sum_{p=1}^i \tau_{s,l}\pow{p}+ \tau_{s,l}\pow{i,\star,1}}\neq s,\ X_{\sum_{p=1}^i \tau_{s,l}\pow{p}+ \tau_{s,l}\pow{i,\star,1}+ \tau_{s,l}\pow{i,\star,2}}=s\rb|\Fcal_{\sum_{p=1}^{i-1} \tau_{s,l}\pow{p}}\rb\\
    & \quad + \dots\\
    & = \text{Term 1}+\text{Term 2}+\dots\numberthis\label{eq:ret-t-markeq4}
\end{align*}
\end{small}
To compute an upper bound to $\expec[\tau_{s,l}\pow{i}]$, it is thus sufficient to individually find an upper bound to each term of the summation in the right-hand side of the previous equation by a careful bookkeeping of the conditional expectations.

\noindent\textbf{Term 1: }Applying the law of conditional expectation to the first term we get
\begin{align*}
    & \expec\lb\tau_{s,l}\pow{i,\star,1}\indicator\lb X_{\sum_{p=1}^i \tau_{s,l}\pow{p}+ \tau_{s,l}\pow{i,\star,1}}=s\rb|\Fcal_{\sum_{p=1}^{i-1} \tau_{s,l}\pow{p}}\rb \\
    &\  =  \expec\lb \expec\lb\tau_{s,l}\pow{i,\star,1}\indicator\lb X_{\sum_{p=1}^i \tau_{s,l}\pow{p}+ \tau_{s,l}\pow{i,\star,1}}=s\rb\gn\tau_{s,l}\pow{i,\star,1} \rb|\Fcal_{\sum_{p=1}^{i-1} \tau_{s,l}\pow{p}} \rb\\
    &\ = \expec\lb\tau_{s,l}\pow{i,\star,1}\prob\lp X_{\sum_{p=1}^i \tau_{s,l}\pow{p}+ \tau_{s,l}\pow{i,\star,1}}=s| \tau_{s,l}\pow{i,\star,1} \rp|\Fcal_{\sum_{p=1}^{i-1} \tau_{s,l}\pow{p}}\rb \numberthis\label{eq:ret-t-markeq2}
\end{align*}
Recall from \cref{eq:markov-eq1} our assumption that 
\[
 \max_{s,t,l} M_{s,t}^{(l)}=M_{max}, \text{ and } \min_{s,t,l} M_{s,t}^{(l)}=M_{min}
\]
for two numbers $0<M_{min},M_{max}<1$. It follows that for any time $p$, state $s$, and history $\history_0^{p-1}$,
\begin{align*}
    &M_{min}\leq \prob\lp X_p=s\gn \History_0^{p-1}=\history_0^{p-1}\rp\leq M_{max}, \, \, \text{and } & (E1)\\
    & \prob\lp X_p\neq s\gn \History_0^{p-1}=\history_0^{p-1}\rp\leq \max\lc  M_{max},1-M_{min}\rc=:M_{opt}. & (E2)
\end{align*}
It follows from $(E1)$ that, $\prob\lp X_{\sum_{p=1}^i \tau_{s,l}\pow{p}+ \tau_{s,l}\pow{i,\star,1}}=s| \tau_{s,l}\pow{i,\star,1} \rp\leq M_{max}$. Substituting this value in the right hand side of \cref{eq:ret-t-markeq2}, we get the following upper bound to Term 1
\[
 \expec\lb\tau_{s,l}\pow{i,\star,1}\indicator\lb X_{\sum_{p=1}^i \tau_{s,l}\pow{p}+ \tau_{s,l}\pow{i,\star,1}}=s\rb|\Fcal_{\sum_{p=1}^{i-1} \tau_{s,l}\pow{p}}\rb\leq \expec\lb \tau_{s,l}\pow{i,\star,1}M_{max}|\Fcal_{\sum_{p=1}^{i-1} \tau_{s,l}\pow{p}}\rb\leq T_\star M_{max},
\]
where the last inequality follows from tower property since \[\Fcal_{\sum_{p=1}^{i-1} \tau_{s,l}\pow{p}}\subseteq \Fcal_{\sum_{p=1}^{i-1} \tau_{s,l}\pow{p}+\sum_{p=1}^{j-1}\tau_{s,l}\pow{i,\star,p}}.\]
\textbf{Term 2: }For ease of notation, we define 
\[
\expec^*[\cdot]=\expec[\cdot|\Fcal_{\sum_{p=1}^{i-1} \tau_{s,l}\pow{p}}]
\]
and proceed similarly as before to get
\begin{small}
\begin{align*}
    &\expec^*\lb\lp\tau_{s,l}\pow{i,\star,1}\rp\indicator\lb X_{\sum_{p=1}^i \tau_{s,l}\pow{p}+ \tau_{s,l}\pow{i,\star,1}}\neq s,\ X_{\sum_{p=1}^i \tau_{s,l}\pow{p}+ \tau_{s,l}\pow{i,\star,1}+ \tau_{s,l}\pow{i,\star,2}}=s\rb\rb\\ 
    & = \expec^*\lb\lp\tau_{s,l}\pow{i,\star,1}\rp\prob\lp X_{\sum_{p=1}^i \tau_{s,l}\pow{p}+ \tau_{s,l}\pow{i,\star,1}}\neq s,\ X_{\sum_{p=1}^i \tau_{s,l}\pow{p}+ \tau_{s,l}\pow{i,\star,1}+ \tau_{s,l}\pow{i,\star,2}}=s|\tau_{s,l}\pow{i,\star,1},\tau_{s,l}\pow{i,\star,2}\rp\rb.\numberthis\label{eq:ret-t-markeq3}
\end{align*}
\end{small}
We decompose $\prob\lp X_{\sum_{p=1}^i \tau_{s,l}\pow{p}+ \tau_{s,l}\pow{i,\star,1}}\neq s,\ X_{\sum_{p=1}^i \tau_{s,l}\pow{p}+ \tau_{s,l}\pow{i,\star,1}+ \tau_{s,l}\pow{i,\star,2}}=s|\tau_{s,l}\pow{i,\star,1},\tau_{s,l}\pow{i,\star,2}\rp$ into 
\begin{align*}
& \prob\lp X_{\sum_{p=1}^i \tau_{s,l}\pow{p}+ \tau_{s,l}\pow{i,\star,1}+ \tau_{s,l}\pow{i,\star,2}}=s|\tau_{s,l}\pow{i,\star,1},\tau_{s,l}\pow{i,\star,2}, X_{\sum_{p=1}^i \tau_{s,l}\pow{p}+ \tau_{s,l}\pow{i,\star,1}}\neq s\rp\\
 & \qquad \qquad \times \prob\lp X_{\sum_{p=1}^i \tau_{s,l}\pow{p}+ \tau_{s,l}\pow{i,\star,1}}\neq s|\tau_{s,l}\pow{i,\star,1},\tau_{s,l}\pow{i,\star,2}\rp
\end{align*}
We use $(E1)$ to bound the first term from above and $(E2)$ to bound the second term from above in the previous equation. 
This gives us,
\begin{small}
\begin{align*}
    \prob\lp X_{\sum_{p=1}^i \tau_{s,l}\pow{p}+ \tau_{s,l}\pow{i,\star,1}}\neq s,\ X_{\sum_{p=1}^i \tau_{s,l}\pow{p}+ \tau_{s,l}\pow{i,\star,1}+ \tau_{s,l}\pow{i,\star,2}}=s|\tau_{s,l}\pow{i,\star,1},\tau_{s,l}\pow{i,\star,2}\rp & \leq   M_{max}M_{opt},
\end{align*}
\end{small}
Substituting this value in the right hand side of \cref{eq:ret-t-markeq3} we get
\begin{align*}
   \expec^*\lb\lp\tau_{s,l}\pow{i,\star,1}\rp\indicator\lb X_{\sum_{p=1}^i \tau_{s,l}\pow{p}+ \tau_{s,l}\pow{i,\star,1}}\neq s,\ X_{\sum_{p=1}^i \tau_{s,l}\pow{p}+ \tau_{s,l}\pow{i,\star,1}+ \tau_{s,l}\pow{i,\star,2}}=s\rb\rb & \leq \expec^*[\tau_{s,l}\pow{i,\star,1}] M_{max}M_{opt} \\
    & \leq  T_\star M_{max}M_{opt},
\end{align*}
where the last inequality follows from an application of tower property. We similarly have,
\[
   \expec^*\lb\lp\tau_{s,l}\pow{i,\star,2}\rp\indicator\lb X_{\sum_{p=1}^i \tau_{s,l}\pow{p}+ \tau_{s,l}\pow{i,\star,1}}\neq s,\ X_{\sum_{p=1}^i \tau_{s,l}\pow{p}+ \tau_{s,l}\pow{i,\star,1}+ \tau_{s,l}\pow{i,\star,2}}=s\rb\rb \leq  T_\star M_{max}M_{opt}.
\]
Thus $2T_\star M_{max}M_{opt}$ is an upper bound to Term 2. Proceeding similarly, we can find an upper bound to each term. Substituting these upper bounds back into \cref{eq:ret-t-markeq4} we get
\begin{align*}
    \expec\lb\tau_{s,l}\pow{i}|\Fcal_{\sum_{p=1}^{i-1} \tau_{s,l}\pow{p}}\rb & \leq  T_\star M_{max}+ 2T_\star M_{max}M_{opt}+\dots\\
    & = \sum_{j=1}^\infty jT_\star M_{max} M_{opt}^{j-1}\\
    & = \frac{T_\star M_{max}}{1-M_{opt}} \sum_{j=1}^\infty j(1-M_{opt})M_{opt}^{j-1}\\
    & = \frac{T_\star M_{max}}{M_{opt}(1-M_{opt})}.
\end{align*}
\end{proof}
\subsection{Proof of Lemma \ref{lemma:greedy-lem1}}

\begin{proof}\
That it is a controlled Markov chain is obvious. We only need to verify ergodicity of $\tilde\mc\pow{l}$, and stationarity of controls $a_i$. 
We first show that $\tilde\mc\pow{l}$ is irreducible. 
Since $M\pow{l}$ is irreducible and aperiodic, there exists an integer $n_{ap}$ such that $\lp M\pow{l}\rp^{n_{ap}}$ is a positive stochastic matrix. Observe that $J\times J=J$, $J\times M\pow{l}=J$. Then,
\[
\lp\mc\pow{l}\rp^2 = \mc\pow{l}\times\mc\pow{l}=\begin{bmatrix}
(1-\upsilon)^2 J & (1-\upsilon)\upsilon J\\
(1-\upsilon)\upsilon J & \upsilon^2 \lp M\pow{l}\rp^2
\end{bmatrix}.
\]
Similarly, it can be shown that $\lp\mc\pow{l}\rp^{n_{ap}}$ depends only upon $\upsilon,\lp M\pow{l}\rp^{n_{ap}}$ and $J$. Thus, $\lp\mc\pow{l}\rp^{n_{ap}}$ is positive and it is irreducible.
\[
\prob(X_2=1,\omega_2=0|X_1=1,\omega_1=0)=\frac{1-\upsilon}{d}\prob(\omega_2=\xi)>0
\]
Since $\prob(\omega_2=\xi)=\min\{\upsilon,1-\upsilon\}>0$, this proves that there exists a one-step path of positive probability from $(1,0)$ to $(1,0)$. 
Thus, state $(1,0)$ is aperiodic. 
Since the matrix is also irreducible, all the states are aperiodic, and the transition matrix is ergodic. 
Next, we verify the stationarity of the controls $a_i$. 
Let $\history_0^{i-1}\in(\chi\times \{0,1\}\times\Ibb)^i$ be any history and $(s_i,\xi_i)\in\chi\times\{0,1\}$. Then,

\begin{align*}
    \probl(\tilde a_i|X_i=s_i,\omega_i=\xi_i,\History_0^{i-1}=\history_0^{i-1}) & =\begin{cases}
    \probl(a_i|X_i=s_i,\omega_i=\xi_i,\History_0^{i-1}=\history_0^{i-1}) \text{ if } \xi_i=1\\
    \probl(D_i\pow{3}|X_i=s_i,\omega_i=\xi_i,\History_0^{i-1}=\history_0^{i-1}) \text{ if } \xi_i=0\\
    \end{cases}.
\end{align*}
However, if $\xi_i=1$, then $a_i$ is drawn uniformly over $\Ibb$ independent of the history. And if $\xi_i=0$, then we draw from $D_i\pow{3}$, which is distributed uniformly over $\Ibb$ independent of the history.
Thus the controls are trivially stationary. This completes the proof.
\end{proof}

\subsection{Proof of Lemma \ref{lemma:funcmix-coefbd}}~\label{sec:prf-fmxcfd}
\begin{proof}\ 
As before, for some $i,j$ let $T_\star\in\lc0,1\rc^{m-j}$ and $\history_\star\in\lc0,1\rc^{i+1}$, and denote by $\lp\indicator[X_i=s,a_i=l],\dots \indicator[X_0=s,a_0=l]\rp$ by $\indicator(\History_0^i)$.
Define $\Scal_{\history_\star}:=\{\history_0^i\in (\chi\times\Ibb)^{i+1}:\indicator(\history_0^i)=\history_\star\}$.
Next observe that,
\begin{align*}
    & \left|\prob\left(\indicator[X_m=s,a_m=l],\dots, \indicator[X_j=s,a_j=l])\in \mathbb{T}_\star|\indicator\lp\History_0^{i}\rp=\history_\star\right)\right.\\
    & \qquad \left.-\prob\lp\indicator[X_m=s,a_m=l],\dots, \indicator[X_j=s,a_j=l])\in \mathbb{T}_\star\rp\right|\\
    & \leq \sup_{\substack{\history_0^{i}\in \Scal_{\history_\star}}} \left|\prob\left(\indicator[X_m=s,a_m=l],\dots, \indicator[X_j=s,a_j=l])\in\right. \mathbb{T}_\star|\History_0^{i}=\history_0^{i}\right)\\
    & \quad \left.-\prob\left(\indicator[X_m=s,a_m=l],\dots, \indicator[X_j=s,a_j=l])\in \mathbb{T}_\star\right)\right|\\
    &\leq \sup_{\substack{\history_0^{i}\in (\chi\times\Ibb)^{i+1}}} \left|\prob\left(\indicator[X_m=s,a_m=l],\dots, \indicator[X_j=s,a_j=l])\in\right. \mathbb{T}_\star|\History_0^{i}=\history_0^{i}\right)\\
    & \quad \left.-\prob\left(\indicator[X_m=s,a_m=l],\dots, \indicator[X_j=s,a_j=l])\in \mathbb{T}_\star\right)\right|\\
    & \leq \sup_{\substack{\Tbb,\history_0^{i-1}}}\left|\prob\left((X_m,a_m,\dots, X_j,a_j)\in \mathbb{T}|\History_0^{i}=\history_0^{i}\right)-\prob\left(X_m,a_m,\dots,X_j,a_j\in\mathbb{T}\right)\right|,
\end{align*}
where the first inequality follows because there exists at least one $\history_0^{i}$ such that $\indicator(\history_0^{i})=\history_\star$, and the last inequality follows naturally by taking inversion of $\indicator(\cdot)$ and the appropriate supremum. 
Now taking an appropriate supremum over $\Tbb_\star,\history_\star$ we get,
\begin{align*}
    & \sup_{\substack{\Tbb_\star,\history_\star}}\left|\prob\left(\indicator[X_m=s,a_m=l],\dots, \indicator[X_j=s,a_j=l])\in \mathbb{T}_\star|\indicator\lp\History_0^{i}\rp=\history_\star\right)\right.\\
    & \qquad \left.-\prob\lp\indicator[X_m=s,a_m=l],\dots, \indicator[X_j=s,a_j=l])\in \mathbb{T}_\star\rp\right|\\
    & \leq \sup_{\substack{\Tbb,\history_0^{i-1}}}\left|\prob\left((X_m,a_m,\dots, X_j,a_j)\in \mathbb{T}|\History_0^{i}=\history_0^{i}\right)-\prob\left(X_m,a_m,\dots,X_j,a_j\in\mathbb{T}\right)\right|\\
    & =\phi_{i,j}.
\end{align*}
This proves our lemma.
\end{proof}
\subsection{Proof of Lemma \ref{lemma:tv-bound}}~\label{sec:prf-tvbd}
\begin{proof}\
Let $\Xcal$ and $\Ycal$ be two events.
Consider the following term,
\begin{small}
\begin{align*}
    & \lv\prob(X\in\Xcal,Y\in\Ycal|Z=z_1)-\prob(X\in\Xcal,Y\in\Ycal|Z=z_2)\rv\\
    & \ =\lv \sum_{y\in\Ycal}\lb \prob(X\in\Xcal,Y=y|Z=z_1)-\prob(X\in\Xcal,Y= y|Z=z_2)\rc \rv\\
    & \ =\lv \sum_{y\in\Ycal}\lb \prob(X\in\Xcal|Y= y,Z=z_1)\prob(Y=y|Z=z_1)-\prob(X\in\Xcal|Y=y,Z=z_2)\prob(Y=y|Z=z_2)\rc \rv
\end{align*}
\end{small}
Adding and subtracting $\prob(X\in\Xcal|Y=y,Z=z_1)\prob(Y=y|Z=z_2)$ inside the summation and applying triangle inequality we get,
\begin{small}
\begin{align*}
    & \lv\prob(X\in\Xcal,Y\in\Ycal|Z=z_1)-\prob(X\in\Xcal,Y\in\Ycal|Z=z_2)\rv\\
    & \leq \lv \sum_{y\in\Ycal}\lb \prob(X\in\Xcal|Y=y,Z=z_1)\prob(Y=y|Z=z_2)-\prob(X\in\Xcal|Y=y,Z=z_2)\prob(Y=y|Z=z_2)\rb \rv\\
    & \ + \left| \sum_{y\in\Ycal}\lb \prob(X\in\Xcal|Y=y,Z=z_1)\prob(Y=y|Z=z_2)-\prob(X\in\Xcal|Y=y,Z=z_1)\prob(Y=y|Z=z_1)\rb \right|\\
    & =\text{TERM 1+TERM 2}
\end{align*}
\end{small}
In the first term in the previous equation, $\prob(Y=y|Z=z_2)$ is common. So, the first term can be rewritten as,
\begin{small}
\begin{align*}
    \text{TERM 1} & = \lv \sum_{y\in\Ycal}\bigg(\lb \prob(X\in\Xcal|Y=y,Z=z_1)-\prob(X\in\Xcal|Y=y,Z=z_2)\rb\times \prob(Y=y|Z=z_2)\bigg) \rv
\end{align*}
\end{small}
We take the modulus inside the summation to get,
\begin{small}
\begin{align*}
 \text{TERM 1} & \leq \sum_{y\in\Ycal}\bigg(\lv \prob(X\in\Xcal|Y=y,Z=z_1)-\prob(X\in\Xcal|Y=y,Z=z_2)\rv \prob(Y=y|Z=z_2)\bigg) \\
    & \leq \sup_{y_0\in \Ycal} \lv \prob(X\in\Xcal|Y=y_0,Z=z_1)-\prob(X\in\Xcal|Y=y_0,Z=z_2)\rv \sum_{y\in\Ycal}\prob(Y=y|Z=z_2)\\
    & = \sup_{y_0\in \Ycal} \lv \prob(X\in\Xcal|Y=y_0,Z=z_1)-\prob(X\in\Xcal|Y=y_0,Z=z_2)\rv \prob(Y=\Ycal|Z=z_2)\\
    & \leq  \sup_{y_0\in \Ycal} \lv \prob(X\in\Xcal|Y=y_0,Z=z_1)-\prob(X\in\Xcal|Y=y_0,Z=z_2)\rv.
\end{align*}
\end{small}
We upper bound $\lv\prob(X\in\Xcal|Y=y_0,Z=z_1)-\prob(X\in\Xcal|Y=y_0,Z=z_2)\rv$ by its corresponding total variation distance as,
\[
\sup_{y_0\in \Ycal} \lV \probl(X|Y=y_0,Z=z_1)-\probl(X|Y=y_0,Z=z_2)\rV_{TV}.
\]
Since $\Ycal\subseteq\Omega$ , we can further upper bound the previous term by 
\begin{align*}
    \sup_{y_0\in \Omega} \lV \probl(X|Y=y_0,Z=z_1)-\probl(X|Y=y_0,Z=z_2)\rV_{TV}.
\end{align*}
This gives us the following upper bound to the first term,
\begin{align}\label{eq:tv-eq1}
    \text{TERM 1}\leq \sup_{y_0\in \Omega} \lV \probl(X|Y=y_0,Z=z_1)-\probl(X|Y=y_0,Z=z_2)\rV_{TV}.
\end{align}
Now focusing on the second term, let $\Ycal_1\subseteq\Ycal$ be the largest set such that the probability difference $\prob\lp Y=y|Z=z_1\rp-\prob\lp Y=y|Z=z_2\rp>0$ for any $y\in\Ycal_1$. Let $\Ycal_2$ be the set difference $\Ycal\backslash\Ycal_1$.
This yields us the following decomposition of the second term,
\begin{align*}
    \text{TERM 2} & = \left| \sum_{y\in\Ycal}\prob(X\in\Xcal|Y=y,Z=z_1)\lb \prob(Y=y|Z=z_1)-\prob(Y=y|Z=z_2)\rb \right| \\
    & =\Bigg| \sum_{y\in\Ycal_1}\prob(X\in\Xcal|Y=y,Z=z_1)\lb \prob(Y=y|Z=z_1)-\prob(Y=y|Z=z_2)\rb\\
    & \ + \sum_{y\in\Ycal_2}\prob(X\in\Xcal|Y=y,Z=z_1)\lb \prob(Y=y|Z=z_1)-\prob(Y=y|Z=z_2)\rb \Bigg|, 
\end{align*}
which can be further upper bounded by,
\begin{align*}
    & \Bigg|\lp\sup_{y_1\in\Ycal_1}\prob(X=\Xcal|Y=y_1,Z=z_1)\rp \sum_{y_1\in\Ycal_1} \lp\prob(Y=y_1|Z=z_1)-\prob(Y=y_1|Z=z_2)\rp\\
    & \ +  \lp\inf_{y_2\in\Ycal_2}\prob(X=\Xcal|Y=y_2,Z=z_1)\rp \sum_{y_2\in\Ycal_2}\lp\prob(Y=y_2|Z=z_1)-\prob(Y=y_2|Z=z_2)\rp\Bigg|\\
    & = \Bigg|\lp\sup_{y_1\in\Ycal_1}\prob(X=\Xcal|Y=y_1,Z=z_1)\rp  \lp\prob(Y=\Ycal_1|Z=z_1)-\prob(Y=\Ycal_1|Z=z_2)\rp\\
    & \ +  \lp\inf_{y_2\in\Ycal_2}\prob(X=\Xcal|Y=y_2,Z=z_1)\rp\lp\prob(Y=\Ycal_2|Z=z_1)-\prob(Y=\Ycal_2|Z=z_2)\rp\Bigg|.
     \intertext{Recall that if $\mathcal{P}_1$ and $\mathcal{P}_2$ are real numbers with different signs, $|\mathcal{P}_1+\mathcal{P}_2|\leq \max\{|\mathcal{P}_1|,|\mathcal{P}_2|\}$. Using this fact, we can get the following upper bound to the previous equation.}
    &\max\Bigg\{\lv\lp\sup_{y_1\in\Ycal_1}\prob(X\in\Xcal|Y=y_1,Z=z_1)\rp\lp\prob(Y\in\Ycal_1|Z=z_1)-\prob(Y\in\Ycal_1|Z=z_2)\rp\rv,\\
    & \ \qquad \lv\lp\inf_{y_2\in\Ycal_2}\prob(X=\Xcal|Y=y_2,Z=z_1)\rp\lp\prob(Y=\Ycal_2|Z=z_1)-\prob(Y=\Ycal_2|Z=z_2)\rp\rv\Bigg\}.
\end{align*}
We can upper bound the common probabilities in the previous terms by $1$. This gives us the following upper bound to the previous term.
\begin{small}
\begin{align*}
    \max\bigg\{\lv\prob(Y=\Ycal_1|Z=z_1)-\prob(Y=\Ycal_1|Z=z_2)\rv,\lv\prob(Y=\Ycal_2|Z=z_1)-\prob(Y=\Ycal_2|Z=z_2)\rv\bigg\},
\end{align*}
\end{small}
Since $\lV\probl(Y|Z=z_1)-\probl(Y|Z=z_2)\rV_{TV}$ is an upper bound to both of the terms inside the maximum, we find the following upper bound to the second term.
\begin{align}\label{eq:tv-eq2}
    \text{TERM 2}\leq \lV\probl(Y|Z=z_1)-\probl(Y|Z=z_2)\rV_{TV}.
\end{align}
Combining equations \ref{eq:tv-eq1} and \ref{eq:tv-eq2} we find
\begin{align*}
    |\prob(X\in\Xcal,Y\in\Ycal|Z=z_1) & -\prob(X\in\Xcal,Y\in\Ycal|Z=z_2)|  \leq \lV\probl(Y|Z=z_1)-\probl(Y|Z=z_2)\rV_{TV}\\
    & \ + \sup_{y_0\in \Omega} \lV \probl(X|Y=y_0,Z=z_1)-\probl(X|Y=y_0,Z=z_2)\rV_{TV}.
\end{align*}
Taking supremum over $\Xcal$ and $\Ycal$,
\begin{small}
\begin{align*}
\lV\probl(X,Y|Z=z_1) -\probl(X,Y|Z=z_2)\rV_{TV}  & \leq \lV\probl(Y|Z=z_1)-\probl(Y|Z=z_2)\rV_{TV}\\
    & \ + \sup_{y_0\in \Omega} \lV \probl(X|Y=y_0,Z=z_1)-\probl(X|Y=y_0,Z=z_2)\rV_{TV}.
\end{align*}
\end{small}
This completes the proof.
\end{proof}
\subsection{Proof of Lemma \ref{lemma:markov-assume4}}~\label{sec:prf-mkvass4}
\begin{proof}\ 
Define the paired process $Y_i:=(X_i,a_i)$ on the paired state space $\chi\times \Ibb$. It follows from the definition of $\bar \theta_{i,j}$ in \cref{eq:def-theta} that,
\begin{align*}
\bar\theta_{i,j} & = \sup_{s_1,s_2\in\chi,l_1,l_2\in \Ibb}\| \probl\lp X_j|X_i=s_1,a_i=l_1\rp-\probl\lp X_j|X_i=s_2,a_i=l_2\rp \|_{TV}\\
&\leq  \sup_{s_1,s_2\in\chi,l_1,l_2\in \Ibb}\| \probl\lp X_j,a_j|X_i=s_1,a_i=l_1\rp-\probl\lp X_j,a_j|X_i=s_2,a_i=l_2\rp \|_{TV}
\end{align*}
As seen in section \ref{sec:exam-Markov}, $(X_i,a_i)$ forms an inhomogenous Markov chain with the probability of transition from $(s,l)$ to $(t,l')$ at time point $i$ is $M_{s,t}^{(l)}\times P_{t,l'}^{(i)}$. 
It follows from \citet[Theorem 2]{hajnal1958weak} that
\begin{align*}
& \sup_{s_1,s_2\in\chi,l_1,l_2\in \Ibb}\| \probl\lp X_j,a_j|X_i=s_1,a_i=l_1\rp-\probl\lp X_j,a_j|X_i=s_2,a_i=l_2\rp \|_{TV}\\
& \qquad \leq \prod_{p=i}^{j-1}\lp 1-\min_{(s_1,l_1),(s_2,l_2)\in\chi\times\Ibb}\sum_{(t,l')\in \chi\times\Ibb} \min\lc \lp M_{s_1,t}^{(l_1)}\times P_{t,l'}\pow{i}\rp,\lp M_{s_2,t}^{(l_2)}\times P_{t,l'}\pow{i}\rp \rc\rp.\numberthis\label{eq:mark-asseq1}
\end{align*}
Recall that by hypothesis
\[
\min_{s\in\chi,l\in\Ibb} M_{s,t}\pow{l}>M_{min},
\]
for any $t\in\chi_0$.
This implies that for all $t\in\chi_0$,
\[
\min\lc \lp M_{s_1,t}^{(l_1)}\times P_{t,l'}\pow{i}\rp,\lp M_{s_2,t}^{(l_2)}\times P_{t,l'}\pow{i}\rp \rc\geq M_{min} P_{t,l'}\pow{i}.
\]
Decomposing the summation over $(t,l)\in\chi\times\Ibb$ in \cref{eq:mark-asseq1} into a summation over $(t,l)\in(\chi\backslash\chi_0)\times\Ibb$ and $(t,l)\in \chi_0\times\Ibb$ and substituting $ M_{min} P_{t,l'}\pow{i}$ as the appropriate lower bound we get,
\begin{align*}
    \sum_{(t,l')\in \chi\times\Ibb} & \min\lc \lp M_{s_1,t}^{(l_1)}\times P_{t,l'}\pow{i}\rp,\lp M_{s_2,t}^{(l_2)}\times P_{t,l'}\pow{i}\rp \rc \\
    & \geq \sum_{t\in\chi\backslash\chi_0} \sum_{l'\in \Ibb}\min\lc \lp M_{s_1,t}^{(l_1)}\times P_{t,l'}\pow{i}\rp,\lp M_{s_2,t}^{(l_2)}\times P_{t,l'}\pow{i}\rp \rc+\sum_{(t,l')\in\chi_0\times \Ibb}  M_{min} P_{t,l'}\pow{i}\\
    & \geq \sum_{t\in\chi_0}\sum_{l'\in \Ibb}  M_{min} P_{t,l'}\pow{i}\\
    & = \sum_{t\in\chi_0}M_{min}\sum_{l'\in \Ibb}P_{t,l'}\pow{i} \\
    & = |\chi_0|M_{min}.
\end{align*}
It follows that
\begin{align*}
    & \prod_{p=i}^{j-1}\lp 1-\min_{(s_1,l_1),(s_2,l_2)\in\chi\times\Ibb}\sum_{(t,l')\in \chi\times\Ibb} \min\lc \lp M_{s_1,t}^{(l_1)}\times P_{t,l'}\pow{i}\rp,\lp M_{s_2,t}^{(l_2)}\times P_{t,l'}\pow{i}\rp \rc\rp \\
    &\ \leq  \prod_{p=i}^{j-1}\lp 1-|\chi_0|M_{min}\rp\\
    &\ = \lp1-|\chi_0|M_{min}\rp^{j-i-1}.
\end{align*}
Thus we prove that,
\[
\bar\theta_{i,j} \leq \lp1-|\chi_0|M_{min}\rp^{j-i-1}.
\]
Therefore, 
\[
\sum_{j=i+1}^{\infty}\bar\theta_{i,j}\leq \frac{1}{|\chi_0|M_{min}}.
\]
% Since $|\chi_0|M_{min} <\sum_{t\in\chi_0} M_{s,t}\pow{l}\ $ for any $s$ and $l$, we observe that $|\chi_0|M_{min} <\sum_{t\in\chi_0} M_{s,t}\pow{l}\leq 1$. Therefore, $1-|\chi_0|M_{min}>0$. 
Choosing $\constant_\theta=1/(|\chi_0|M_{min})$ now completes the proof.
\end{proof}
\subsection{Proof of Proposition \ref{prop:mod-sampling-scheme}}~\label{prf:sampling-scheme}
\begin{proof}\ 
We prove this fact by induction. 
Obviously, $(X_0,a_0)\overset{d}{=}(\tilde X_0,\tilde{a_0})$.
Now, for some $i\geq 1$, let $\lp X_0,a_0,\dots,X_i,a_i\rp$ be identically distributed to $\lp \tilde X_0,\tilde a_0,\dots,\tilde X_i,\tilde a_i\rp$. Then, for $i+1$, we note that,
\begin{small}
\begin{align*}
    & \prob\lp \tilde X_{i+1}=s_{i+1},\tilde a_{i+1}=l_{i+1},\dots,\tilde X_0=s_0,\tilde a_0=l_0 \rp\\
    & \ =\prob\lp \tilde a_{i+1}=l_{i+1}| \tilde X_{i+1}=s_{i+1},\dots,\tilde X_0=s_0\rp\\ 
    &\quad \times\prob\lp \tilde X_{i+1}=s_{i+1}|\tilde X_i=s_i,\tilde a_{i}=l_i,\dots,\tilde a_0=l_0,\tilde X_0=s_0\rp\\
    &\quad \times \prob(\tilde X_i=s_i,\tilde a_i=l_i,\dots,\tilde X_0=s_0,\tilde a_0=l_0)\\
    & \ =\prob\lp  \alpha_i\pow{\tilde X_0,\tilde{a_0},\dots,\tilde X_{i+1}}=l_{i+1}| \tilde X_{i+1}=s_{i+1},\dots,\tilde X_0=s_0\rp\\
    &\quad \times \prob\lp X_{\Tilde{X_i},\Tilde{N}_{\Tilde{X_i}}^{(i,\tilde a_i)}+1}\pow{\tilde a_i}=s_{i+1}|\tilde X_i=s_i,\tilde a_{i}=l_i,\dots,\tilde a_0=l_0,\tilde X_0=s_0\rp\\
    &\quad \times \prob(\tilde X_i=s_i,\tilde a_i=l_i,\dots,\tilde X_0=s_0,\tilde a_0=l_0)\\
    & \ =\prob\lp  \alpha_i\pow{s_0,l_0,\dots,s_{i+1}}=l_{i+1}| \tilde X_{i+1}=s_{i+1},\dots,\tilde X_0=s_0\rp\\
    &\quad \times \prob\lp X_{s_i,\Tilde{N}_{s_i}^{(i,l_i)}+1}\pow{l_i}=s_{i+1}|\tilde X_i=s_i,\tilde a_{i}=l_i,\dots,\tilde a_0=l_0,\tilde X_0=s_0\rp\\
    &\quad \times \prob(\tilde X_i=s_i,\tilde a_i=l_i,\dots,\tilde X_0=s_0,\tilde a_0=l_0),
\end{align*}
\end{small}
where the equalities follow by substituting in the corresponding value of each quantity. Observe that under the given conditional, such that $\Tilde{N}_{s_i}^{(i,l_i)}+1$ is some fixed integer $n$. 
Then, the right-hand side of the previous equation can be further decomposed into,
\begin{small}
\begin{align*}
    & \ \prob\lp  \alpha_i\pow{s_0,l_0,\dots,s_{i+1}}=l_{i+1}| \tilde X_{i+1}=s_{i+1},\dots,\tilde X_0=s_0\rp\\
    &\quad \times \prob\lp X_{s_i,\Tilde{N}_{s_i}^{(i,l_i)}+1}\pow{l_i}=s_{i+1}|\tilde X_i=s_i,\tilde a_{i}=l_i,\dots,\tilde a_0=l_0,\tilde X_0=s_0\rp\\
    &\quad \times \prob(\tilde X_i=s_i,\tilde a_i=l_i,\dots,\tilde X_0=s_0,\tilde a_0=l_0)\\
    &  \ =\prob\lp \alpha_i\pow{s_0,a_0,\dots,s_{i+1}}=l_{i+1}| \tilde X_{i+1}=s_{i+1},\dots,\tilde X_0=s_0\rp\\
    &\quad \times \prob\lp X_{s_i,n}\pow{l_i}=s_{i+1}|\tilde X_i=s_i,\tilde a_{i}=l_i,\dots,\tilde a_0=l_0,\tilde X_0=s_0\rp\\
    &\quad \times \prob(\tilde X_i=s_i,\tilde a_i=l_i,\dots,\tilde X_0=s_0,\tilde a_0=l_0)\\
    &  \ =\prob\lp \alpha_i\pow{s_0,a_0,\dots,s_{i+1}}=l_{i+1}\rp\times \prob\lp X_{s_i,n}\pow{l_i}=s_{i+1}\rp\\
    &\quad \times \prob(\tilde X_i=s_i,\tilde a_i=l_i,\dots,\tilde X_0=s_0,\tilde a_0=l_0)\numberthis\\
    &\ = P_l\pow{s_i,l_{i-1},\dots,l_0,s_0} M_{s,t}\pow{l_i} \times \prob(\tilde X_i=s_i,\tilde a_i=l_i,\dots,\tilde X_0=s_0,\tilde a_0=l_0)\\
    &\ = P_l\pow{s_i,l_{i-1},\dots,l_0,s_0} M_{s,t}\pow{l_i} \times \prob(X_i=s_i, a_i=l_i,\dots,X_0=s_0, a_0=l_0),
\end{align*}
\end{small}
where the last equality follows by induction hypothesis. It follows easily from the last equality that 
\begin{align*}
   &\prob\lp \tilde X_{i+1}=s_{i+1},\tilde a_{i+1}=l_{i+1},\dots,\tilde X_0=s_0,\tilde a_0=l_0 \rp \\ 
   &\ = \prob\lp  X_{i+1}=s_{i+1}, a_{i+1}=l_{i+1},\dots, X_0=s_0, a_0=l_0 \rp. 
\end{align*}
This completes the proof.
\end{proof}

\subsection{Proof of Lemma \ref{lemma:cover-time}}~\label{sec:prf-cvrtm}
\begin{proof}\ 
We introduce the notation $\chi'$ to denote $\lc(1,1),\dots,(d/3,1),(2,1),\dots,(d/3,k) \rc$. Observe that $ \Tbb$ can be written as,
\begin{align*}
    \Tbb:=\sum_{\Upsilon=0}^{dk/3-1} U_\Upsilon\numberthis\label{eq:lem-covt1}
\end{align*}
where $U_\Upsilon$ is the time spent between the $\Upsilon$-th and the $\Upsilon+1$-th unique state-control pair visited in $\chi'$.
Next, we observe two facts. Firstly, observe that for any element $(t,l')$ belonging to $\chi'$ we have
\begin{align*}
    \prob\lp X_i=t,a_i=l'|X_{i-1}=s,a_{i-1}=l \rp= \prob\lp X_i=t,a_i=l' \rp=\frac{3\iota}{dk}
\end{align*}
independent of any $(s,l)\in\chi\times\Ibb$. 
Secondly, observe that the probability of visiting a new state-control pair in $\chi_\Ibb$ when $\Upsilon$ unique states have already been visited is ${3\iota\lp{dk}/{3}-\Upsilon\rp}/dk$.
Together, these facts imply that
\begin{align*}
    U_\Upsilon\overset{d}{=}X_\Upsilon \text{ where } X_\Upsilon\sim Geometric\lp\lp\frac{dk}{3}-\Upsilon\rp\frac{3\iota}{dk} \rp.\numberthis~\label{eq:lem-covt2}
\end{align*}
It follows from \cref{eq:lem-covt2} that, 
\begin{align*}
    \expec[\Tbb] & = \lp \frac{dk}{3\iota}\sum_{ \Upsilon=0}^{dk/3-1}\frac{1}{dk/3-\Upsilon}\rp\\
    \intertext{where we have dropped the superscript $l$ from $\Upsilon\pow{l}$ for convenience. Rewriting the previous equation we get, }
     \expec[\Tbb] & = \frac{dk}{3\iota}\sum_{ \Upsilon=1}^{dk/3}\frac{1}{\Upsilon}\\
    & > \frac{dk}{3\iota}\log\lp dk/3+1\rp~\numberthis\label{eq:lem-covt4}.
\end{align*}
where the last inequality follows from the Euler-Maclaurin (see for example, \citet{apostol1999elementary}) approximation of a sum by its integral.
We also observe that,
\begin{align*}
    \Var(U_\Upsilon)=\frac{d^2k^2}{9\iota^2}\lp\frac{dk}{3}-\Upsilon\rp^{-2}\lb1-\lp\frac{dk}{3}-\Upsilon\rp\frac{3\iota}{dk} \rb.
\end{align*}
The term inside the square brackets is a probability, and can be upper bounded by $1$. 
Observe that when $\Upsilon\leq dk/3-1$ we can upper bound $\Var(\Tbb)$ as
\begin{align*}
    \Var( \Tbb) & \leq \sum_{\Upsilon=0}^{dk/3-1} \frac{d^2k^2}{9\iota^2}\lp\frac{dk}{3}-\Upsilon\rp^{-2}\\
    & = \sum_{\Upsilon=1}^{dk/3} \frac{d^2k^2}{9\iota^2}\frac{1}{\Upsilon^2}\\
    & < \frac{d^2k^2}{9\iota^2}\frac{\pi^2}{6}\\
    & < \frac{d^2k^2}{9\iota^2}\frac{\pi^2}{4}.~\numberthis\label{eq:lem-covt3}
\end{align*}
where the second inequality follows from the fact that $\sum_{\Upsilon\geq 1}1/\Upsilon^2=\pi^2/6$. 
Using Cantelli's inequality \citep[Equation 5]{ghosh2002probability}, we obtain, for all $0<\theta<{\expec[ \Tbb]}/{\sqrt{\Var( \Tbb)}}$,
\[
\prob\lp \Tbb>  \frac{dk}{3\iota}\log\lp \frac{dk}{3}+1\rp -\theta \frac{dk}{3\iota}\frac{\pi}{2}\rp\geq \frac{\theta^2}{1+\theta^2}.
\]
From the equations \ref{eq:lem-covt4} and \ref{eq:lem-covt3}, we get that ${\expec[ \Tbb]}/\lp{\sqrt{\Var( \Tbb)}}\rp>\lp\log(dk/3)+1 \rp/\pi$. Substituting $\theta={(\log(dk/3)+1)/\pi}$ we get
\begin{align*}
    \prob\lp\Tbb>  \frac{dk}{6\iota}\lp\log\lp\frac{dk}{3}\rp+1\rp \rp\geq \frac{1}{1+\lp\frac{\pi}{\log(dk/3)+1}\rp^2}> \frac{1}{1+\pi^2}.
\end{align*}
This proves the lemma.

\end{proof}

\end{APPENDICES}
% \include{proof_sample_complexity}
% \include{proof_of_minimaxity}
% \include{proofs_of_application}
% \include{proof_of_lemmas}

% \bibliographystyle{informs2014} % outcomment this and next line in Case 1
% \bibliography{biblio}
% \begin{refsection} % Separate bibliography section
% \sloppy
    % \include{appendix_FINAL} % Include the actual file
\printbibliography % Print only relevant citations
% \fussy
% \end{refsection}

\end{document}